\newtheorem{assumption}{Assumption}
\title{Off-Policy Actor-Critic with Emphatic Weightings}
\author{\name Eric Graves \email graves@ualberta.ca \\
  \name Ehsan Imani \email imani@ualberta.ca \\
  \name Raksha Kumaraswamy \email kumarasw@ualberta.ca \\
  \name Martha White \email whitem@ualberta.ca \\
  \addr Reinforcement Learning and Artificial Intelligence Laboratory\\
  Department of Computing Science, University of Alberta\\
  Edmonton, Alberta, Canada T6G 2E8
}
\def\1{{\bf 1}}
\def\0{{\bf 0}}
\def\SS{{\cal S}}
\def\AA{{\cal A}}
\newcommand{\emweight}{m}
\newcommand{\emvec}{\mathbf{m}}
\newcommand{\ivec}{\mathbf{i}}
\newcommand{\gvec}{\mathbf{g}}
\newcommand{\gtvec}{\tilde{\mathbf{g}}}
\newcommand{\eye}{\mathbf{I}}
\newcommand{\vpivecdot}{\dot{\mathbf{v}}_\pi}
\newcommand{\vtpivecdot}{{\dot{\tilde{\mathbf{v}}}}_\pi}
\renewcommand{\vec}[1]{\boldsymbol{\mathbf{#1}}}
\newcommand{\intd}{\mathop{}\!\mathrm{d}}
\newcommand{\States}{\mathcal{S}}
\newcommand{\Actions}{\mathcal{A}}
\newcommand{\xvec}{\mathbf{x}}
\newcommand{\Pfcn}{\text{P}}
\newcommand{\Ppig}{\mathbf{P}_{\!\pi,\gamma}}
\newcommand{\dmu}{\mathbf{d}_{\mu}}
\newcommand{\dvec}{\mathbf{d}}
\newcommand{\interest}{i}
\newcommand{\inv}{{-1}}
\newcommand{\defeq}{\mathrel{\overset{\makebox[0pt]{\mbox{\normalfont\tiny\sffamily def}}}{=}}}
\newcommand{\E}{\mathbb{E}}
\newcommand{\RR}{\mathbb{R}}
\newcommand{\stepsize}{\alpha}
\newcommand{\pparams}{{\boldsymbol{\theta}}}
\newcommand{\psivec}{\boldsymbol{\psi}}
\newcommand{\pdim}{d}
\newcommand{\fparams}{{\boldsymbol{\phi}}}
\newcommand{\ppinf}{\pparams_\infty}
\newcommand{\vpi}{v_{\pi}}
\newcommand{\qpi}{q_{\pi}}
\newcommand{\vtpi}{\tilde{v}_{\pi}}
\newcommand{\qtpi}{\tilde{q}_{\pi}}
\newcommand{\zerovec}{\mathbf{0}}
\newcommand{\Gmat}{\mathbf{G}}
\newcommand{\Gtmat}{\mathbf{\tilde{G}}}
\newcommand{\lambdaa}{{\eta}}
\DeclareMathOperator{\diag}{diag}
\DeclareMathOperator{\alias}{alias}
\newlist{todolist}{itemize}{10}
\setlist[todolist]{label=$\square$}
\newcommand{\figwidthfour}{0.3\textwidth}
\begin{document}

\maketitle

\begin{abstract}%
A variety of theoretically-sound policy gradient algorithms exist for the on-policy setting due to the policy gradient theorem, which provides a simplified form for the gradient.
The off-policy setting, however, has been less clear due to the existence of multiple objectives and the lack of an explicit off-policy policy gradient theorem.
In this work, we unify these objectives into one off-policy objective, and provide a policy gradient theorem for this unified objective.
The derivation involves \emph{emphatic weightings} and \emph{interest functions}.
We show multiple strategies to approximate the gradients, in an algorithm called Actor Critic with Emphatic weightings (ACE).
We prove in a counterexample that previous (semi-gradient) off-policy actor-critic methods---particularly Off-Policy Actor-Critic (OffPAC) and Deterministic Policy Gradient (DPG)---converge to the wrong solution whereas ACE finds the optimal solution.
We also highlight why these semi-gradient approaches can still perform well in practice, suggesting strategies for variance reduction in ACE.
We empirically study several variants of ACE on two classic control environments and an image-based environment designed to illustrate the tradeoffs made by each gradient approximation.
We find that by approximating the emphatic weightings directly, ACE performs as well as or better than OffPAC in all settings tested.
\end{abstract}

\begin{keywords}
off-policy learning, policy gradient, actor-critic, reinforcement learning
\end{keywords}

\section{Introduction}

Policy gradient methods are a general class of algorithms for learning optimal policies for both the on and off-policy settings. In policy gradient methods, a parameterized policy is improved using gradient ascent \citep{williams1992simple}, with seminal work in actor-critic algorithms \citep{witten1977adaptive,barto1983neuronlike} and many techniques since proposed to reduce variance of the estimates of this gradient \citep{konda2000actor,weaver2001optimal,greensmith2004variance,peters2005natural,bhatnagar2007incremental,bhatnagar2009natural,grondman2012survey,gu2016q}. These algorithms rely on a fundamental theoretical result: the \emph{policy gradient theorem}. This theorem \citep{sutton2000policy,marbach2001simulation} simplifies estimation of the gradient, which would otherwise require difficult-to-estimate gradients with respect to the stationary distribution of the policy and potentially of the action-values. 

These gradients can be sampled \emph{on-policy} or \emph{off-policy}. On-policy methods are limited to learning about the agent's current policy: the policy must be executed to obtain a sample of the gradient. Conversely, in off-policy learning the agent can learn about many policies that are different from the policy being executed. Sampling these gradients is most straightforward in the on-policy setting, and so most policy gradient methods are on-policy.

Off-policy methods, however, are critical in an online setting, where an agent generates a single stream of interaction with its environment. Off-policy methods can learn from data generated by older versions of a policy, known as experience replay, a critical factor in the recent success of deep reinforcement learning \citep{lin1992self,mnih2015human,schaul2015prioritized}. They also enable learning from other forms of suboptimal data, including data generated by human demonstration, non-learning controllers, and even random behaviour. Off-policy methods also enable learning about the optimal policy while executing an exploratory policy \citep{watkins1992q}, thereby addressing the exploration-exploitation tradeoff. Finally, off-policy methods allow an agent to learn about many different policies at once, forming the basis for a predictive understanding of an agent's environment \citep{sutton2011horde,white2015developing} and enabling the learning of options \citep{sutton1999between,precup2000temporal,klissarov2021flexible}. With options, an agent can determine optimal (short) behaviours from its current state. 

Off-policy policy gradient methods have been developed, particularly in recent years where the need for data efficiency and decorrelated samples in deep reinforcement learning require the use of experience replay and so off-policy learning.
This work began with the Off-Policy Actor-Critic algorithm (OffPAC) \citep{degris2012model}, where an off-policy policy gradient theorem was provided that parallels the on-policy policy gradient theorem, but only for tabular policy representations.\footnote{See B. Errata in \cite{degris2012offpolicy} that the theorem only applies to tabular policy representations.} This motivated further development, including a recent actor-critic algorithm proven to converge when the critic uses linear function approximation \citep{maei2018convergent}, as well as several methods using the approximate off-policy gradient such as Deterministic Policy Gradient (DPG) \citep{silver2014deterministic,lillicrap2015continuous}, Actor-Critic with Experience Replay (ACER) \citep{wang2016sample}, and Interpolated Policy Gradient (IPG) \citep{gu2017interpolated}.
However, it remained an open question whether the foundational theorem that underlies these algorithms can be generalized beyond tabular representations.

This question was resolved with the development of an off-policy policy gradient theorem, for general policy parametrization \citep{imani2018off}.
The key insight is that the gradient can be simplified if the gradient in each state is weighted with an emphatic weighting. This result, combined with previous methods for incrementally estimating emphatic weightings \citep{yu2015onconvergence,sutton2016anemphatic}, allowed for the design of a new off-policy actor-critic algorithm, called Actor-Critic with Emphatic Weightings (ACE). Afterwards, an algorithm was proposed that directly estimates the emphatic weightings using a gradient temporal difference update, with an associated proof of convergence using the standard two-timescale analysis \citep{zhang2020provably}.

However, ACE and the underlying off-policy policy gradient theorem do not obviously resolve the dilemma of computing the off-policy gradient, because ACE---and previously OffPAC---were introduced under what is called the \emph{excursions} objective. This objective weights states by the visitation distribution of the behaviour policy. This is sensible in the parallel off-policy setting, where many policies are learned in parallel from one stream of experience. The agent reasons about the outcomes of those policies when taking excursions from its current behaviour. In the episodic setting, however, weighting the states by the visitation distribution of the behaviour policy is not appropriate. Instead, an ideal episodic objective should be weighted by the start-state distribution, and this objective should be optimized from off-policy data without changing this weighting. 

Potentially because of this mismatch, most recent methods have pursued strategies
that directly correct state-weightings,\footnote{A notable exception is an approach that uses a nonparametric Bellman equation---essentially using nonparametric estimates of the environment model \citep{tosatto2020nonparametric}. 
This approach allows direct estimation of the gradient by taking derivatives through this nonparametric Bellman equation.
} to obtain gradients of either the episodic objective \citep{liu2020offpolicy} or a finite-horizon objective \citep{kallus2020statistically}. This approach involves estimating the distribution of state visitation under the behaviour policy and the discounted state visitation under the target policy to obtain an importance sampling ratio to reweight the updates. One disadvantage to this approach is that the state visitation distribution must be estimated, and these estimates can be extremely biased---both by the initialization and by the limitations of the parametric function class used for the estimates.
  
In this work, we revisit estimating the off-policy gradient with emphatic weightings.
First, we propose a unifying objective for the episodic setting and the excursions objective. This objective facilitates the design of a single, unifying off-policy algorithm pertinent to both settings. Then, we extend the off-policy policy gradient theorem both to this objective and to objectives that incorporate entropy regularization. We further show that the on-policy episodic objective can also be seen as a special case, through the use of interest functions to specify interest in the set of start states. We then highlight the difference to previous off-policy approaches including OffPAC and DPG, which use a biased gradient that we call the semi-gradient because it omits a component of the gradient. We prove that in a simple three state counterexample, with two states aliased, the stationary point under semi-gradients is suboptimal, unlike ACE which converges to the optimal solution. 

Though such state-aliasing is not pathological, it does seem to contradict the success of methods built on OffPAC in the literature. We show that, under a condition called the \emph{strong growth condition} \citep{schmidt2013fast}, which is obtained if there is sufficient representational capacity, the semi-gradient solution is equivalent to that of ACE. Further, we highlight that even outside this setting, in some cases the semi-gradient can be seen as using the sound update underlying ACE, but with a different state weighting. These two insights help explain why OffPAC has been surprisingly effective and suggest promising directions for a lower-variance version of ACE. 

Finally, we discuss several improvements to the algorithm, including using direct estimates of the emphatic weightings to reduce variance, incorporating experience replay, and balancing between the higher-variance full gradient update and the lower-variance semi-gradient update.
We empirically investigate variants of ACE in several benchmark environments, and find that those with direct estimates of the emphatic weighting consistently perform as well as or better than OffPAC. 

\textbf{Remark on Contributions:}
This paper builds on our conference paper \citep{imani2018off}.
The conference paper presented a policy gradient theorem for one off-policy objective function, and a counterexample showing that OffPAC and DPG can converge to the wrong solution, with experiments limited to the simple three-state counterexample.

An important limitation of that work is that it only applies to the excursions objective, which does not obviously relate to the standard policy gradient objective. Further, the work primarily investigated the true gradient, and did not provide much insight into how to practically use the algorithm.
This paper builds on the conference paper in several substantive ways.
\begin{enumerate}[itemsep=-.5em]
    \item We first clarify the confusion surrounding objectives by providing a more general objective that includes both the standard objective and the excursions objective. We provide all the derivations for this more general objective (their Theorems 1 and 2 and Proposition 1, which are our Theorems 2 and 3 and Proposition 6). The result is nearly identical, but reiterated with the more general definition, and slightly different terminology.
    We also highlight that we can recover a sound on-policy algorithm using interest functions.
    \item We prove that the semi-gradient from OffPAC has suboptimal solutions on the counterexample. The conference paper only showed this empirically. To obtain this result, we needed to extend the derivation to include entropy regularization, so that the optimal policy is stochastic and its parameters are bounded. Then we proved that the stationary points for OffPAC on the counterexample do not include the optimal point.
    \item We provide insight into why semi-gradient methods can perform well, by showing that in some cases the weighting does not affect the solution and that locally we can characterize the semi-gradient as a gradient update with a different state weighting.
    \item We provide several algorithmic improvements to ACE, particularly by directly estimating the emphatic weightings to give a lower variance update. In our experiments, this algorithm outperforms or performs comparably to all others, including OffPAC.
    \item We conduct a more thorough empirical investigation of ACE in more settings, including two classic benchmarks and a new partially-observable image-based domain. The experiments are aimed at deeply investigating the role of different components, including (a) the objective used for learning versus evaluation, (b) the algorithm for estimating the emphatic weightings, and (c) the algorithm used for the critic.
\end{enumerate}

\section{Problem Formulation}
Throughout the paper, we use bolded symbols to represent vectors and matrices, and uppercase italicized symbols to represent random variables.
We consider a Markov decision process ($\SS$, $\AA$, P, $r$),
where $\SS$ denotes the set of states,
$\AA$ denotes the set of actions,
P $: \SS \times \AA \to \Delta(\SS)$ denotes the one-step state transition dynamics,
and $r : \SS \times \AA \times \SS \to \mathbb{R}$ denotes the transition-based reward function.
At each timestep $t=1,2,\ldots$, the agent selects an action $A_t$ according to its behaviour policy $\mu$, where $\mu : \SS \to \Delta(\AA)$.
The environment responds by transitioning into a new state $S_{t+1}$ according to P, and emits a scalar reward $R_{t+1}=r(S_t,A_t,S_{t+1})$.

The discounted sum of future rewards given actions are selected according to some target policy $\pi$ is called the return, and defined as:
\begin{align}
  G_t &\defeq R_{t+1} + \gamma_{t+1}R_{t+2} + \gamma_{t+1}\gamma_{t+2}R_{t+3} + \ldots
  \\
  &= R_{t+1} + \gamma_{t+1}G_{t+1} \nonumber
  .
\end{align}
We use transition-based discounting $\gamma : \SS \times \AA \times \SS \to [0,1]$, as it facilitates specifying different tasks (termination) on top of the same MDP \citep{white2017unifying}. This generality is useful in our setting, where we may be interested in learning multiple (option) policies, off-policy. The state value function for $\pi$ and $\gamma$ is defined as
\begin{equation}
  v_{\pi}(s) \defeq \E_{\pi}[G_{t}| S_t = s] \quad \forall s \in \SS,
\end{equation}
where under discrete states and actions this corresponds to
  \begin{equation*}
  v_{\pi}(s) = \sum_{a \in \AA} \pi(a|s) \sum_{s' \in \SS} \text{P}(s'|s,a)[r(s,a,s') + \gamma(s,a,s') v_{\pi}(s')] \quad \forall s \in \SS
\end{equation*}
and under continuous states and actions this corresponds to
 \begin{equation*}
 v_{\pi}(s) = \int_{\AA} \pi(a|s) \int_{\SS} \text{P}(s'|s,a)[r(s,a,s') + \gamma(s,a,s') v_{\pi}(s')] \intd a \intd s' \quad \forall s \in \SS
 .
\end{equation*}
When possible, we will opt for the more generic expectation notation, to address both the finite and continuous cases. 

In off-policy control, the agent's goal is to learn a target policy $\pi$ while following the behaviour policy $\mu$.
The target policy $\pi_{\boldsymbol{\theta}}$ is a differentiable function of a weight vector $\boldsymbol{\theta} \in \mathbb{R}^{n_{\boldsymbol{\theta}}}, n_{\boldsymbol{\theta}} \in \mathbb{N}$. The goal of the agent is to learn a policy that maximizes total reward. Depending on the setting, however, this goal is specified with different objectives, as we discuss in the next section. 

Throughout, we assume that the limiting distribution $d_\mu(s)$ under $\mu$ exists, where 
\begin{equation}
d_\mu(s) \defeq \lim_{t\to\infty}\text{P}(S_t=s|s_0,\mu)
\end{equation}
and P$(S_t=s|s_0,\mu)$ is the probability---or density---that $S_t=s$ when starting in state $s_0$ and executing $\mu$. Similarly, $d_\pi$ denotes the limiting distribution\footnote{This term is sometimes overloaded to mean the discounted state visitation starting from a single start state $s_0$: $\sum_{t=0}^\infty \gamma^t \text{P}(S_t = s | s_0, \pi)$. This overloading comes from the definition given in the original policy gradient theorem \citep{sutton2000policy}. We only use it to mean state visitation (limiting distribution).} of states under $\pi$.
Note that this limiting distribution exists in either the episodic or continuing settings \citep{white2017unifying,bojun2020steady}. One way to see this is that in the episodic setting under transition-based discounting, there are no explicit terminal states. Rather, the agent transitions between the state right before termination, back to a start state, as it would for a continuing problem. The transition-based discount truncates returns, to specify that the goal of the agent from each state is to maximize return for the episode.

\section{The Weighted-Excursions Objective for Off-Policy Control}\label{sec:obj_off_policy}


In this section we introduce the weighted-excursions objective that unifies two objectives commonly considered in off-policy control. The objective encompasses the standard on-policy episodic objective as well as the excursions objective that allows option policies to be learned off-policy with different termination conditions. Throughout, we will define both the finite state and continuous state versions. 

The standard episodic objective is
\begin{equation}\label{ep_obj}
J_0(\pparams) \defeq \sum_{s \in \States} d_0(s) v_{\pi_\pparams}(s) \hspace{1.0cm} \text{ or } \hspace{1.0cm} J_0(\pparams)  \defeq \int_{\States} d_0(s) v_{\pi_\pparams}(s) \intd s
\end{equation}
where $d_0 \in \Delta(\mathcal{S})$ is the start state distribution. This objective can be optimized in either the on-policy or off-policy settings, with different updates to account for the fact that data is gathered according to $\mu$ in the off-policy setting.

Another objective that has been considered in the off-policy setting is the \emph{excursions objective} \citep{degris2012offpolicy} 
\begin{equation}\label{exc_obj_old}
J_{\text{exc}}(\pparams) \defeq \sum_{s \in \States} d_\mu(s) v_{\pi_\pparams}(s),
\end{equation}
where the state weighting is determined by the behaviour policy's limiting state distribution $d_\mu$. This objective assumes that the target policy $\pi_{\pparams}$ will be executed as an excursion from the current behaviour---or that the agent will use the target policy in planning under the current state visitation. In the options framework \citep{sutton1999between}, for example, the agent learns the optimal policies for a variety of different options, each with different termination conditions. The behaviour policy itself $\mu$ could be learning in a continuing environment, even though the objective for the option policy $\pi_\theta$ is episodic, in that it has termination conditions. 

The excursions objective, however, is not necessarily appropriate for learning an alternative optimal policy. For example, consider an episodic setting where a reasonable behaviour policy is currently being used, and we would like to learn an improved policy. The improved policy will be executed from the set of start states, not from the visitation distribution under the behaviour policy. In this setting, we would like to optimize the standard episodic objective, using off-policy data generated by the current behaviour policy. 

These two settings can be unified by considering the \emph{weighted-excursions} objective
%
\begin{equation}\label{exc_obj}
J_\mu(\pparams) \defeq \sum_{s \in \States} d_\mu(s) i(s) v_{\pi_\pparams}(s)  \hspace{1.0cm} \text{ or } \hspace{1.0cm} J_\mu(\pparams)  \defeq \int_{\States} d_\mu(s) i(s) v_{\pi_\pparams}(s)  \intd s,
\end{equation}
where the weighting $i: \States \rightarrow [0,\infty)$ represents the interest in a state \citep{sutton2016anemphatic}. This objective reduces to the episodic (start-state) formulation by setting $i(s) = d_0(s)/ d_\mu(s)$. This choice is not just hypothetical; we will show how our algorithm designed for the excursions objective can also be used for the start-state formulation by using an easy-to-compute setting of the interest $i(s)$ for each $s \in \States$. This generalization goes beyond unifying the previous two objectives, and also naturally allows us to reweight states based on their importance for the target policy. For example, when learning an option, the interest function could be set to 1 in states from which the policy will be executed (i.e., the initiation set), and zero elsewhere.

More generally, we could consider the generic objective $\sum_{s \in \States} d(s) v_{\pi_\pparams}(s)$ for any state weighting $d$. Then the weighted-excursions objective is simply an instance of this more generic objective, with $d = d_\mu(s) i(s)$. The reason that we opt for the weighted-excursions objective is because it is sufficiently general to incorporate our two settings of interest but sufficiently restricted to facilitate development of a practical off-policy algorithm. In fact, the idea of interest comes from the same work on emphatic weightings \citep{sutton2016anemphatic} on which we build our algorithm. This work also provides insight into how to easily incorporate such interest functions in the off-policy setting.  

\textbf{Remark:} In this work, we focus on episodic objectives---those with termination---but it is worthwhile to contrast this to the objective used for continuing problems.
For continuing problems, there is a compelling case for the average reward objective
\begin{equation}\label{alt_obj}
J_{\text{ave}}(\pparams) \defeq \sum_{s \in \States} d_{\pi_\pparams}(s) r_{\pi_\pparams}(s) \hspace{1.0cm} \text{ or } \hspace{1.0cm} J_{\text{ave}}(\pparams) \defeq \int_{\States} d_{\pi_\pparams}(s) r_{\pi_\pparams}(s) \intd s
\end{equation}
for 
$r_\pi(s) = \mathbb{E}_\pi[r(s,A,S')]$. 
Optimizing $J_{\text{ave}}$ is equivalent to optimizing $\sum_{s \in \States} d_\pi(s) \vpi(s)$ for a constant $0 \le \gamma < 1$ \citep{sutton2018reinforcement}. Intuitively, the agent seeks to maximize state values while shifting its state distribution towards high-valued states.

A sharp distinction has been previously made between \textit{alternative life} objectives and excursions objectives in policy evaluation \citep{patterson2022ageneralized}. In that work, alternative life objectives use the state-weighting $d_\pi$ that specifies: learn the best value function as if data had been gathered under the target policy $\pi$. Such a weighting is obtained by using prior corrections---namely reweighting entire trajectories with products of importance sampling ratios \citep{precup2000eligibility,meuleau2000off}---or estimating $d_\pi/d_\mu$ \citep{hallak2017consistent,liu2018breaking,gelada2019off}. The excursions objective, on the other hand, uses a weighting of $d_\mu$. This distinction is not clearly relevant to control. Instead, typically \emph{on-policy} average reward algorithms are developed for the continuing objective---which has weighting $d_\pi$. The few off-policy algorithms designed for the continuing, average reward setting directly estimate $d_\pi$ and $d_\mu$ \citep{liu2019off}. The episodic objective for control does not include $d_\pi$, and so is different from the alternative life objective in policy evaluation.  

In the remainder of this paper we develop an algorithm for optimizing the weighted-excursions objective in Equation (\ref{exc_obj}) from off-policy data.

\section{An Off-Policy Policy Gradient Theorem using Emphatic Weightings}\label{sec_oppgt}

In this section, we introduce the off-policy policy gradient theorem for the discrete and continuous settings as well as for deterministic policies. We additionally compare this gradient to the gradient underlying OffPAC, and provide a generalized gradient that interpolates between this gradient---called the semi-gradient---and the true gradient.

\subsection{The Off-Policy Policy Gradient Theorem} 
The policy gradient theorem with function approximation in the on-policy setting was a seminal result \citep[Theorem 1]{sutton2000policy}.
The first attempt to extend the policy gradient theorem to the off-policy excursion objective was limited to the setting where the policy is tabular \citep[Theorem 2]{degris2012offpolicy}.\footnote{Note that the statement in the paper is stronger, but in an errata published by the authors, they highlight an error in the proof. Consequently, the result is only correct for the tabular setting.} 
In this section, we show that the policy gradient theorem does hold
in the off-policy setting for the weighted excursions objective, for arbitrary smooth policy parametrizations. The resulting gradient resembles the standard policy gradient, but with emphatic weightings to reweight the states. 
\begin{definition}[Emphatic Weighting]
The emphatic weighting $m : \mathcal{S} \to [0,\infty)$ for a behaviour policy $\mu$ and target policy $\pi$ can be recursively defined as 
  \begin{equation}
    m(s') \defeq d_\mu(s')i(s') + \mathbb{E}[\gamma(S,A,S') m(S) \mid S' = s'],
  \end{equation}
 where the distribution over $S, A$ from next state $S'$ is under $\Pfcn$ and $\pi$.
\end{definition}
Under finite states and actions, this expectation is $\sum_{s\in \States} \sum_{a \in \Actions} \pi(a|s) \Pfcn(s'|s,a) \gamma(s,a,s') m(s)$.
Under continuous states and actions, it is $\int_{\States} \int_{\Actions} \pi(a|s) \Pfcn(s'|s,a) \gamma(s,a,s') m(s)  \intd a \intd s$. If a deterministic policy $\pi: \States \rightarrow \Actions$ is used, then it is $\int_{\States} \Pfcn(s'|s,\pi(s)) \gamma(s,\pi(s),s') m(s) \intd s$.

 Notice that these emphatic weightings involve states leading into $s'$; bootstrapping is back-in-time, rather than forward. The emphatic weighting reflects the relative importance of a state $s'$, based on its own interest and weighting---the first term $d_\mu(s')i(s')$---as well as the discounted importance of states that lead into $s'$. For value estimation, this is sensible because it reflects how much the agent bootstraps off of the values in $s'$, and hence how much it relies on accurate estimates in $s'$. For policy gradients, we have a related role: the importance of a state is due both to its own value, as well as how much other states that lead into it depend on the value obtained from that state.   
 
 \newcommand{\grads}[1]{g(#1, \pparams)}
 
\begin{theorem}[Off-Policy Policy Gradient Theorem] \label{grad_theorem}
For finite states and actions,
\begin{equation}
\frac{\partial J_\mu(\pparams) }{\partial \pparams} 
= \sum_{s \in \States} \emweight(s) \sum_a \frac{\partial \pi(a|s; \pparams) }{\partial \pparams}  \qpi(s,a) = \sum_{s \in \States} \emweight(s) \grads{s} \label{eq_true_grad}
\end{equation}
where 
\begin{equation}
\grads{s} \defeq \mathbb{E}_{\pi_\pparams}\left[\frac{\partial \log \pi(A|S; \pparams) }{\partial \pparams}  \qpi(S,A) \ \Big| \ S= s\right] \label{eq_grads}
\end{equation}
is the gradient for a given state, rewritten using the log-likelihood ratio.
For continuous states, and either continuous or discrete actions,
\begin{equation}
\frac{\partial J_\mu(\pparams) }{\partial \pparams} 
= \int_{\SS} \emweight(s) \grads{s} \intd s \tag{\ref{eq_true_grad} for continuous states}
.
\end{equation}
\end{theorem}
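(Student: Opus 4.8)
The plan is to obtain the gradient by first differentiating the state-value function through its Bellman equation, which yields a linear \emph{back-in-time} recursion, and then to recognize that the row vector solving that recursion is exactly the emphatic weighting.

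First I would differentiate the Bellman equation for $v_{\pi_\pparams}(s)$ with respect to $\pparams$. Since the rewards do not depend on $\pparams$ and $\qpi(s,a) = \sum_{s'}\Pfcn(s'|s,a)[r(s,a,s') + \gamma(s,a,s')v_{\pi_\pparams}(s')]$, the product rule gives
\[
\frac{\partial v_{\pi_\pparams}(s)}{\partial \pparams} = \grads{s} + \sum_a \pi(a|s;\pparams)\sum_{s'}\Pfcn(s'|s,a)\gamma(s,a,s')\frac{\partial v_{\pi_\pparams}(s')}{\partial \pparams},
\]
where $\grads{s} = \sum_a \frac{\partial \pi(a|s;\pparams)}{\partial \pparams}\qpi(s,a)$; substituting $\frac{\partial \pi}{\partial \pparams} = \pi\,\frac{\partial \log \pi}{\partial \pparams}$ recovers the log-likelihood form in Equation~(\ref{eq_grads}). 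This is a fixed-point equation in the gradient vector. Stacking over states with $[\Ppig]_{s,s'} = \sum_a \pi(a|s;\pparams)\Pfcn(s'|s,a)\gamma(s,a,s')$ and collecting the $\grads{s}$ into $\gvec$, the recursion reads $\vpivecdot = \gvec + \Ppig\,\vpivecdot$ (coordinatewise in $\pparams$), so $\vpivecdot = (\eye - \Ppig)^\inv \gvec$ once $(\eye - \Ppig)$ is invertible.

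The final step is to peel off the state weighting and identify $\emvec$. Writing $\frac{\partial J_\mu(\pparams)}{\partial \pparams} = \sum_s d_\mu(s) i(s)\frac{\partial v_{\pi_\pparams}(s)}{\partial \pparams} = (\dmu \odot \ivec)^\top(\eye - \Ppig)^\inv \gvec$, I would define $\emvec^\top \defeq (\dmu \odot \ivec)^\top(\eye - \Ppig)^\inv$, which satisfies $\emvec^\top(\eye - \Ppig) = (\dmu \odot \ivec)^\top$, i.e. $\emvec^\top = (\dmu \odot \ivec)^\top + \emvec^\top \Ppig$. Reading this off componentwise at $s'$ gives precisely $\emweight(s') = d_\mu(s')i(s') + \sum_s \emweight(s)\sum_a \pi(a|s)\Pfcn(s'|s,a)\gamma(s,a,s')$, the recursive definition of the emphatic weighting. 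Hence $\frac{\partial J_\mu}{\partial \pparams} = \emvec^\top \gvec = \sum_s \emweight(s)\grads{s}$, as claimed.

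The main obstacle is rigour rather than algebra: I must justify that $(\eye - \Ppig)$ is invertible (equivalently, that the Neumann series $\sum_k \Ppig^k$ converges), which is where the episodic/transition-based-discounting assumptions enter to force the spectral radius of $\Ppig$ below one. For the continuous case, the matrix manipulations become the analogous fixed-point argument for an integral operator, and I would additionally need to justify exchanging the $\pparams$-derivative with the integrals over $\States$ and $\Actions$ (and with the limit defining $v_{\pi_\pparams}$) via dominated convergence together with smoothness of $\pi$ in $\pparams$. The discrete case then reduces to the elementary linear-algebra identity above.
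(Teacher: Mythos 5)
Your proposal is correct and follows essentially the same route as the paper's proof: differentiate the Bellman equation via the product rule to obtain the back-in-time recursion $\vpivecdot = \Gmat + \Ppig \vpivecdot$, solve it as $(\eye - \Ppig)^\inv \Gmat$, and identify $\ivec^\top(\eye-\Ppig)^\inv$ with the emphatic weighting (the paper takes this vector form as given, whereas you verify componentwise that it satisfies the recursive definition---a small but welcome addition). Your remarks on rigour also match the paper's treatment: the continuous case in Appendix \ref{app_thm_pg_complete} uses the inverse kernel of $\delta(s,s') - \Ppig(s,s')$ with Fubini's theorem, and invertibility is handled by assumption (Assumption \ref{assumption:inverse}).
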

\begin{proof}
We first start with the finite state setting. The proof relies on the vector form of the emphatic weighting
\begin{equation*}
\emvec^\top \defeq \ivec^\top (\eye - \Ppig)^{-1},
\end{equation*}
where the vector $\ivec \in \mathbb{R}^{|\SS|}$ has entries $\ivec(s) \defeq d_\mu(s) \interest(s)$ and $\Ppig \in \mathbb{R}^{|\SS| \times |\SS|}$ is the matrix with entries $\Ppig(s,s') \defeq \sum_{a \in \AA} \pi(a|s;\pparams) \Pfcn(s'|s,a) \gamma(s,a,s')$.
First notice that 
\begin{equation*}
\frac{\partial J_\mu(\pparams) }{\partial \pparams} = \frac{\partial \sum_{s \in \States} \ivec(s) \vpi(s) }{\partial \pparams} = \sum_{s \in \States} \ivec(s) \frac{\partial  \vpi(s) }{\partial \pparams}.
\end{equation*}
Therefore, to compute the gradient of $J_\mu$, we need to compute the gradient of the value function with respect to the policy parameters. 
A recursive form of the gradient of the value function can be derived, as we show below.  
Before starting, for simplicity of notation we will use 
\begin{align*}
 \gvec(s) = \sum_a \frac{\partial \pi(a|s; \pparams) }{\partial \pparams}  \qpi(s,a),
\end{align*}
where $\gvec: \States \rightarrow \RR^\pdim$. 
Now let us compute the gradient of the value function:
\begin{align}
\frac{\partial  \vpi(s) }{\partial \pparams} &= \frac{\partial }{\partial \pparams} \sum_a  \pi(a|s; \pparams)  \qpi(s,a) \nonumber \\
&=  \sum_a \frac{\partial \pi(a|s; \pparams) }{\partial \pparams}  \qpi(s,a) + \sum_a \pi(a|s; \pparams)  \frac{\partial \qpi(s,a) }{\partial \pparams} \label{eq_product_rule} \\
&=  \gvec(s) + \sum_a \pi(a|s; \pparams)  \frac{\partial \sum_{s'} \Pfcn(s'|s,a) (r(s,a,s') + \gamma(s,a,s') \vpi(s'))}{\partial \pparams} \nonumber \\
&=  \gvec(s) + \sum_a \pi(a|s; \pparams) \sum_{s'} \Pfcn(s'|s,a) \gamma(s,a,s') \frac{\partial  \vpi(s')}{\partial \pparams} \nonumber.
\end{align}
We can simplify this more easily using vector form. Let $\vpivecdot \in \RR^{|\States| \times \pdim}$ be the matrix of gradients (with respect to the policy parameters $\pparams$) of $\vpi$ for each state $s$, and $\Gmat \in \RR^{|\States| \times \pdim}$ the matrix where each row corresponding to state $s$ is the vector $\gvec(s)$. Then
\begin{align}
\vpivecdot  &=  \Gmat + \Ppig \vpivecdot \label{eq_bellman_gradient}
\ \ \ \ \implies \vpivecdot = (\eye - \Ppig)^\inv \Gmat .
\end{align}
Therefore, we obtain
\begin{align*}
\sum_{s \in \States} \ivec(s) \frac{\partial  \vpi(s) }{\partial \pparams} 
&= \ivec^\top \vpivecdot  \ \ =   \ivec^\top (\eye - \Ppig)^\inv \Gmat\\
&= \emvec^\top \Gmat\\
&= \sum_{s \in \States} \emweight(s) \sum_a \frac{\partial \pi(a|s; \pparams) }{\partial \pparams}  \qpi(s,a).
\end{align*}
The proof is similar for the continuous case; we include it in Appendix \ref{app_thm_pg_complete}.
\end{proof}
%
We additionally prove the deterministic policy gradient objective, for a deterministic policy, $\pi: \States \rightarrow\Actions$. The objective remains the same, but the space of possible policies is constrained, resulting in a slightly different gradient.
\begin{theorem}[Deterministic Off-policy Policy Gradient Theorem] \label{grad_theorem_deterministic}
  \begin{equation}
    \frac{\partial J_\mu(\vec\theta)}{\partial \vec\theta} = \int_{\States} m(s) \frac{\partial \pi(s;\vec\theta)}{\partial\vec\theta} \left. \frac{\partial q_{\pi}(s,a)}{\partial a}\right\rvert_{a=\pi(s;\vec\theta)} \intd s
  \end{equation}
  where 
  \begin{equation*}
    m(s') = d_\mu(s')i(s') + \int_{\States} \textup{\Pfcn}(s'|s,\pi(s;\vec\theta)) \gamma(s,\pi(s;\vec\theta),s') m(s) \intd s
    .
  \end{equation*}
\end{theorem}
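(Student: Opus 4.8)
The plan is to mirror the proof of Theorem~\ref{grad_theorem} for the stochastic case, substituting the deterministic-policy value recursion in place of the stochastic one. The structure is: first derive a recursive (Bellman-style) equation for $\frac{\partial v_\pi(s)}{\partial\pparams}$, then convert it to vector/operator form to solve it as a geometric series, and finally contract against the interest vector $\ivec$ to recognize the emphatic weighting $m$.

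Concretely, I would start from $v_\pi(s) = \int_{\States} \Pfcn(s'|s,\pi(s;\vec\theta))\,[r(s,\pi(s;\vec\theta),s') + \gamma(s,\pi(s;\vec\theta),s')\,v_\pi(s')]\intd s'$ and differentiate with respect to $\vec\theta$. The key difference from the stochastic case is that $\vec\theta$ now enters through the action argument $a=\pi(s;\vec\theta)$, so the chain rule produces a Jacobian factor $\frac{\partial \pi(s;\vec\theta)}{\partial\vec\theta}$ multiplying an action-derivative of whatever follows. Differentiating the immediate reward and the discounted-transition terms at fixed $s'$ yields the per-state gradient $\frac{\partial \pi(s;\vec\theta)}{\partial\vec\theta}\left.\frac{\partial q_\pi(s,a)}{\partial a}\right\rvert_{a=\pi(s;\vec\theta)}$ (collecting the reward, discount, and transition dependences on $a$ into the single action-value derivative), while differentiating $v_\pi(s')$ gives the recursive term $\int_{\States}\Pfcn(s'|s,\pi(s;\vec\theta))\gamma(s,\pi(s;\vec\theta),s')\frac{\partial v_\pi(s')}{\partial\pparams}\intd s'$. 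This is the deterministic analogue of Equation~(\ref{eq_bellman_gradient}), with the same transition-discount operator $\Ppig$ (now defined through the deterministic action) but a new source term $\Gmat(s) = \frac{\partial \pi(s;\vec\theta)}{\partial\vec\theta}\left.\frac{\partial q_\pi(s,a)}{\partial a}\right\rvert_{a=\pi(s;\vec\theta)}$.

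From here the argument is formally identical: write $\vpivecdot = \Gmat + \Ppig\vpivecdot$, invert to get $\vpivecdot = (\eye-\Ppig)^{-1}\Gmat$, and then compute $\frac{\partial J_\mu}{\partial\vec\theta} = \ivec^\top\vpivecdot = \ivec^\top(\eye-\Ppig)^{-1}\Gmat = \emvec^\top\Gmat$, which unpacks to the claimed integral once we note that $\emvec^\top = \ivec^\top(\eye-\Ppig)^{-1}$ solves exactly the fixed-point equation stated for the deterministic $m$. That fixed-point identification is what ties the operator inverse back to the stated recursion for $m(s')$.

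The main obstacle is making the operator manipulations rigorous in the continuous-state setting rather than the finite matrix setting where $(\eye-\Ppig)^{-1}$ is literally a matrix inverse. I would need to justify interchanging differentiation and integration (differentiating under the integral sign in the value recursion), verify that the operator $\Ppig$ has spectral radius below one so that $(\eye-\Ppig)^{-1}=\sum_{k\ge 0}\Ppig^{k}$ converges, and confirm the emphatic weighting is well defined as the corresponding left-eigen/fixed-point object; the transition-based discounting $\gamma<1$ along terminating trajectories is what supplies this contraction. These are the same technical points handled in Appendix~\ref{app_thm_pg_complete} for the stochastic continuous case, so I would either invoke those regularity conditions directly or re-derive the deterministic version in the appendix, since the only genuinely new element is the single chain-rule step through the action argument.
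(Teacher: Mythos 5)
Your proposal is correct and follows essentially the same route as the paper's Appendix~\ref{deterministic_pgt_appendix} proof: the chain-rule step through $a=\pi(s;\vec\theta)$ yields the same recursion, which the paper then solves with the continuous-state analogue of your matrix inversion (an inverse kernel $k(s,s')$ of $\delta(s,s')-\Ppig(s,s')$, applied via a delta-function integral transform), followed by the same Fubini swap and the same identification of $m$ as the solution of its fixed-point equation. The only minor difference is that the paper does not prove your proposed Neumann-series contraction; it simply assumes existence of the inverse kernel (Assumption~\ref{assumption:inverse}), which matches your fallback of invoking the appendix's regularity conditions directly.
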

The proof is presented in Appendix \ref{deterministic_pgt_appendix}.

\subsection{Connection to OffPAC and the Semi-gradient}

The off-policy policy gradient above contrasts with the one used by OffPAC
\begin{equation}
\sum_{s \in \States} d_\mu(s) \mathbb{E}_{\pi_\pparams}\left[\frac{\partial \log \pi(a|s; \pparams) }{\partial \pparams}  \qpi(S,A) \ \Big| \ S= s\right] \label{eq_semi_grad}
.
\end{equation}
The only difference is in the state weighting $d_\mu$. This small difference, however, is key for getting the correct gradient. In fact, the OffPAC gradient can be considered a \emph{semi-gradient}, because it drops a part of the gradient when using the product rule:
\begin{align*}
\nabla_\pparams J_{\text{exc}}(\pparams) 
&= \sum_{s \in \States} d_\mu(s) \nabla_\pparams \sum_a \pi_\pparams(a | s) \qpi(s,a) \\
&= \sum_{s \in \States} d_\mu(s) \sum_a \left[\qpi(s,a) \nabla_\pparams \pi_\pparams(a | s) + \pi_\pparams(a | s)  \nabla_\pparams \qpi(s,a) \right].
\end{align*}
The second term with $\nabla_\pparams \qpi(s,a)$ is not obviously easy to compute, because it requires reasoning about how changes to the policy parameters affect the action-values. OffPAC opted to drop this term, and justify why this approximation was acceptable. The resulting semi-gradient has proven to be surprisingly effective, though we provide several examples in this work where it performs poorly. 

Note that using this correct weighting is only critical due to state aliasing. If the policy representation is tabular, for example, the weighting on the gradients does not affect the asymptotic solution as long as it is non-zero.\footnote{Note that the weighting on the gradients could still affect the rate of convergence \citep{laroche2021dr}.} The gradient $\sum_{s \in \States} \emweight(s) g(\pparams, s) = 0$ if and only if $g(\pparams,s) = 0$, because the gradient vector has an independent entry for each state. Therefore, even if the state weighting in the gradient is changed to some other weighting $d$, then we still get $\sum_{s \in \States} d(s) g(\pparams, s) = 0$. This is the reason that the semi-gradient underlying OffPAC converges to a stationary point of this objective in the tabular setting. More generally, for any sufficiently powerful function approximator that can obtain $g(\pparams,s) = 0$ for all states, this result holds. In optimization, this property has been termed the \emph{strong growth condition} \citep{schmidt2013fast}. We state this simple result for the irrelevance of the weighting formally in the following proposition, both to highlight it and because to the best of our knowledge it has not been formally stated.
\begin{proposition}[Irrelevance of Weighting under the Strong Growth Condition] \label{thm_irrelevant}\ \\
If $g(\pparams,s) = 0$ for all states, then $\pparams$ is a stationary point under any state weighting $d: \States \rightarrow \mathbb{R}$.
\end{proposition}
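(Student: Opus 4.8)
The plan is to reduce the statement to an observation already embedded in the proof of Theorem~\ref{grad_theorem}: the gradient of the value function with respect to the policy parameters satisfies a Bellman-style fixed-point equation whose only driving term is the collection of per-state gradients $g(\pparams,s)$. Once the hypothesis eliminates that driving term, the entire value-gradient matrix collapses to zero, after which no choice of state weighting can produce a nonzero objective gradient.

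Concretely, I would first write the gradient of the generic objective for an arbitrary weighting $d$ as
\[
\frac{\partial}{\partial\pparams}\sum_{s\in\States} d(s)\,v_{\pi_\pparams}(s) = \sum_{s\in\States} d(s)\,\frac{\partial v_{\pi}(s)}{\partial\pparams} = d^\top \vpivecdot,
\]
where $\vpivecdot$ is the matrix of value-function gradients stacked over states, exactly as in the proof of Theorem~\ref{grad_theorem}. The crucial feature of this identity is that $d$ appears only as an outer linear factor; it never enters the internal recursion that determines $\vpivecdot$. Next I would invoke the fixed-point relation $\vpivecdot = \Gmat + \Ppig\vpivecdot$ established in that same proof, equivalently $\vpivecdot = (\eye-\Ppig)^\inv \Gmat$, where the rows of $\Gmat$ are the vectors $g(\pparams,s)$. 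The hypothesis that $g(\pparams,s)=0$ for all $s$ is precisely $\Gmat = \mathbf{0}$, hence $\vpivecdot = \mathbf{0}$: the value-function gradient vanishes at every state. Substituting into the display gives $d^\top\vpivecdot = \mathbf{0}$ for every $d$, which is exactly the assertion that $\pparams$ is a stationary point under any weighting. The continuous-state case is identical with sums replaced by integrals.

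There is essentially no hard step here — the result is a direct corollary of the value-gradient recursion. The only points worth care are (i) the argument should not route through the emphatic weighting $\emvec$, since $d$ is permitted to be signed whereas $\emvec$ was defined for nonnegative interest, so I would work directly with $\vpivecdot$; and (ii) if one prefers the recursion form $\vpivecdot=\Ppig\vpivecdot$ over the inverse form, concluding $\vpivecdot=\mathbf{0}$ relies on invertibility of $\eye-\Ppig$, which is already assumed in Theorem~\ref{grad_theorem}. A fully elementary alternative sidesteps even this: with $\Gmat=\mathbf{0}$, each $\partial v_\pi(s)/\partial\pparams$ equals a $\Ppig$-weighted combination of the others, and unrolling the recursion while using the discounting that makes $\Ppig$ subgenerative forces every entry to zero directly.
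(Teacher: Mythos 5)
Your proof is correct and takes essentially the same route as the paper, which states this proposition without a separate proof because it follows immediately from the machinery of Theorem \ref{grad_theorem}: the gradient of the weighted objective is $d^\top(\eye-\Ppig)^\inv\Gmat$, a weighted sum of the per-state gradients $g(\pparams,s)$, which vanishes when every $g(\pparams,s)=\mathbf{0}$. Your formulation via $\vpivecdot=(\eye-\Ppig)^\inv\Gmat=\mathbf{0}$ is the same computation with the factors associated the other way ($d^\top\vpivecdot$ rather than $\emvec_d^\top\Gmat$), and your side remarks (allowing signed $d$, relying on the invertibility already assumed in Theorem \ref{grad_theorem}) are sound.
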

This result states that with a function approximator that can perfectly maximize value in each state, the choice of state weighting in the gradient computation is not relevant. Both the off-policy gradient with emphatic weighting, and the semi-gradient with weighting $d_\mu$, can converge to this stationary point. 
 
Alternatively, we can consider the more generic emphatic weighting that lets us sweep between the gradient and the semi-gradient. The emphatic weightings were introduced for temporal difference (TD) learning with eligibility traces \citep{mahmood2015emphatic}. The weightings depend on the eligibility trace parameter, $\lambda$. For $\lambda = 1$, the weightings effectively become 1 everywhere---implicitly resulting in state weightings of $d_\mu$---because there is no bootstrapping. For $\lambda = 0$, the weightings are exactly as defined above. 

\begin{definition}[Generalized Emphatic Weighting]
For $\lambdaa \in [0,1]$, the generalized emphatic weighting $m_\lambdaa : \mathcal{S} \to [0,\infty)$ for a behaviour policy $\mu$ and target policy $\pi$ is defined as 
  \begin{equation}
    m_\lambdaa(s') \defeq (1-\lambdaa) d_\mu(s')i(s') + \lambdaa m(s')
  \end{equation}
For $\lambdaa = 1$, $m_\lambdaa(s') = m(s')$. 
\end{definition}
Note that in the original work, $m(s')$ was called the follow-on weighting, and $m_\lambdaa(s')$ the emphatic weighting. Because $m_\lambdaa$ is a strict generalization of $m$, we simply call $m$ an emphatic weighting as well.\footnote{Note that the original emphatic weightings \citep{sutton2016anemphatic} use $\lambda = 1-\lambdaa$. This is because their emphatic weightings are designed to balance bias introduced from using $\lambda$ for estimating value functions: larger $\lambda$ means the emphatic weighting plays less of a role. For this setting, we want larger $\lambdaa$ to correspond to the full emphatic weighting (the unbiased emphatic weighting), and smaller $\lambdaa$ to correspond to a more biased estimate, to better match the typical meaning of such trace parameters.}  

The parameter $\lambdaa$ can be interpreted as trading off bias and variance in the emphatic weightings. Notice that for $\lambdaa = 0$, and assuming an interest of 1 everywhere, we get that $m_\lambdaa(s) = d_\mu(s)$. Such a choice would give exactly the semi-gradient weighting. As we increase $\lambdaa$, we interpolate between the semi-gradient and the full gradient. For $\lambdaa = 1$, we get $m_\lambdaa(s) = m(s)$ and so we have the full gradient. 
For $\lambdaa = 0$, therefore, we obtain a biased gradient estimate, but the emphatic weightings themselves are easy to estimate---they are myopic estimates of interest---which could significantly reduce variance when estimating the gradient. 
Selecting $\lambdaa$ between 0 and 1 could provide a reasonable balance, obtaining a nearly unbiased gradient to enable convergence to a valid stationary point but potentially reducing some variability when estimating the emphatic weighting. 
We will develop our algorithm for the general emphatic weighting, as this will give us more flexibility in the weighting, and will also allow us to incorporate OffPAC as one extreme (biased) version of the approach. 

\subsection{Summary}

In this section we proved the off-policy policy gradient theorem for the weighted-excursions objective, for both stochastic (Theorem \ref{grad_theorem}) and deterministic (Theorem \ref{grad_theorem_deterministic}) policies. These gradients are similar to the standard policy gradient, but with states weighted by emphatic weightings. We contrasted this to OffPAC, the (unsound) algorithm on which many off-policy policy gradient algorithms are built, highlighting that the only distinction is in this state weighting. We argued that this state weighting does not always affect the solution (Proposition 4), potentially partially explaining why OffPAC often performs well---especially in deep RL with large neural networks. Finally, we show how we can use generalized emphatic weightings, with a parameter $\eta$ (Definition 5), that allows us to interpolate between the sound, but harder-to-estimate emphatic weighting (at $\eta = 1$) and the unsound, but easy-to-obtain weighting used in OffPAC (at $\eta = 0$).  

\section{Actor-Critic with Emphatic Weightings}\label{sec_eac}

In this section, we develop an incremental actor-critic algorithm with emphatic weightings that uses the above off-policy policy gradient theorem.
To perform a gradient ascent update on the policy parameters, the goal is to obtain a sample of the gradient in Equation \ref{eq_true_grad}. As discussed above, the only difference to the semi-gradient used by OffPAC, in Equation \ref{eq_semi_grad}, is in the weighting of the states: the true gradient uses $m(s)$ and the semi-gradient uses $d_\mu(s)$. 
Therefore, we can use standard solutions developed for other actor-critic algorithms to obtain a sample of $\grads{s}$. 
The key difficulty is in estimating $\emweight(s)$ to reweight this gradient.

\subsection{Sampling the Gradient from a State}

Before addressing this key difficulty, we provide a brief reminder on how to obtain a sample of $\grads{s}$, with more details in Appendix \ref{app_sampling}. 
The simplest approach to compute the gradient for a given state is to use what is sometimes called the \emph{all-actions} gradient
$\sum_a \frac{\partial \pi(a|s; \pparams) }{\partial \pparams}  \qpi(s,a)$.  To avoid summing over all actions, we instead obtain an unbiased sample of the gradient under the action taken by the behaviour policy $A \sim \mu(s, \cdot)$: $\rho(s,A; \pparams) \frac{\partial \log \pi(a|s; \pparams) }{\partial \pparams}  \qpi(s,A)$ where the importance sampling ratio $\rho(s,a; \pparams) \defeq \frac{\pi(a|s;\pparams)}{\mu(a|s)}$ corrects the distribution over actions.  
The estimated gradient has more variance when we only use a sampled action rather than all actions. The standard strategy to reduce this variance is to subtract a
baseline, such as an estimate of the value function $v(s)$. The resulting update is
\begin{align*}
\rho(s,a; \pparams) \frac{\partial \log \pi(a|s; \pparams) }{\partial \pparams}  [\qpi(s,a) - v(s)]
 .
\end{align*}
 
In practice, it is difficult to obtain $\qpi(s,a)$ to compute this gradient exactly. Instead, we obtain (a likely biased) sample of the advantage, $\delta_t \approx \qpi(S_t,A_t) - v(s_t)$. An unbiased but high variance estimate uses a sample of the return $G_t$ from $(s_t,a_t)$, and uses $\delta_t = G_t - v(s_t)$. Then $\mathbb{E}[\delta_t | S_t = s, A_t = a] = q_\pi(s,a) - v(s)$. On the other extreme, we can directly approximate the action-values, $\hat q(s,a)$. In-between, we can use $n$-step returns, or $\lambda$-returns, to balance between using observed rewards and value estimates. In this work, we opt for the simplest choice: $n=1$ with $\delta_t = R_{t+1} + \gamma_{t+1} v(S_{t+1}) - v(S_t)$. 
 
 Finally, we need to estimate the values themselves. Any value approximation algorithm can be used to estimate $v$, such as TD, gradient TD or even emphatic TD. Given we already compute the emphasis weighting, it is straightforward to use emphatic TD. We investigate both a gradient TD critic and an emphatic TD critic in the experiments. 
  
\subsection{Estimating the Gradient with Emphatic Weightings}\label{sec_estimating}


The previous section outlined how to sample the gradient for a given state. We now need to ensure that these updates across states are weighted by the emphatic weighting. Intuitively, if we could estimate $m_\lambdaa(s)$ and $d_\mu(s)$ for all $s$, then we could simply premultiply the update with $m_\lambdaa(s)/d_\mu(s)$. However, estimating the state distribution is nontrivial, and actually unnecessary. In this section, we first explain how to obtain a Monte Carlo estimate of this pre-multiplier, and then how to directly estimate it with function approximation.


 
 \subsubsection{An Unbiased Estimate of the Gradient with Emphatic Weightings}
We can rely on the original work introducing the emphatic weightings to obtain a Monte Carlo estimate with the update
  \begin{align}\label{def_mt_and_ft}
M_t &= (1-\lambdaa) \interest_t + \lambdaa F_t && \ F_t = \gamma_t \rho_{t-1} F_{t-1} + \interest_t
\end{align} 
where $F_{0}=0$ and $\interest_t$ is the interest observed in $S_t$, with $i(s) = \mathbb{E}[\interest_t | S_t = s]$. 
Putting this together with the previous section, the gradient can be sampled by (1) sampling states according to $d_\mu$, (2) taking actions according to $\mu$, (3) obtaining an estimate $\delta_t$ of the advantage $q_\pi(s,a) - v(s)$, and then (4) weighting the update with $M_t$, to get 
\begin{equation*}
\pparams \gets \pparams + \stepsize \rho_t M_t \frac{\partial \log \pi(A_{t}|S_{t}; \pparams) }{\partial \pparams} \delta_{t}
.
\end{equation*}
Note that the all-actions gradient update with this emphatic weighting would be 
\begin{equation*}
\pparams \gets \pparams + \stepsize M_t \sum_{b \in \Actions} \pi(b|S_{t}; \pparams) \frac{\partial \log \pi(b|S_{t}; \pparams) }{\partial \pparams}  \qpi(S_{t},b),
\end{equation*}
 but as before we will assume that we use one sampled action. 
We prove that our update is an unbiased estimate of the gradient for a fixed policy in Proposition \ref{prop_unbiased}. We assume access to an unbiased estimate $\delta_t$ of the advantage for this result, though in practice  $\delta_t$ is likely to have some bias due to using approximate values. 
\begin{proposition}\label{prop_unbiased}
For a fixed policy $\pi$, with the conditions on the MDP from \citep{yu2015onconvergence} and assuming unbiased $\delta_t$, namely that $\mathbb{E}[\delta_t | S_t = s, A_t = a] = q_\pi(s,a) - v(s)$, 
\begin{equation*}
\E_{S_t \sim d_\mu,A_t \sim \mu}[\rho_t M_t \delta_t \nabla_\pparams  \log \pi(A_t|S_t; \pparams)] = \sum_{s \in \States} \emweight_\eta(s) \sum_a \frac{\partial \pi(a|s; \pparams) }{\partial \pparams}  \qpi(s,a)
.
\end{equation*}
\end{proposition}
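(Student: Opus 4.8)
The plan is to reduce the left-hand side to a state-weighted sum of the per-state gradients $g(s)$ from Equation (\ref{eq_grads}), and then to identify the resulting state weights with $m_\eta$. The single nontrivial ingredient, which I would import from the emphatic-TD analysis under the stated MDP conditions \citep{yu2015onconvergence}, is that the follow-on trace satisfies $d_\mu(s)\lim_{t\to\infty}\E[F_t \mid S_t = s] = m(s)$, where $m$ is the emphatic weighting of Definition 1 (equivalently, the scaled stationary follow-on $\mathbf{f}$ with entries $d_\mu(s)\E[F_t\mid S_t=s]$ solves the same fixed point $\emvec^\top=\ivec^\top(\eye-\Ppig)^{-1}$). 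Combining this with $\E[\interest_t\mid S_t=s]=\interest(s)$ and the linearity of $M_t=(1-\eta)\interest_t+\eta F_t$ yields the key identity
\begin{equation*}
d_\mu(s)\,\lim_{t\to\infty}\E[M_t\mid S_t=s] = (1-\eta)\,d_\mu(s)\,\interest(s) + \eta\, m(s) = m_\eta(s),
\end{equation*}
which is exactly the Generalized Emphatic Weighting of Definition 5.

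First I would condition the outer expectation on the current state, writing the left-hand side as $\sum_{s}\Pr(S_t=s)\,\E[\,\rho_t M_t \delta_t \nabla_\pparams\log\pi(A_t|S_t)\mid S_t=s]$, and then exploit the Markov property. Given $S_t=s$, the trace $M_t$ is measurable with respect to the history $S_0,\dots,S_{t-1},A_0,\dots,A_{t-1}$ (together with $\interest_t$, whose conditional mean is $\interest(s)$ and whose noise is independent of the future), while $A_t$, $\rho_t$, and $\delta_t$ are determined by the present action and the subsequent transition. Hence $M_t$ is conditionally independent of $(A_t,\delta_t)$ given $S_t=s$, and the conditional expectation factors as $\E[M_t\mid S_t=s]\cdot\E[\rho_t \delta_t \nabla_\pparams\log\pi(A_t|s)\mid S_t=s]$.

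Next I would evaluate the inner action expectation. Taking $A_t\sim\mu(\cdot\mid s)$, the importance ratio $\rho_t$ cancels $\mu(a|s)$, the unbiasedness assumption replaces $\delta_t$ by $q_\pi(s,a)-v(s)$, and $\nabla_\pparams\log\pi(a|s)=\nabla_\pparams\pi(a|s)/\pi(a|s)$ cancels the remaining $\pi(a|s)$, leaving $\sum_a \nabla_\pparams\pi(a|s)\,[q_\pi(s,a)-v(s)]$. The baseline term vanishes because $\sum_a \nabla_\pparams\pi(a|s)=\nabla_\pparams\sum_a\pi(a|s)=\mathbf{0}$, so the inner expectation equals $g(s)=\sum_a \nabla_\pparams\pi(a|s)\,\qpi(s,a)$, which carries no $t$-dependence. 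Assembling the pieces, the left-hand side equals $\sum_{s}\Pr(S_t=s)\,\E[M_t\mid S_t=s]\,g(s)$, and passing to the limit (equivalently, starting from stationarity so that $\Pr(S_t=s)=d_\mu(s)$) together with the key identity turns this into $\sum_{s} m_\eta(s)\,g(s) = \sum_s m_\eta(s)\sum_a \frac{\partial\pi(a|s;\pparams)}{\partial\pparams}\qpi(s,a)$, as required.

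The main obstacle is the first ingredient: rigorously establishing $d_\mu(s)\lim_t\E[F_t\mid S_t=s]=m(s)$, i.e. that the expected follow-on trace reproduces the emphatic weighting fixed point. This is precisely where the MDP conditions of \citet{yu2015onconvergence} enter, guaranteeing that the relevant ergodic limits exist and that the stationary follow-on expectation solves the $\emvec$ system. A secondary point requiring care is the interchange of the limit with the state-conditioning and factorization; I would handle this by observing that the product factorization holds for every fixed $t$, with only the trace factor $\E[M_t\mid S_t=s]$ carrying $t$-dependence, so the limit acts solely on that factor.
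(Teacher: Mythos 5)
Your proposal is correct and follows essentially the same route as the paper's proof: the same key imported fact that the emphatic trace is unbiased in the limit, $d_\mu(s)\lim_{t\to\infty}\E_\mu[M_t\mid S_t=s]=m_\eta(s)$ (you derive it via $F_t$ and linearity of $M_t$, the paper cites it directly from \citet{sutton2016anemphatic}), the same conditional-independence factorization of $M_t$ from $(A_t,\delta_t)$ given $S_t=s$ (the paper's step (b), done via the law of total expectation), and the same evaluation of the inner action expectation in which $\rho_t$ cancels $\mu$, the unbiasedness assumption yields the advantage, and the baseline vanishes since $\sum_a \nabla_\pparams \pi(a|s;\pparams)=\mathbf{0}$. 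Your added care about measurability of $M_t$ with respect to the history and about where the $t\to\infty$ limit acts is a slightly more explicit rendering of the same argument, not a different one.
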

\begin{proof}
%
Emphatic traces $M_t$ have been shown to provide an unbiased estimate of the true emphatic weighting for policy evaluation in Emphatic TD. We use the emphatic weighting differently, but 
can rely on the proof from \citep{sutton2016anemphatic} to ensure that (a) $d_\mu(s) \lim_{t \to \infty} \E_\mu[M_t | S_t = s] = \emweight_\eta(s)$. Note also that, given $S_t$, the next action does not impact the expectation: (b) $\E_\mu[M_t | S_t = s] = \E_\mu[M_t | S_t = s, A_t]$.
Using these equalities, we obtain
\begin{align*}
&\E_{S_t \sim d_\mu,A_t \sim \mu}[\rho_t M_t \delta_t \nabla_\pparams  \log \pi(A_t|S_t; \pparams)]\\
&= \sum_{s \in \States} d_\mu(s) \E_\mu[\rho_t M_t \delta_t \nabla_\pparams  \log \pi(A_t|S_t; \pparams) | S_t = s]\\
&= \sum_{s \in \States} d_\mu(s) \E_\mu \Big[\E_\mu[\rho_t M_t \delta_t \nabla_\pparams  \log \pi(A_t|S_t; \pparams) | S_t = s, A_t ]\Big] \hspace{.5cm} \triangleright \text{law of total expectation}\\
&= \sum_{s \in \States} d_\mu(s) \E_\mu \Big[\E_\mu[M_t | S_t = s, A_t ] \ \ \E_\mu[\rho_t \delta_t \nabla_\pparams  \log \pi(A_t|S_t; \pparams) | S_t = s, A_t]\Big]\\
&= \sum_{s \in \States} d_\mu(s) \E_\mu[M_t | S_t = s] \E_\mu \Big[\E_\mu[\rho_t \delta_t \nabla_\pparams  \log \pi(A_t|S_t; \pparams) | S_t = s, A_t]\Big] \hspace{.8cm} \triangleright \text{ using (b)}\\
&= \sum_{s \in \States} \emweight_\eta(s) \E_\mu[\rho_t \delta_t \nabla_\pparams  \log \pi(A_t|S_t; \pparams) | S_t = s]  \hspace{1.0cm} \triangleright \text{ using (a)}\\
&= \sum_{s \in \States} \emweight_\eta(s) \sum_a \frac{\partial \pi(a|s; \pparams) }{\partial \pparams}  \qpi(s,a) ,
\end{align*}
where the last line follows from the fact that
\begin{align*}
&\E_\mu[\rho_t \delta_t \nabla_\pparams  \log \pi(A_t|S_t; \pparams) | S_t = s] \\
&= \sum_a \mu(a|s) \rho(s,a) \nabla_\pparams  \log \pi(a|s; \pparams)\frac{\partial \pi(a|s; \pparams) }{\partial \pparams}  \E_\mu[\delta_t | S_t = s, A_t = a]\\
&= \sum_a \mu(a|s) \rho(s,a) \nabla_\pparams  \log \pi(a|s; \pparams)\frac{\partial \pi(a|s; \pparams) }{\partial \pparams}  [q_\pi(s,a) - v(s)] \hspace{1.0cm} \triangleright \text{ by assumption}\\
&= \sum_a \frac{\partial \pi(a|s; \pparams) }{\partial \pparams}  \qpi(s,a) .
\end{align*}
\par\vspace{-1cm}
\end{proof}

 \subsubsection{A Biased Estimate of the Gradient with Emphatic Weightings}\label{sec_direct_f}

For a fixed policy, the emphatic trace provides an unbiased way to reweight the gradient.
However, this is no longer true when the target policy is updated online in an actor-critic algorithm like ACE; the emphatic trace will contain importance sampling ratios for older versions of the policy.
If the target policy changes substantially during the learning process---as one would hope---the older importance sampling ratios could bias the emphatic trace's estimates.
On the other hand, a constant discount rate $\gamma < 1$ would give exponentially less weight to older importance sampling ratios in the emphatic trace, potentially mitigating the bias.
We empirically investigate the impact of this type of bias in section \ref
{sec:classic_control}.

As a Monte Carlo estimator of the emphatic weightings, the emphatic trace can also yield high-variance estimates.
Furthermore, the product of importance sampling ratios in the emphatic trace can lead to even higher-variance estimates, which can slow learning \citep{precup2001offpolicy,liu2020understanding,ghiassian2018online}. 
As such, several algorithms have been proposed that use parametrized functions to estimate the factors required for reweighting the updates \citep{hallak2017consistent,gelada2019off,liu2018breaking,zhang2019generalized}.
We can similarly approximate the emphatic weighting directly.

For our setting, this involves estimating $\E_\mu[M_t | S_t = s] = \emweight_\eta(s)/d_\mu(s)$. Because $M_t$ is a function of $F_t$, we directly approximate $\E_\mu[F_t | S_t = s]$ by learning a parametrized function $f_\fparams(s)$ and use it in place of $F_t$ to compute $M_t$. Notice that the resulting  $f_\fparams(s) \approx \E_\mu[F_t | S_t = s] = {\emweight(s)}/{d_\mu(s)}$, and so $(1-\lambdaa) i(s) + \lambdaa f_\fparams(s) \approx \E_\mu[M_t | S_t = s] = \emweight_\eta(s)/d_\mu(s)$. Because we sample $s \sim d_\mu$, weighting the update with an estimate of $\emweight_\eta(s)/d_\mu(s)$ is effectively an importance sampling ratio that ensures the updates are weighted according to $\emweight_\eta(s)$. 


The direct approximation relies on the recursive equation for the emphatic weighting that allows for a temporal-difference update.
Unlike the usual TD update, this update bootstraps off the estimate from the previous step rather than the next step, because the emphatic weightings accumulate interest back in time, leading into the state.
Recall the recursive equation
  \begin{equation*}
    \emweight(s') = d_\mu(s')i(s') + \sum_{\States, \Actions} \pi(a|s;\pparams) p(s'|s,a) \gamma(s,a,s') \emweight(s)
    .
  \end{equation*}
This equation is a Bellman equation, that specifies the emphatic values for state $s'$ in terms of the immediate interest and the states leading into $s$.  
We approximate $f_\fparams(s) \approx m(s)/d_\mu(s)$, rather than $m(s)$. The recursive formula for an $f(s)$ that equals $m(s)/d_\mu(s)$ is
  \begin{equation*}
    f(s') = i(s') + \tfrac{1}{d_\mu(s')}\sum_{\States, \Actions} \pi(a|s;\pparams) p(s'|s,a) \gamma(s,a,s') d_\mu(s) f(s'),
  \end{equation*}  
  where $i_t + \rho_{t-1}\gamma_t f_\fparams(S_{t-1})$ is a sample of this target,
  as we show in the next proposition. 
%
\begin{proposition} \label{f_update}
Under the conditions stated by \cite{sutton2016anemphatic},
\begin{align*}
\lim_{t\rightarrow\infty} \E_\mu[i_t + \rho_{t-1}\gamma_t f_\fparams(S_{t-1}) | S_t = s'] &= i(s') + \tfrac{1}{d_\mu(s')}\sum_{s,a} \pi(a|s;\pparams) p(s'|s,a) \gamma(s,a,s') d_\mu(s) f_\fparams(s).
\end{align*}
\end{proposition}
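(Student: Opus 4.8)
The plan is to use linearity of conditional expectation to split the target sample $i_t + \rho_{t-1}\gamma_t f_\fparams(S_{t-1})$ into the two terms appearing on the right-hand side, and to handle each separately. The first term is immediate: since $i_t$ depends only on $S_t$ and $i(s) = \E[i_t \mid S_t = s]$ by definition, we have $\E_\mu[i_t \mid S_t = s'] = i(s')$ for every $t$, so its limit is $i(s')$ with nothing further to check.

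The substance is the backward-in-time term $\E_\mu[\rho_{t-1}\gamma_t f_\fparams(S_{t-1}) \mid S_t = s']$. I would expand this as a sum over the previous state-action pair, $\sum_{s,a} P_\mu(S_{t-1}=s, A_{t-1}=a \mid S_t = s')\, \rho(s,a)\,\gamma(s,a,s')\, f_\fparams(s)$, using that conditioned on $S_{t-1}=s,\, A_{t-1}=a,\, S_t=s'$ the quantities $\rho_{t-1},\, \gamma_t,\, f_\fparams(S_{t-1})$ reduce to the deterministic values $\rho(s,a) = \pi(a|s;\pparams)/\mu(a|s)$, $\gamma(s,a,s')$, and $f_\fparams(s)$. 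The conditional law of the previous step is then obtained by Bayes' rule, $P_\mu(S_{t-1}=s, A_{t-1}=a \mid S_t = s') = P_\mu(S_{t-1}=s)\,\mu(a|s)\,p(s'|s,a) / P_\mu(S_t=s')$. Substituting and cancelling the $\mu(a|s)$ against the $\rho(s,a)$ leaves $\tfrac{1}{P_\mu(S_t=s')}\sum_{s,a} P_\mu(S_{t-1}=s)\, \pi(a|s;\pparams)\, p(s'|s,a)\, \gamma(s,a,s')\, f_\fparams(s)$, which already has exactly the algebraic shape of the claimed second term.

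It then remains to take $t \to \infty$. Here I would invoke the conditions from \cite{sutton2016anemphatic}, under which the limiting distribution exists, so $P_\mu(S_{t-1}=s) \to d_\mu(s)$ and $P_\mu(S_t=s') \to d_\mu(s')$, with $d_\mu(s') > 0$ for the states of interest. Because $\States$ and $\Actions$ are finite, the sum has finitely many terms and the denominator converges to a strictly positive limit, so the limit passes inside the sum and replaces each $P_\mu$ by the corresponding $d_\mu$. This yields $\tfrac{1}{d_\mu(s')}\sum_{s,a} \pi(a|s;\pparams)\, p(s'|s,a)\, \gamma(s,a,s')\, d_\mu(s)\, f_\fparams(s)$; adding the first-term contribution $i(s')$ gives the stated identity.

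The step I expect to be the main obstacle is precisely this interchange of limit and conditional expectation: one must ensure that $d_\mu(s')$ is bounded away from zero so that conditioning on $S_t=s'$ stays well-defined in the limit, and that the convergence of the one-step-earlier marginal $P_\mu(S_{t-1}=\cdot)$ to $d_\mu$ is strong enough to pass through the Bayes quotient. Finiteness of $\States$ and $\Actions$ makes this routine once the ergodicity and limiting-distribution hypotheses of \cite{sutton2016anemphatic} are assumed, but those hypotheses are doing the real work, which is why I would lean on them explicitly rather than reprove convergence of the chain.
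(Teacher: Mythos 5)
Your proof is correct and takes essentially the same route as the paper's: split off the interest term by linearity, expand the backward term over the previous state-action pair $(S_{t-1},A_{t-1})$, apply Bayes' rule, cancel $\mu(a|s)$ against $\rho(s,a)$, and pass to the stationary limit under the conditions of \cite{sutton2016anemphatic}. The only cosmetic difference is that you apply Bayes' rule at finite $t$ and then take limits of the marginals, whereas the paper takes the limit of the joint conditional probability $\Pr\{S_{t-1}=s, A_{t-1}=a \mid S_t = s'\}$ first; your explicit remarks on the finiteness of the sum and the positivity of $d_\mu(s')$ make the limit interchange slightly more careful than the paper's terser presentation.
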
  
\begin{proof}
The proof follows the strategy showing the unbiasedness of $F_t$ in \citet{sutton2016anemphatic}.
\begin{align*}
&\lim_{t\rightarrow\infty} \E_\mu[i_t + \rho_{t-1}\gamma_t f_\fparams(S_{t-1}) | S_t = s']\\
&\quad=  i(s') +  \lim_{t\rightarrow\infty} \E_\mu[\rho_{t-1}\gamma_t f_\fparams(S_{t-1}) | S_t = s'] \\
&\quad=  i(s') +  \sum_{s,a} \lim_{t\rightarrow\infty} \Pr\{S_{t-1} = s, A_{t-1} = a | S_t = s'\} \frac{\pi(a|s;\pparams)}{\mu(a|s)} \gamma(s,a,s') f_\fparams(s)\\
&\quad=  i(s') +  \sum_{s,a} \frac{d_\mu(s) \mu(a|s) p(s'|s,a)}{d_\mu(s')} \frac{\pi(a|s;\pparams)}{\mu(a|s)} \gamma(s,a,s')f_\fparams(s) \qquad \triangleright \text{ Bayes rule} \\
&\quad=  i(s') + \tfrac{1}{d_\mu(s')}\sum_{s,a} \pi(a|s;\pparams) p(s'|s,a) \gamma(s,a,s') d_\mu(s) f_\fparams(s)
\end{align*}
\par\vspace{-0.5cm}
\end{proof}
The utility of this result is that it provides a way to sample the target for this Bellman equation.
The following is a sample update whose target is equal to the above in expectation, when sampling $s \sim d_\mu$:
  \begin{equation} \label{m_est_update}
     \fparams \gets \fparams + \beta_t \bigl[i_t + \rho_{t-1} \gamma_{t} f_\fparams(S_{t-1}) - f_\fparams(S_t)\bigr] \nabla_\fparams f_\fparams(S_t)
    ,
  \end{equation}
  where $\beta_t$ is a step-size. The update aims to minimize the difference between the current estimate of the emphatic weightings and the target, using a standard semi-gradient TD update. We could alternatively use a gradient TD \citep{sutton2009fast,sutton2018reinforcement} update
    \begin{equation} \label{m_est_grad_update}
     \fparams \gets \fparams + \beta_t \left( \bigl[i_t + \rho_{t-1} \gamma_{t} f_\fparams(S_{t-1}) - f_\fparams(S_t)\bigr] \nabla_\fparams f_\fparams(S_t) - \gamma_t h(S_{t-1}) \nabla_\fparams f_\fparams(S_{t-1}) \right)
     ,
  \end{equation}
  where the auxiliary function $h(s)$ provides an estimate of $\mathbb{E}_\mu[i_t + \rho_{t-1} \gamma_{t} f_\fparams(S_{t-1}) - f_\fparams(S_t) | S_t = s]$ using a regression update. 

Once we use an approximation to the emphatic weighting, we may introduce bias into the gradient estimate. 
As an extreme case, imagine the function is restricted to predict the same value for all states. Using these estimates means replacing $M_t$ in the actor updates with a constant factor that can be subsumed in the step-size and therefore following the semi-gradient updates in Equation \ref{eq_semi_grad}. However, the advantage is that these lower-variance estimates do not have to be computed online, unlike $M_t$. The function $f_\fparams$ can be trained by sampling transitions from a replay buffer \citep{lin1993reinforcement,mnih2015human} and applying the update in Equation \ref{m_est_update}. 

\subsection{Summary}

In this section, we showed how to operationalize the off-policy policy gradient theorem, by discussing how to sample the gradient. The gradient involves a standard actor-critic update from a state $s$, but additionally weighted by the emphatic weighting $m_\eta(s)$. We leverage existing results for sampling the emphatic weighting in Section 5.2.1, and proved that using this unbiased emphatic weighting in our update results in an unbiased update (Proposition 6). These sampled emphatic weightings, however, can be high variance and become biased if the policy changes on each step. We develop a parameterized estimator in Section 5.2.2, by developing a recursive formula for $f_\fparams(s) \approx m(s)/d_\mu(s)$ in Proposition 7. Though a similar form was used to show unbiasedness of a sampled weighting in the original work on emphasis \citep{sutton2016anemphatic}, they did not recognize nor leverage this recursive form to develop a TD algorithm to learn a parameterized estimator. We need this $f_\fparams(s)$, instead of directly estimating $m(s)$, because states are sampled $s \sim d_\mu$, so weighting by $m(s)$ would be incorrect. 

We summarize the final algorithm in Algorithm \ref{ActorCritic1}, in Section \ref{sec_episodic_interest}. This pseudocode is provided later, after incorporating entropy regularization and showing how to use the algorithm for episodic problems.
The algorithm shows both how to use the emphatic trace and directly-estimated emphatic weightings.

\newcommand{\entropy}{\mathcal{H}}

\section{Incorporating Entropy Regularization}

In this section, we discuss how to extend the off-policy policy gradient theorem to incorporate entropy regularization. We focus on discrete states and actions in the main body, and include results for continuous states and actions in the appendix. Entropy regularization is commonly used with policy gradient methods, as it encodes a preference for more stochasticity in the policy. Recall that at each state, the policy induces a probability distribution over actions whose entropy is defined as
\begin{equation*}
\entropy\bigl(\pi(.|s)\bigr) \defeq - \sum_{a\in\Actions} \pi(a|s) \log \pi(a|s)
.
\end{equation*}
This value captures the stochasticity in the policy. A uniform random policy will maximize entropy, while the entropy of a nearly deterministic policy will be a large negative value. 

Entropy regularization is believed to promote exploration, improve the loss surface, and promote faster convergence rates. For exploration, entropy regularization can help make the policy more stochastic and diversify the set of states and actions that the agent learns about \citep{williams1991function,haarnoja2018soft}. To find a good policy, the agent needs accurate estimates of values of different actions in different parts of the state space. A greedy policy only takes actions that are deemed optimal at the current point which results in learning only about a limited number of trajectories. 
Entropy regularization has also shown to help policy optimization by modifying the landscape. The resulting objective is smoother, which allows the use of larger step-sizes and also reduces the chance of getting stuck in bad local maxima \citep{ahmed2019understanding}.

Finally, the use of entropy regularization facilitates convergence analysis to stationary points. \citet{mei2020global} showed that entropy regularization can ensure existence of stationary points and also improve the rate of convergence in tabular policy optimization. The idea is similar to convex optimization, where $\ell_2$ regularization makes the loss function strongly convex and improves the convergence rate from sublinear to linear. In Appendix \ref{suboptimality_appendix}, we extend the proof of existence of a stationary point for entropy regularized policy optimization to state aggregation. Therefore, in addition to extending our algorithm to allow for entropy regularization, we also use this extension to formally prove a counterexample for the semi-gradient updates.

Extending the results to entropy regularization is relatively straightforward, as it relies on the already developed machinery of soft action-values that include entropy in the rewards. 
Entropy regularization augments each reward with a sample of the entropy at the current state so that the return is modified to
\begin{align*}
  \tilde{G}_t &\defeq (R_{t+1} - \tau \log \pi(A_t|S_t))  + \gamma_{t+1}( R_{t+2} - \tau \log \pi(A_{t+1}|S_{t+1})) + \ldots
  \\
  &= R_{t+1} - \tau \log \pi(A_t|S_t) + \gamma_{t+1}\tilde{G}_{t+1},
\end{align*}
where $\tau$ is a parameter that controls the amount of regularization.
Entropy regularized state values and action values can be defined as \citep{geist2019theory}
\begin{align*}
  \vtpi(s) &\defeq \E_{\pi}[\tilde{G}_{t}| S_t = s] \quad \forall s \in \SS\\
  \qtpi(s,a) &\defeq \E_{\pi}[R_{t+1} + \gamma_{t+1}\tilde{G}_{t+1}| S_t = s, A_t = a] \quad \forall s \in \SS \text{ and } a \in \AA\\
  &= \sum_{s' \in \SS} \text{P}(s'|s,a)[r(s,a,s') + \gamma(s,a,s') \tilde{v}_{\pi}(s')] \quad \forall a\in\AA, \forall s\in\SS
  .
\end{align*}
Here, $\vtpi(s) = \E_{\pi}[\qtpi(S,A) | S = s] + \tau \entropy(\pi(\cdot | s))$ for entropy $\entropy(\pi(\cdot | s)) = \E_{\pi}[ -\log \pi(A|s)]$.  
The entropy-regularized objective function simply uses these entropy regularized values,
  \begin{equation} \label{ent_off_policy_objective}
    \tilde{J}_{\mu}(\boldsymbol{\theta}) \defeq \sum_{s\in\States}d_\mu(s) i(s) \tilde{v}_{\pi_{\boldsymbol{\theta}}}(s)
    .
  \end{equation}
  
The off-policy policy gradient theorem can be extended to the entropy regularized objective. The result looks a little different, because the relationship between the soft values and soft action-values is a little different than in the unregularized setting.  
Notice that $- \tau \log \pi(a|s)$ is not included in the first reward for $\qtpi(s,a)$, because this first action is given---not random---and entropy is an expectation over all actions. 
Further, $\log \pi(a|s)$ can be arbitrarily large, though the entropy itself remains nicely bounded. For this reason, we do not define the soft action-values to be $\E_{\pi}[\tilde{G}_{t}| S_t = s, A_t = a]$, as we typically would for the unregularized setting. Nonetheless, deriving the policy gradient theorem for these soft action-values uses exactly the same steps. Further, we do recover the unregularized formulation when $\tau=0$. 
\begin{theorem}[Off-policy Policy Gradient Theorem for Entropy Regularization] \label{ent_grad_theorem}
\begin{align*}
\frac{\partial \tilde{J}_\mu(\pparams) }{\partial \pparams} 
&= \sum_{s \in \States} \emweight(s) \left[\sum_a \frac{\partial \pi(a|s; \pparams) }{\partial \pparams}  \qtpi(s,a) + \tau \frac{\partial \entropy(\pi(\cdot |s; \pparams)) }{\partial \pparams} \right]\\
&= \sum_{s \in \States} \emweight(s) \sum_a \frac{\partial \pi(a|s; \pparams) }{\partial \pparams}  \left[ \qtpi(s,a) - \tau \log \pi(a |s; \pparams)) \right]
\end{align*}
\end{theorem}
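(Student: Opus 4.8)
The plan is to mirror the proof of Theorem~\ref{grad_theorem} almost verbatim, exploiting the same vector form of the emphatic weighting $\emvec^\top = \ivec^\top(\eye - \Ppig)^{\inv}$ with $\ivec(s) = d_\mu(s) i(s)$. Since $\tilde{J}_\mu(\pparams) = \sum_{s \in \States} \ivec(s) \vtpi(s)$, differentiating under the sum reduces the task to finding a recursive (Bellman) expression for $\partial \vtpi(s)/\partial\pparams$, exactly as in the unregularized case. The only genuinely new ingredient is the entropy bonus, which appears in $\vtpi$ but deliberately not in $\qtpi$, and I expect the whole argument to go through once this bonus is shown to augment the source term of the recursion rather than its transition operator.

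First I would differentiate $\vtpi(s) = \sum_a \pi(a|s;\pparams) \qtpi(s,a) + \tau \entropy(\pi(\cdot|s;\pparams))$ by the product rule. The first sum splits into $\sum_a \frac{\partial \pi(a|s;\pparams)}{\partial\pparams} \qtpi(s,a)$ plus $\sum_a \pi(a|s;\pparams) \frac{\partial \qtpi(s,a)}{\partial\pparams}$. Because $\qtpi(s,a) = \sum_{s'} \Pfcn(s'|s,a)[r(s,a,s') + \gamma(s,a,s') \vtpi(s')]$ carries no explicit $\pparams$-dependence beyond the bootstrapped value, its gradient is simply $\sum_{s'} \Pfcn(s'|s,a) \gamma(s,a,s') \frac{\partial \vtpi(s')}{\partial\pparams}$; crucially, the current action's entropy is excluded from $\qtpi$ by definition, so no current-state entropy term leaks into this recursion.

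The key small computation is the entropy gradient. I would expand $\frac{\partial \entropy(\pi(\cdot|s))}{\partial\pparams} = -\sum_a \frac{\partial \pi(a|s)}{\partial\pparams}(\log\pi(a|s) + 1)$ and then discard the constant $1$ using $\sum_a \frac{\partial \pi(a|s)}{\partial\pparams} = \frac{\partial}{\partial\pparams}\sum_a \pi(a|s) = 0$, leaving $\tau\frac{\partial\entropy}{\partial\pparams} = -\tau\sum_a \frac{\partial\pi(a|s)}{\partial\pparams}\log\pi(a|s)$. Defining the per-state term $\gtvec(s) \defeq \sum_a \frac{\partial \pi(a|s;\pparams)}{\partial\pparams} \qtpi(s,a) + \tau \frac{\partial \entropy(\pi(\cdot|s;\pparams))}{\partial\pparams}$, which is precisely the bracket in the statement, yields the recursion $\frac{\partial\vtpi(s)}{\partial\pparams} = \gtvec(s) + \sum_a \pi(a|s) \sum_{s'} \Pfcn(s'|s,a)\gamma(s,a,s')\frac{\partial\vtpi(s')}{\partial\pparams}$.

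Collecting these into matrix form gives $\vtpivecdot = \Gtmat + \Ppig \vtpivecdot$, where the rows of $\Gtmat$ are the $\gtvec(s)$, hence $\vtpivecdot = (\eye - \Ppig)^{\inv}\Gtmat$ and $\frac{\partial \tilde{J}_\mu}{\partial\pparams} = \ivec^\top \vtpivecdot = \ivec^\top(\eye - \Ppig)^{\inv}\Gtmat = \emvec^\top \Gtmat = \sum_{s \in \States} \emweight(s)\gtvec(s)$, which is the first displayed equality. The second equality follows immediately by folding the entropy-gradient identity $\tau\frac{\partial\entropy}{\partial\pparams} = -\tau\sum_a\frac{\partial\pi(a|s)}{\partial\pparams}\log\pi(a|s)$ back into the action sum. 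I anticipate no serious obstacle; the only place requiring care is confirming that the entropy reward enters $\vtpi$ but not $\qtpi$, so that the transition operator $\Ppig$ and the resulting inversion $(\eye - \Ppig)^{\inv}$ are identical to those in Theorem~\ref{grad_theorem} and the entropy merely enriches the source term $\gtvec(s)$.
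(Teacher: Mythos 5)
Your proposal is correct and follows essentially the same route as the paper's proof: the same recursion $\vtpivecdot = \Gtmat + \Ppig \vtpivecdot$ with the entropy term absorbed into the source $\gtvec(s)$, the same inversion $\vtpivecdot = (\eye - \Ppig)^{\inv}\Gtmat$ giving $\emvec^\top \Gtmat$, and the same identity $\sum_a \frac{\partial \pi(a|s;\pparams)}{\partial\pparams} = \zerovec$ to reduce the entropy gradient to $-\sum_a \frac{\partial \pi(a|s;\pparams)}{\partial\pparams}\log\pi(a|s;\pparams)$ for the second equality. Your flagged point of care---that the entropy bonus enters $\vtpi$ but deliberately not $\qtpi$, so the transition operator $\Ppig$ is unchanged from Theorem~\ref{grad_theorem}---is exactly the observation the paper's argument rests on.
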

The proof is in Appendix \ref{ent_grad_appendix}, as is the extension to continuous states, continuous actions, and other regularizers.

This theorem shows that we can use a similar update as in the unregularized setting, simply by adding $- \tau \log \pi(a |s; \pparams))$. Given the true soft action-values $\qtpi(s,a)$, an update with an unbiased sample of the gradient is
\begin{equation*}
\pparams \gets \pparams + \stepsize \rho_t M_t \frac{\partial \log \pi(A_{t}|S_{t}; \pparams) }{\partial \pparams}  \left[\qtpi(S_{t},A_{t}) - \tau \log \pi(A_{t} |S_{t}; \pparams))\right].
\end{equation*}
In practice, we subtract a baseline and use an estimate $\delta_t$ of the advantage $\qtpi(s,a) - \tau \log \pi(a |s; \pparams) - \vtpi(s)$, by only directly estimating the soft values $v(s) \approx \vtpi(s)$. The 1-step return sample is again the TD error, but with entropy regularization added to the rewards: $\delta_t = R_{t+1} - \tau \log \pi(A_t|S_t) + \gamma_{t+1}v(S_{t+1}) - v(S_t)$.

\section{Formulating Episodic Problems as a Special Case}\label{sec_episodic_interest}
The goal in an episodic problem is to maximize the expected episodic return. Mismatch between the algorithm's objective and this goal, even in the on-policy setting, can lead the agent to suboptimal policies \citep{thomas2014bias,nota2020isthe}. This section describes how we can use the interest function to maximize the right objective.

The objective function in Equation \ref{ep_obj} that weights state values by the start state distribution is equal to expected episodic return. With limited function approximation resources this weighting matters. If the parametrized function is incapable of representing a policy that maximizes all state values, the agent has to settle for lower values in some states to maximize the values of more important ones. For example, if the weighting is $d_\mu$, the agent may incorrectly prioritize states that appear less often at the start of an episode and more frequently at other points in the behaviour policy's trajectories, and fail to optimize the episode return.

The interest function in Equation \ref{exc_obj} allows us to focus the parametrized function's resources on states of interest. Recall that we assume that the agent observes interest $i_t$ in state $S_t$, with $\interest(s) = \E[\interest_t \mid S_t = s]$. So far, we have not used the additional flexibility that the interest itself can be random, but for this unification we will use this property. If we could set $\interest(S_t) = {d_0(S_t)}/{d_\mu(S_t)}$ then the objective function would correspond to the episodic objective. However, neither $d_\mu$ nor $d_0$ is available to the agent, and we would like to avoid estimating those distributions. 

Fortunately, we can show that if the signal is set to one at the beginning of an episode and set to zero thereafter, its expectation in the limit of time will be proportional to the correct ratio. Under transition-based discounting, the agent is informed that an episode has begun whenever it receives a discount factor of zero: namely, that the transition $(S_t, A_t, S_{t+1})$ resulted in termination. Therefore, we can obtain this result by allowing for the interest to be a function of the whole transition.


\begin{proposition}\label{prop_episodic}
If the signal $\interest_t$ is set to
\begin{equation}\label{eq_eps_interest}
    \interest_t \defeq
    \begin{cases}
      1, & \text{if}\ \gamma_{t}=0 \text{ (i.e. the beginning of an episode)} \\
      0, & \text{otherwise}
    \end{cases}
\end{equation}
then its expected value under the behaviour policy is
\begin{align*}
    \E_\mu[\interest_t\mid S_t = s] \propto \frac{d_0(s)}{d_\mu(s)}.
\end{align*}
\end{proposition}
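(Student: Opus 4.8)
The plan is to treat $\interest_t$ as the indicator of the event $\{\gamma_t=0\}$, i.e.\ the event that $S_t$ is the first state of a fresh episode, and then to evaluate the conditional expectation by Bayes' rule in the limit $t\to\infty$. The single structural fact that makes everything work is that, conditioned on a restart, the current state is drawn from the start-state distribution $d_0$ independently of the past: whenever the transition leading into $S_t$ terminates (so the observed discount $\gamma_t=0$), $S_t\sim d_0$ by the definition of the episodic reset mechanism. This decouples $S_t$ from the history on the event $\{\interest_t=1\}$ and lets the $d_0$ factor appear cleanly, with all of the state-dependence concentrated in that factor.

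Concretely, I would proceed as follows. First, since $\interest_t\in\{0,1\}$, write $\E_\mu[\interest_t\mid S_t=s]=\Pr_\mu(\interest_t=1\mid S_t=s)$. Second, apply Bayes' rule,
\[
\Pr_\mu(\interest_t=1\mid S_t=s)=\frac{\Pr_\mu(S_t=s\mid \interest_t=1)\,\Pr_\mu(\interest_t=1)}{\Pr_\mu(S_t=s)}.
\]
Third, identify $\Pr_\mu(S_t=s\mid \interest_t=1)=d_0(s)$ from the reset mechanism described above; this is exact, not asymptotic. Fourth, take $t\to\infty$: the denominator converges to the assumed limiting distribution $d_\mu(s)$, while $\Pr_\mu(\interest_t=1)$ converges to a state-independent constant $c$, namely the limiting per-step probability of an episode restart. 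Combining the three pieces gives $\lim_{t\to\infty}\E_\mu[\interest_t\mid S_t=s]=c\,d_0(s)/d_\mu(s)$, which is precisely the claimed proportionality, with $c$ playing the role of the normalizing constant.

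The main obstacle lies entirely in justifying the limits rather than in the algebra. Existence of $\lim_{t\to\infty}\Pr_\mu(S_t=s)=d_\mu(s)$ is assumed in the problem formulation, but I also need $\lim_{t\to\infty}\Pr_\mu(\interest_t=1)=c$ to exist and to be a positive, state-independent constant; this follows from convergence of the joint law of consecutive transitions $(S_{t-1},A_{t-1},S_t)$ to its stationary distribution under the ergodicity conditions of \citet{yu2015onconvergence} and \citet{sutton2016anemphatic} invoked throughout this section, and $c>0$ because an episodic task restarts with positive limiting frequency. A secondary point worth stating carefully is the exact factorization $\Pr_\mu(S_t=s\mid\interest_t=1)=d_0(s)$: it relies on the Markov property of the reset, namely that a restart resamples the state from $d_0$ regardless of which terminating transition triggered it, so no averaging over histories is required. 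Given these facts the conclusion is immediate, and because $c$ is a fixed constant the statement is phrased as a proportionality rather than an equality.
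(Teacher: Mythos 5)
Your proposal is correct and follows essentially the same route as the paper's proof: rewrite $\E_\mu[\interest_t \mid S_t = s]$ as $\Pr(\gamma_t = 0 \mid S_t = s)$, apply Bayes' rule, identify $\Pr(S_t = s \mid \gamma_t = 0) = d_0(s)$ via the reset mechanism and $\Pr(S_t = s) = d_\mu(s)$, and absorb the state-independent restart probability into the proportionality constant. If anything, you are more careful than the paper, which silently treats $\Pr(S_t = s)$ as $d_\mu(s)$ at finite $t$, whereas you correctly frame the denominator and the restart probability as $t \to \infty$ limits justified by the ergodicity assumptions.
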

\begin{proof}
\begin{align*}
\E_\mu[\interest_t\mid S_t = s] &= \Pr(\gamma_t = 0 \mid S_t = s)\\
&= \frac{\Pr(S_t = s\mid \gamma_t = 0 ) \Pr(\gamma_t = 0)}{\Pr(S_t = s)}\\
&= \frac{d_0(s) \Pr(\gamma_t = 0)}{d_\mu(s)}\\
&\propto \frac{d_0(s))}{d_\mu(s)},
\end{align*}
where we used the fact that $\Pr(S_t = s\mid \gamma_t = 0 ) = d_0(s)$ and $\Pr(S_t = s) = d_\mu(s)$. 
\end{proof}
The constant term is the same for all states, $\Pr(\gamma_t = 0)$. It simply reflects the probability of termination under the behaviour policy. This constant term does not change the relative ordering between policies, since all weight is still on the start states. Therefore, the resulting objective is equivalent to the episodic objective.  


We can relate the resulting update to the on-policy and off-policy updates used for the episodic setting. 
In the on-policy setting, if we use the interest function given by Equation \ref{eq_eps_interest} and a constant discount factor $\gamma$ during the episode, then the update reduces to the unbiased episodic actor-critic algorithm of \citet{sutton2018reinforcement}, originally proposed by \citet{thomas2014bias}.

\citet{sutton2000policy} proved the policy gradient theorem for the episodic setting
\begin{align*}
&\frac{\partial J_0(\pparams) }{\partial \pparams} = \sum_{s \in \States} \bar{d_\pi}(s) \sum_a \frac{\partial \pi(a|s; \pparams) }{\partial \pparams}  \qpi(s,a) \hspace{1.0cm} \text{for } \hspace{0.3cm} \bar{d_\pi}(s) = \sum_{t=0}^{\infty} \gamma^t \Pr(s;t,\pi)
,
\end{align*}
where $\Pr(s;t,\pi)$ is the probability of going from a start state to state $s$ in $t$ steps under $\pi$, the weighting in the gradient $\bar{d_\pi}$ is the discounted state visit distribution, and the agent should discount updates that occur later in the episode to account for this weighting.
An algorithm that does not discount later updates---and thus samples according to $d_\pi$---results in a biased update.

To fix this problem, \citet{thomas2014bias} proposed the unbiased actor-critic algorithm with the updates rules
\begin{align*}
  \pparams &\gets \pparams + \alpha I \delta \nabla_{\pparams} \log \pi(A|s;\pparams)
  \\
  I &\gets \gamma I.
\end{align*}
Since $I$ is initialized to $1$ and updated after the update to $\pparams$, $I$ will be $1$ during the first weight update, and will decay by $\gamma$ each timestep thereafter.

The on-policy ACE update rule with $\rho=1$ and $\lambdaa=1$ is
\begin{align*}
  F_{t} &\gets \gamma_{t} F_{t-1} + \interest_t \hspace{1.0cm} \text{for } F_0 = 0
  \\
  \pparams &\gets \pparams + \alpha F_{t} \delta \nabla_{\pparams} \log \pi(A|s;\pparams).
\end{align*}
Because $\interest_t=1$ on the first time step and $F$ is initialized to $0$, $F$ will be $1$ on the first weight update and will decay by $\gamma$ each time step thereafter.
From this inspection it is clear that the ACE update 
reduces to the unbiased actor-critic update in the on-policy episodic setting.
The final ACE algorithm with all the discussed techniques is in Algorithm \ref{ActorCritic1}.


 
 \newcommand{\cparams}{\mathbf{w}}

\begin{algorithm}[h]
\caption{Online Actor Critic with Emphatic weightings (ACE)}\label{ActorCritic1}
\begin{algorithmic}
\State {Initialize weights for actor} $\pparams$ and critic $\cparams$
\State {For emphatic trace, initialize } $F_0 = 0$; for directly-estimated 
emphatic weightings, initialize weights $\fparams$ for approximate emphatic weightings $f_\fparams$
\State {Suggested (default) settings of parameters:} $\lambdaa = 0.1$, $\tau=0.01$
\State {Obtain initial feature vector} $\xvec_0$ and set $i_0 \gets 1$
\Repeat  
\State Choose an action $a_t$ according to $\mu(\cdot|\xvec_t)$
  \State Observe reward $r_{t+1}$, next state vector $\xvec_{t+1}$ and $\gamma_{t+1}$
  \State \textbf{For episodic setting:} Set $i_{t+1} = 1$ if a new episode has begun; else $i_{t+1} = 0$
    \State \textbf{For excursions setting:} If no preferences between states, set $i_{t+1} = 1$ 

  \State $\tilde{r}_{t+1} \gets r_{t+1} - \tau \log \pi(a_t|\xvec_t;\pparams)$
  \State $\rho_t \gets \frac{\pi(a_t|\xvec_t;\pparams)}{\mu(a_t|\xvec_t)}$
  \State Update (entropy-regularized) critic $v_\cparams$ 
  \If{using emphatic trace}
\State $M_t \gets (1-\lambdaa) i_t + \lambdaa F_t$
 \State $F_{t+1} \gets \rho_{t} \gamma_{t+1} F_{t} + i_{t+1}$
\Else
  \State $M_t \gets (1-\lambdaa)i_t + \lambdaa f_\fparams(\xvec_t)$
    \State $\fparams \gets \fparams + \beta_t \bigl[i_{t+1} + \rho_t \gamma_{t+1} f_\fparams(\xvec_t) - f_\fparams(\xvec_{t+1})\bigr] \nabla_\fparams f_\fparams(\xvec_{t+1})$
\EndIf
\State $\delta_t \gets \tilde{r}_{t+1} + \gamma_{t+1} v_\cparams(s_{t+1}) - v_\cparams(s_t)$
  \State $\psivec \gets \nabla_\pparams \log \pi(b|\xvec; \pparams)$ 
  \State $\pparams \gets \pparams + \alpha_t \rho_t M_t \delta_t \psivec $
\Until{agent done interaction with environment}
\end{algorithmic}
\end{algorithm}

\section{Stationary Points under the Semi-gradient and Gradient Updates}

In this section, we provide more insight into the difference between the stationary points of the weighted excursions objective---obtained using the true gradient in Equation \ref{eq_true_grad}---versus those obtain with the semi-gradient update underlying OffPAC, in Equation \ref{eq_semi_grad}. We first provide a counterexample showing that all the stationary points for the semi-gradient have poor performance. In fact, initializing at the optimal solution, and then updating with the semi-gradient, moves the solution away to one of these highly suboptimal stationary points. We then discuss that the semi-gradient update can actually be seen as a full gradient update, albeit with a different implicit state weighting. This connection illuminates why the semi-gradient does so poorly on the counterexample, but also potentially sheds light on why OffPAC has generally performed reasonably in practice.   
  
\subsection{A Counter-example for the Semi-Gradient Update}\label{sec_counter}
  

 Recall that, in the derivation of the policy gradient theorem, the product rule breaks down the gradient of the value function into the two terms shown in Equation \ref{eq_product_rule}. The policy gradient theorem in OffPAC only considers the first term, i.e. the gradient of the policy given a fixed value function. The approximated gradient is
  \begin{align*} 
  \frac{\partial J_\mu(\pparams) }{\partial \pparams} \approx \sum_{s \in \States} \dmu(s) \sum_a \frac{\partial \pi(a|s; \pparams) }{\partial \pparams}  \qpi(s,a) .
  \end{align*}
%
The approximation above weights the states by the behaviour policy's state distribution instead of emphatic weightings. 

To see the difference between these weightings, consider the MDP in Figure \ref{fig:counterexample}. For the actor, $s_0$ has a feature vector $[1,0]$, and the feature vector for both $s_1$ and $s_2$ is $[0,1]$. This aliased representation forces the actor to take a similar action in $s_1$ and $s_2$. The behaviour policy takes actions $a_0$ and $a_1$ with probabilities $0.25$ and $0.75$ in all non-terminal states, so that $s_0$, $s_1$, and $s_2$ will have probabilities $0.5$, $0.125$, and $0.375$ under $\dmu$. The target policy is initialized to take $a_0$ and $a_1$ with probabilities $0.9$ and $0.1$ in all non-terminal states, which is close to optimal.

We trained two actors on this MDP, one with semi-gradient updates and one with gradient updates. The actors are initialized to the target policy above and the updates use exact values rather than critic estimates. States and actions are sampled from the behaviour policy. As shown in Figures \ref{fig:semi-grad-obj} and \ref{fig:semi-grad-a0s1}, while both methods start close to the highest attainable value of the objective function (that of a deterministic policy that takes $a_0$ everywhere), semi-gradient updates move the actor towards a suboptimal policy and reduce the objective function along the way. Gradient updates, however, increase the probability of taking $a_0$ in all states and increase the value of the objective function.

\begin{figure*}[ht!]
    \centering
    \begin{subfigure}[b]{0.25\textwidth}
	\includegraphics[width=\textwidth]{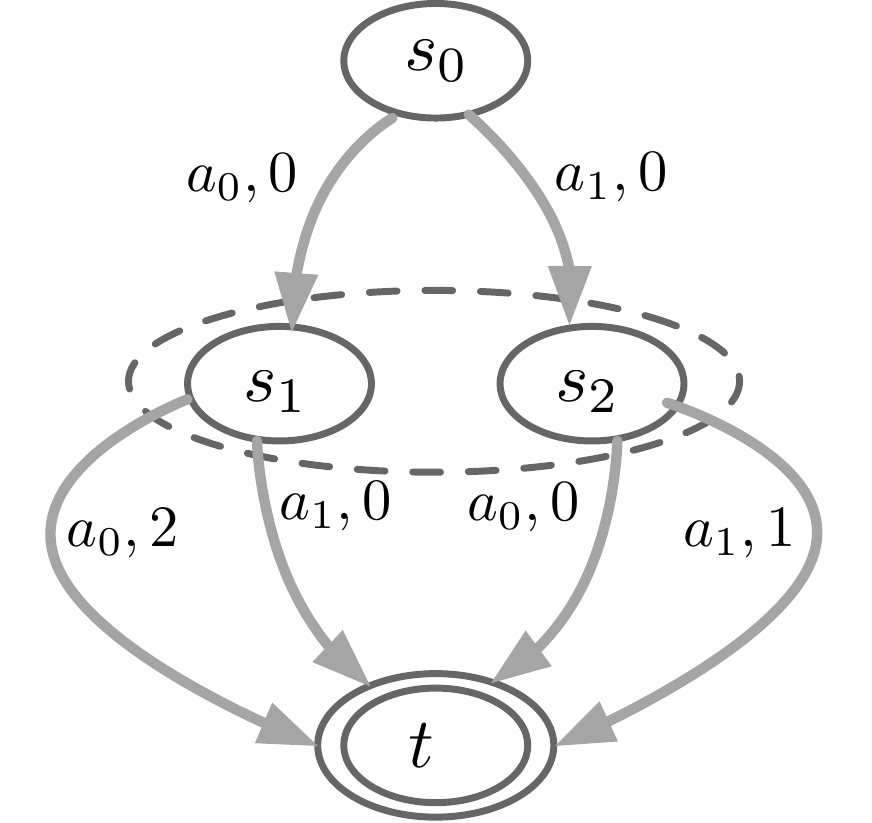}
	\caption{Counterexample}\label{fig:counterexample}
    \end{subfigure}   
    \begin{subfigure}[b]{\figwidthfour}
      \includegraphics[width=\textwidth]{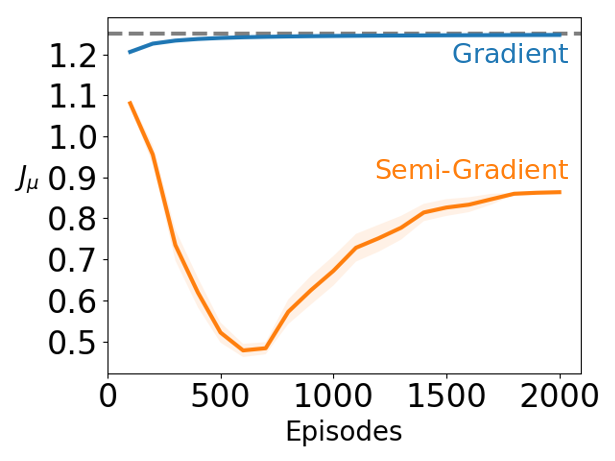}
      \caption{Learning curves}
      \label{fig:semi-grad-obj}
    \end{subfigure}   
    \begin{subfigure}[b]{\figwidthfour}
      \includegraphics[width=\textwidth]{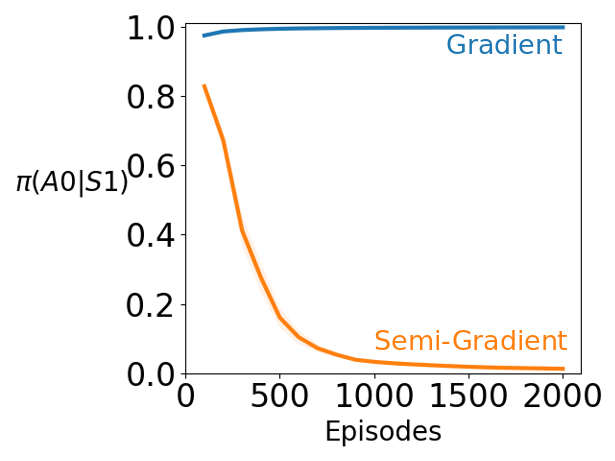}
      \caption{Optimal action probability}
      \label{fig:semi-grad-a0s1}
    \end{subfigure} 
    \caption{(\subref{fig:counterexample}) A counterexample that demonstrates the suboptimal behaviour of semi-gradient updates. The semi-gradients converge for the tabular setting \citep{degris2012offpolicy}, but not necessarily under function approximation---such as with the state aliasing in this MDP. The start state is denoted $s_0$ and the terminal state is denoted $t$. States $s_1$ and $s_2$ are aliased to the actor. The interest $i(s)$ is set to one for all states. (\subref{fig:semi-grad-obj}) Learning curves comparing semi-gradient updates and gradient updates, averaged over 30 runs with negligible standard error bars. The actor has a softmax output on a linear transformation of features and is trained with a step-size of 0.1 (though results were similar across all the stepsizes tested). The dashed line shows the highest attainable objective function under the aliased representation. (\subref{fig:semi-grad-a0s1}) The probability of taking the optimal action ($a_0$) in the aliased states.}
    \label{fig:semi-grad}
    \vspace{-0.3cm}
  \end{figure*}

The problem with semi-gradient updates boils down to the weighting. In an expected semi-gradient update, each state tries to increase the probability of the action with the highest action-value. There will be a conflict between the aliased states $s_1$ and $s_2$ because their highest-valued actions differ. If the states are weighted by $d_\mu$ in the expected update, $s_1$ will appear insignificant to the actor, and the update will increase the probability of $a_1$ in the aliased states. The ratio between $\qpi(s_1,a_0)$ and $\qpi(s_2,a_1)$ is not enough to counterbalance this weighting.

However, $s_1$ has an importance that the semi-gradient update overlooks. Taking a suboptimal action at $s_1$ will also reduce $q(s_0,a_0)$, and after many updates the actor will gradually prefer to take $a_1$ in $s_0$. Eventually, the target policy will be to take $a_1$ at all states, which has a lower value under the objective function than the initial target policy. This experiment highlights why the weight of a state should depend not only on its own share of $d_\mu$, but also on its predecessors. The behaviour policy's state distribution is not the proper deciding factor in the competition between $s_1$ and $s_2$, even when optimizing the excursions objective.
 
Proposition \ref{prop_suboptimality} formalizes the problem with semi-gradient updates by showing that, under any $\tau>0$, semi-gradient updates will not converge to a stationary point of the objective function in the counterexample. The requirement $\tau>0$ is only needed to ensure existence of a stationary point;  this result holds for $\tau$ arbitrarily close to zero. The proof is presented in Appendix \ref{suboptimality_appendix}.

\begin{proposition}\label{prop_suboptimality}
For any $\tau>0$ where $\tau \neq \tau_{i} \approx 0.2779$ in the three-state counterexample, semi-gradient updates do not converge to a stationary point of the objective function.
\end{proposition}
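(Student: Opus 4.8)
The plan is to exploit the special structure of the counterexample to reduce the actor to two scalar parameters and then compare, directly, the fixed points of the expected semi-gradient update against the true stationary points of the entropy-regularized objective $\tilde J_\mu$. Since $s_0$ carries the feature $[1,0]$ while $s_1$ and $s_2$ share $[0,1]$, the softmax actor is controlled by two independent logits; writing $p_0 = \pi(a_0\mid s_0;\pparams)$ and $p_1 = \pi(a_0\mid s_1;\pparams) = \pi(a_0\mid s_2;\pparams)$, the parameter for $s_0$ moves $p_0$ alone and the parameter for the aliased block moves $p_1$ alone. Because $s_1$ and $s_2$ transition to termination, their soft action-values $\qtpi(s_1,\cdot),\qtpi(s_2,\cdot)$ are just immediate rewards and hence constants; I write $\Delta_i = \qtpi(s_i,a_0) - \qtpi(s_i,a_1)$. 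The soft values $\vtpi(s_1),\vtpi(s_2)$, and therefore the difference $\qtpi(s_0,a_0) - \qtpi(s_0,a_1)$, depend on $p_1$ (through the downstream entropy) but not on $p_0$.

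Next I would characterize the semi-gradient fixed points. Using the entropy-regularized form of Theorem \ref{ent_grad_theorem} with the weighting $d_\mu$ in place of $m$, the per-state condition is a Boltzmann-matching condition: the policy at a state must be the softmax of its (weighted) soft action-values at temperature $\tau$. The aliased direction decouples from $p_0$ and gives, after dividing out the nonzero softmax-Jacobian factor $p_1(1-p_1)$, the single scalar equation $\tau\log\frac{p_1}{1-p_1} = \bar\Delta^{d} := \frac{d_\mu(s_1)\Delta_1 + d_\mu(s_2)\Delta_2}{d_\mu(s_1)+d_\mu(s_2)}$, whose unique solution is $p_1^{\mathrm{s}}(\tau) = \sigma(\bar\Delta^{d}/\tau)$. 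With $p_1$ pinned, the $s_0$ direction gives a single equation determining a unique $p_0^{\mathrm{s}}(\tau) = \sigma\bigl((\qtpi(s_0,a_0)-\qtpi(s_0,a_1))/\tau\bigr)$. The requirement $\tau > 0$ enters here: it guarantees this fixed point lies in the interior (bounded parameters) rather than escaping to a deterministic vertex, and I would invoke the existence argument extended to state aggregation in Appendix \ref{suboptimality_appendix} (following \citet{mei2020global}).

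I would then compare this unique semi-gradient fixed point to the true gradient of Theorem \ref{ent_grad_theorem}. In the $s_0$ direction both gradients vanish automatically at the semi-gradient fixed point, since only $s_0$ contributes there and that condition is weighting-independent. The aliased direction is where they can differ: the true gradient weights $s_1,s_2$ by the emphatic weights, and using the recursion together with $m(s_0)=d_\mu(s_0)$ (the terminating transitions carry $\gamma=0$) gives $m(s_1) = d_\mu(s_1) + p_0\gamma d_\mu(s_0)$ and $m(s_2) = d_\mu(s_2) + (1-p_0)\gamma d_\mu(s_0)$. The true aliased condition is $\tau\log\frac{p_1}{1-p_1} = \bar\Delta^{m}$, the same form but with $m$-weights. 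Since $\Delta_1\neq\Delta_2$ in the counterexample, $\bar\Delta^{m}=\bar\Delta^{d}$ holds if and only if the weight ratios coincide, $m(s_1)/m(s_2)=d_\mu(s_1)/d_\mu(s_2)$, which a short computation shows is equivalent to $p_0 = d_\mu(s_1)/(d_\mu(s_1)+d_\mu(s_2)) = 0.25$. Thus the semi-gradient fixed point is a true stationary point precisely when $p_0^{\mathrm{s}}(\tau)=0.25$.

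The whole argument therefore collapses to the single transcendental equation $p_0^{\mathrm{s}}(\tau) = 0.25$ in $\tau$, obtained by substituting the closed form $p_1^{\mathrm{s}}(\tau)=\sigma(\bar\Delta^{d}/\tau)$ into the explicit soft values and writing $(\qtpi(s_0,a_0)-\qtpi(s_0,a_1))/\tau = \log\tfrac{0.25}{0.75} = -\log 3$. The main obstacle is to show this equation has a unique positive root and that it equals $\tau_i \approx 0.2779$: this requires inserting the entropy-dependent expressions for $\vtpi(s_1),\vtpi(s_2)$ and establishing monotonicity (or a single sign change) of $p_0^{\mathrm{s}}(\tau)-0.25$ as $\tau$ ranges over $(0,\infty)$. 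Granting this, for every $\tau>0$ with $\tau\neq\tau_i$ we have $p_0^{\mathrm{s}}(\tau)\neq 0.25$, hence $\bar\Delta^{m}\neq\bar\Delta^{d}$ and the true aliased gradient is nonzero at the unique semi-gradient fixed point; so the semi-gradient update does not converge to a stationary point of $\tilde J_\mu$.
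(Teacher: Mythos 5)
Your proposal is correct and establishes the proposition, but it runs the paper's argument in the opposite direction, so the two are worth contrasting. The paper (Appendix \ref{suboptimality_appendix}) first proves that interior stationary points of $\tilde{J}_\mu$ exist (Proposition \ref{prop_stationary}, extending \citealp{mei2020global} to state aggregation) and then, in Lemma \ref{lemma_not_stationary}, evaluates the \emph{semi-gradient} at an arbitrary true stationary point: using the zero-gradient condition to eliminate the entropy terms, the aliased semi-gradient component reduces to a nonzero softmax factor times $0.75\bigl(\mu(a_0|s_0)-\pi(a_0|s_0;\pparams)\bigr)$, after which it rules out $\pi(a_0|s_0;\pparams)=0.25$ at true stationary points. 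You instead solve for the unique interior fixed point of the expected semi-gradient dynamics in closed form and evaluate the \emph{true} gradient there, reducing coincidence of the two zero sets to the weight-ratio condition $m(s_1)/m(s_2)=d_\mu(s_1)/d_\mu(s_2)$, i.e.\ $p_0=\mu(a_0|s_0)=0.25$. Both directions establish the disjointness that the proposition needs (a convergence point of the semi-gradient iteration must, by continuity, be a fixed point of the expected update), and both collapse to the identical transcendental system: since the entropy terms cancel in $\vtpi(s_1)-\vtpi(s_2)=3p_1-1$, your condition $p_0^{\mathrm{s}}(\tau)=0.25$ is exactly the paper's Equation \ref{eq:first_condition}, $\tau\log 3+3p_1-1=0$, and your $p_1^{\mathrm{s}}=\sigma(\bar\Delta^{d}/\tau)$ with $\bar\Delta^{d}=-0.25$ is exactly the paper's second condition $-0.25+\tau\log\frac{1-p_1}{p_1}=0$. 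Two remarks. First, the ``main obstacle'' you defer is smaller than you fear: you need neither monotonicity of $p_0^{\mathrm{s}}(\tau)-0.25$ nor an abstract uniqueness argument, only that the two scalar equations above have no common solution for $\tau\neq\tau_i$---precisely the numerically verified intersection fact that the paper itself asserts without further proof, so granting it puts you at the paper's level of rigor. Second, each route buys something: yours yields uniqueness and a closed form for the interior semi-gradient fixed point, which speaks directly to where the semi-gradient dynamics can converge (and note your appeal to Proposition \ref{prop_stationary} is unnecessary for interiority, since $\sigma(\bar\Delta^{d}/\tau)$ is interior for every $\tau>0$), while the paper's existence result for true stationary points is what makes the counterexample non-vacuous---it shows there genuinely is a stationary point that the true-gradient method reaches and the semi-gradient method misses.
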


\subsection{The Semi-Gradient as a True Gradient with a Different Weighting}

In this section, we show that the semi-gradient can locally be seen as a gradient update, with a different state weighting in the objective. Locally for the current weights $\pparams_t$ with corresponding policy $\pi$, that state weighting is $\dvec = \dmu (\eye - \Ppig)$, with objective $J_d(\pparams) \defeq \sum_{s \in \States} d(s) v_{\pi_\pparams}(s)$. The resulting $\dvec$ may not be a distribution. In fact, it may not even be positive! If it is negative, the objective tells the agent to minimize the value for that state. This is precisely what occurs in the above counterexample. 

First, to see why this is the implicit state weighting, notice that for the semi-gradient update to be a gradient update, the weighting $d_\mu$ used in the update has to correspond to an emphatic weighting $m_d$ for some state weighting $d$ in the objective. In other words, $d_\mu(s) = m_d(s)$ where $\emvec_d = \dvec (\eye - \Ppig)^\inv$. This requirement implies that $\dvec = \emvec_d (\eye - \Ppig)$. Therefore, by using the weighting $d_\mu$ in the expected gradient, the semi-gradient update locally around $\pparams_t$ can be seen as a gradient update on $J_d$. This weighting $d$ actually changes as $\pparams_t$ changes. In fact, we know that the semi-gradient update cannot be seen as the gradient of a fixed objective function, from our counterexample for OffPAC and the result for the on-policy setting showing that omitting the discount factor results in an update that is not a gradient \citep{nota2020isthe}. However, even if this interpretation is only valid locally at each update, such a negative weighting can be problematic. 

We can compute the implicit weighting $d$, in our counter-example. Let $b = \mu(a_0 | s_0)$ and $p = \pi(a_0 | s_0)$. Then we know that $d_\mu(s_0) = 0.5$, $d_\mu(s_1) = 0.5 b = 0.125$ and $d_\mu(s_0) = 0.5 (1-b) = 0.375$. Further, we know that 
$m(s_0) = d_\mu(s_0)$ and 
\begin{align*}
m(s_1) &= d_\mu(s_1) + \pi(a_0 | s_0) m(s_0) = b d_\mu(s_0) + p m(s_0) =  d_\mu(s_0)(b + p) = 0.5 (b+p)\\
m(s_2) &= d_\mu(s_2) + (1-p) m(s_0) =  d_\mu(s_0)((1-b) + (1- p)) = 0.5 (2 - b - p).
\end{align*}
The implicit $d$ for the semi-gradient update, locally around this $\pi$, has $\tilde{m}(s) = d_\mu(s)$ where $\tilde{m}(s_0) = d(s_0)$ and so $d(s_0) = d_\mu(s_0) = 0.5$ and
\begin{align*}
\tilde{m}(s_1) &= d(s_1) + p \tilde{m}(s_0) = d(s_1) + p d_\mu(s_0) = d(s_1) + 0.5 p\\
\tilde{m}(s_2) &= d(s_2) + (1-p) \tilde{m}(s_0) =  d(s_2) + (1-p) d_\mu(s_0) =  d(s_2) + 0.5 (1-p) .
\end{align*}
Using $\tilde{m}(s_1) = d_\mu(s_1) = 0.5b$ and $\tilde{m}(s_2) = d_\mu(s_2) = 0.5(1-b)$, we get that
\begin{align*}
d(s_1) &= \tilde{m}(s_1)- 0.5 p = 0.5 (b-p)\\
d(s_2) &= \tilde{m}(s_2) - 0.5 (1-p) = 0.5 ((1-b) - (1-p)) = 0.5 (p-b).
\end{align*}
If $b > p$, then $d(s_2) < 0$; if $b < p$, then $d(s_1) < 0$; and if $b=p$ then both are zero. In our counterexample, we set $b < p$, making the update move away from increasing the value in $s_1$, namely preferring to increase the value in $s_2$ and decrease the value in $s_1$. The iterative updates maintain the condition $b < p$, even as the policy changes---which changes $p$---so the implicit weighting systematically causes convergence to a suboptimal policy. 

We note that the implicit weighting is independent of the representation. However, we know that the semi-gradient update converges to a stationary point of the excursions objective for the tabular setting. This might seem odd, given that the implicit weighting is negative for a state in this counterexample. However, this is not contradictory. Recall in Section \ref{grad_theorem} we discussed that in the tabular setting, the condition for the stationary point is that the gradient must be zero at every state, independently. Summing up zeros, even when weighting those zeros with a negative weighting, still results in zero.

There has been other work that has noted that optimizing under a different state weighting can still be effective. Proposition 6 of \citet{ghosh2020operator} shows that a lower bound can be optimized, using data generated under the behaviour policy. The action-values of the behaviour policy is used, and only the log likelihood terms of the target policy are differentiated. This lower bound, however, is only an approximation to the true objective. Our result differs, in that it highlights that equivalent solutions can be obtained, even under different state weightings. 

This view suggests a direction to reduce variance in the ACE updates: consider appropriate implicit weightings $d$, that allow for low variance emphatic updates. One potentially promising approach is to only consider emphatic weightings a small number of steps back-in-time. Another is to err on the side of smaller $\eta$ in the emphatic trace in ACE, knowing that the implicit weighting $d$ may remain reasonable for a broad range of $\eta$. 

\section{Experiments: Studying Properties of ACE}\label{sec:experiments}

In this section we investigate key properties of the ACE algorithm empirically.\footnote{Code is available at: \url{https://github.com/gravesec/actor-critic-with-emphatic-weightings}.}
Specifically, we study the effects of the trade-off parameter and the choice of estimator for the emphatic weightings on the learned policy, as these choices are central to the ACE algorithm.
First, we revisit the simple counterexample from Section \ref{sec_counter} to examine how the trade-off parameter can affect the learned policy.
Next, we illustrate some issues with using the emphatic trace in the ACE algorithm by testing it on a modified version of the counterexample.
We then move to two classic control environments to study the two estimators discussed in Section \ref{sec_estimating} and their effects on the learned policy.
Finally, we test several variants of ACE on a challenging environment designed to illustrate the issues associated with both estimators.
Please see Table \ref{tab:algorithm_descriptions} in Appendix \ref{app:algorithm_descriptions} for a description of each of the algorithms compared.

\subsection{The Impact of the Trade-Off Parameter}\label{lambda_a_experiment}

The parameter $\lambdaa$ can be interpreted as trading off bias and variance.
For $\lambdaa = 0$, the bias can be significant, as shown in the previous section.
A natural question to ask is how the bias changes as $\lambdaa$ ranges from 0 to 1.

To answer this question, we repeated the experiment in Section \ref{sec_counter}, but this time with $\lambdaa$ taking values in $\{0, 0.25, 0.5, 0.75, 1\}$ and the actor's step-size taking values in $\{0.01, 0.02, 0.05, 0.1, 0.2, 0.5, 1\}$, with the best-performing step-size (by total area under the learning curve, averaged over 30 runs) for each value of $\lambdaa$ used in Figures \ref{fig:param-obj} and \ref{fig:param-a0s1}.
To highlight the rate of learning, the actor's policy was initialized to take each action with equal probability.
The results are displayed in Figure \ref{fig:param} with shaded regions depicting standard error.

Figure \ref{fig:param-sens} shows the performance of ACE with each value of $\lambdaa$ over the range of step-sizes tested.
ACE performed well for a range of step-sizes, and even small values of $\lambdaa$ significantly improved over $\lambdaa = 0$ (OffPAC).
However, the performance of ACE with $\lambdaa > 0$ was more sensitive to the choice of step-size, with performance decreasing as the step-size grew large, while the performance of $\lambdaa=0$ was lower overall but remained steady with larger step-sizes.

Figure \ref{fig:param-obj} plots the learning curves for ACE with each value of $\lambdaa$.
For $\lambdaa = 0$ (OffPAC), the algorithm decreased the objective function during learning to get to a suboptimal fixed point, while $\lambdaa > 0$ always improved the objective function relative to the starting point.
For a surprisingly small choice of $\lambdaa = 0.5$, the actor converged to the optimal solution, and even $\lambdaa = 0.25$ produced a much more reasonable solution than $\lambdaa = 0$.

Figure \ref{fig:param-a0s1} shows the probability of taking the optimal action in the aliased states.
The optimal policy is to take $a_0$ in the aliased states with probability 1.
ACE with $\lambdaa=0$ (OffPAC) quickly converged to the suboptimal solution of choosing the best action for $s_2$ instead of $s_1$. Even with $\lambdaa$ just a bit higher than 0, convergence is to a more reasonable solution, choosing the optimal action the majority of the time.

\begin{figure*}[ht]
\centering
\begin{subfigure}[b]{\figwidthfour}
  \includegraphics[width=\textwidth]{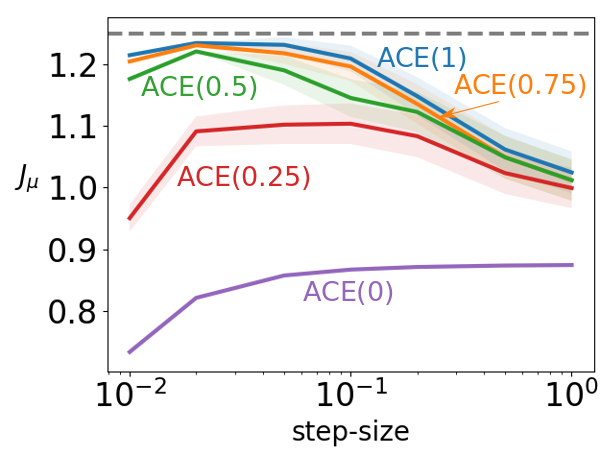}
  \caption{Actor step-size sensitivity}
  \label{fig:param-sens}
\end{subfigure}
\begin{subfigure}[b]{\figwidthfour}
  \includegraphics[width=\textwidth]{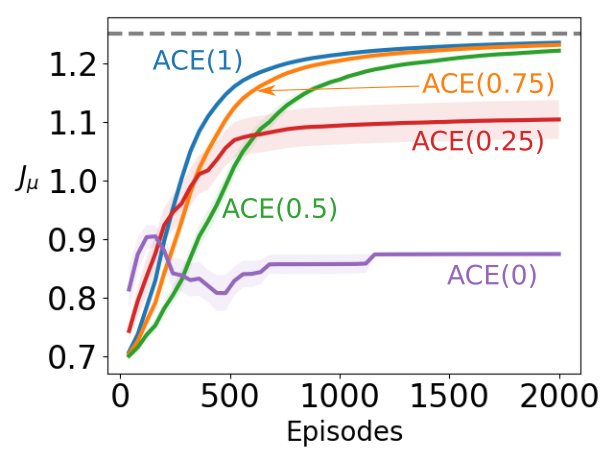}
  \caption{Learning curves }
  \label{fig:param-obj}
\end{subfigure}   
\begin{subfigure}[b]{\figwidthfour}
  \includegraphics[width=\textwidth]{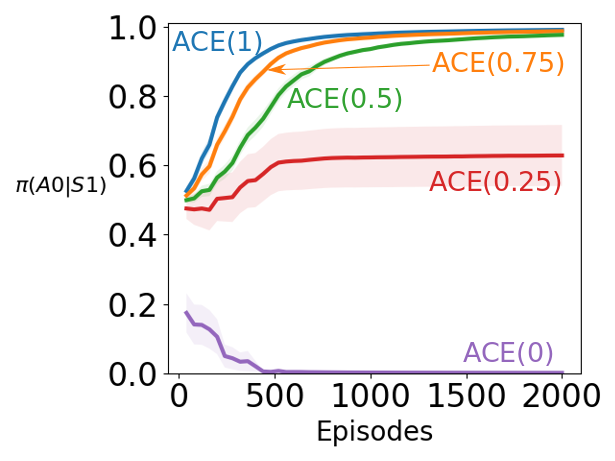}
  \caption{Optimal action probability}
  \label{fig:param-a0s1}
\end{subfigure} 
\caption{Performance of ACE with different values of $\lambdaa$ in the counterexample.}
\label{fig:param}
\end{figure*}

To determine whether the $\lambdaa$ parameter has similar effects when ACE is used with a learned critic, we repeated the previous experiment but this time using value estimates from a critic trained with GTD($\lambda$) \citep{maei2011gradient}.
We checked all combinations of the following critic step-sizes: $\{10^{-5}, 10^{-4}, 10^{-3}, 10^{-2}, 10^{-1}, 10^{0}\}$, the following critic trace decay rates: $\{0, 0.5, 1.0\}$, and the following actor step-sizes: $\{10^{-10}, 10^{-8}, 10^{-6}, 10^{-4}, 10^{-2}\}$, and used the best-performing combination (by area under the learning curve) for each value of $\lambdaa$ in Figures \ref{fig:gtd-param-obj} and \ref{fig:gtd-param-a0s1}.
We averaged the results over 10 runs, and plotted the standard error as shaded regions.
Figure \ref{fig:gtd-param} shows that, as before, even relatively small values of $\lambdaa$ can achieve close to the optimal solution.
However, $\lambdaa=0$ (OffPAC) still finds a suboptimal policy.
Overall, the outcomes are similar to the previous experiment, although noisier due to the use of a learned critic rather than the true value function.

\begin{figure*}[ht!]
   	\centering
   	\begin{subfigure}[b]{\figwidthfour}
   		\includegraphics[width=\textwidth]{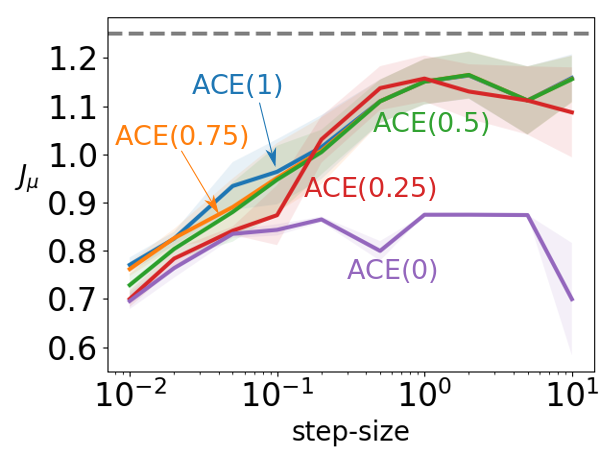}
   		\caption{Actor step-size sensitivity}
   		\label{fig:gtd-param-sens}
   	\end{subfigure}
   	\begin{subfigure}[b]{\figwidthfour}
   		\includegraphics[width=\textwidth]{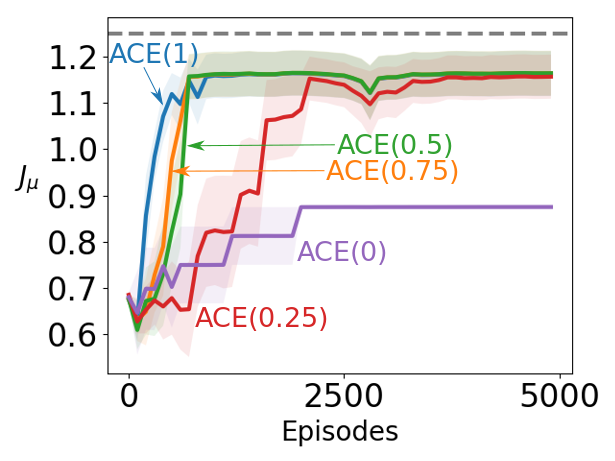}
   		\caption{Learning curves}
   		\label{fig:gtd-param-obj}
   	\end{subfigure}   
   	\begin{subfigure}[b]{\figwidthfour}
   		\includegraphics[width=\textwidth]{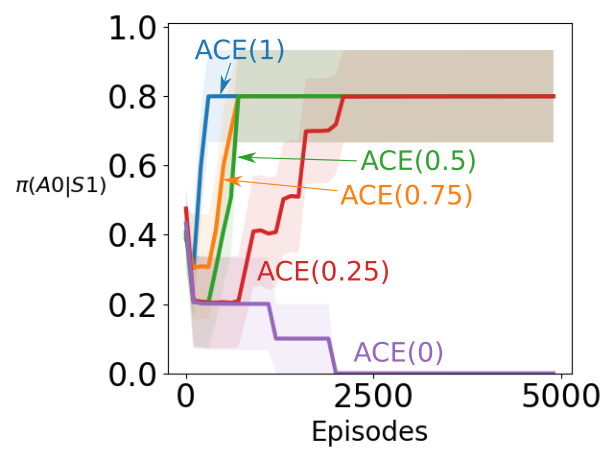}
   		\caption{Optimal action probability}
   		\label{fig:gtd-param-a0s1}
   	\end{subfigure} 
   	\caption{Performance of ACE with a GTD($\lambda$) critic and different values of $\lambdaa$ in the counterexample.}
   	\label{fig:gtd-param}
   	\vspace{-0.3cm}
\end{figure*}

So far we have only considered ACE with a discrete action policy parameterization.
However, an appealing property of actor-critic algorithms is their ability to naturally handle continuous action spaces.
To determine if the findings above generalize to continuous action policy parameterizations, we created an environment similar to Figure \ref{fig:counterexample}, but with one continuous unbounded action.
Taking action with value $a$ at $s_0$ will result in a transition to $s_1$ with probability $1-\sigma(a)$ and a transition to $s_2$ with probability $\sigma(a)$, where $\sigma$ denotes the logistic sigmoid function.
For all actions from $s_0$, the reward is zero.
From $s_1$ and $s_2$, the agent can only transition to the terminal state, with reward $2\sigma(-a)$ and $\sigma(a)$ respectively.
The behaviour policy takes actions drawn from a Gaussian distribution with mean $1.0$ and variance $1.0$. 

Because the environment has continuous actions, we can include both stochastic and deterministic policies, and so can include DPG in the comparison.
DPG is built on the semi-gradient, like OffPAC \citep{silver2014deterministic}.
We include True-DPG with Emphatic weightings (True-DPGE), which uses the true emphatic weightings rather than estimated ones to avoid the issue of estimating the emphatic weightings for a deterministic target policy, and focus the investigation on whether DPG converges to a suboptimal solution in this setting.
Estimation of the emphatic weightings for a deterministic target policy is left for future work.
The stochastic actor in ACE has a linear output unit and a softplus output unit to represent the mean and the standard deviation of a Gaussian distribution.
All actors are initialized with zero weights.

Figure \ref{fig:cont-param} summarizes the results.
The first observation is that DPG demonstrates suboptimal behaviour similar to OffPAC.
As training goes on, DPG prefers to take positive actions in all states, because $s_2$ is updated more often.
This problem goes away in True-DPGE.
The emphatic weightings emphasize updates in $s_1$ and, thus, the actor gradually prefers negative actions and surpasses DPG in performance. Similarly, True-ACE learns to take negative actions but, being a stochastic policy, it cannot achieve True-DPGE's performance on this domain. ACE with different $\lambdaa$ values, however, cannot outperform DPG, and this result suggests that an alternative to importance sampling ratios is needed to effectively extend ACE to continuous actions.

\begin{figure*}[ht!]
  \centering
  \begin{subfigure}[b]{\figwidthfour}
    \includegraphics[width=\textwidth]{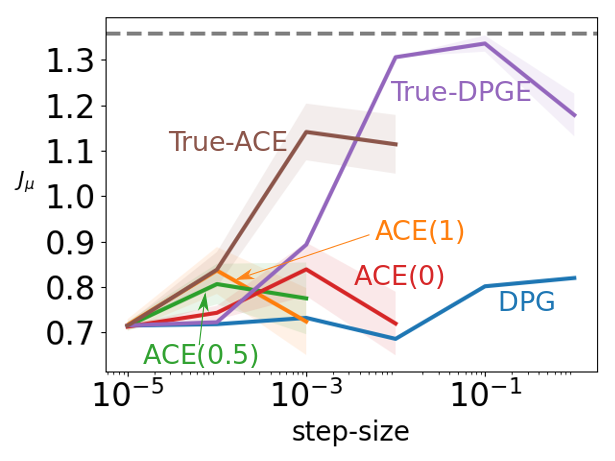}
    \caption{Stepsize sensitivity}
    \label{fig:cont-param-sens}
  \end{subfigure}
  \begin{subfigure}[b]{\figwidthfour}
    \includegraphics[width=\textwidth]{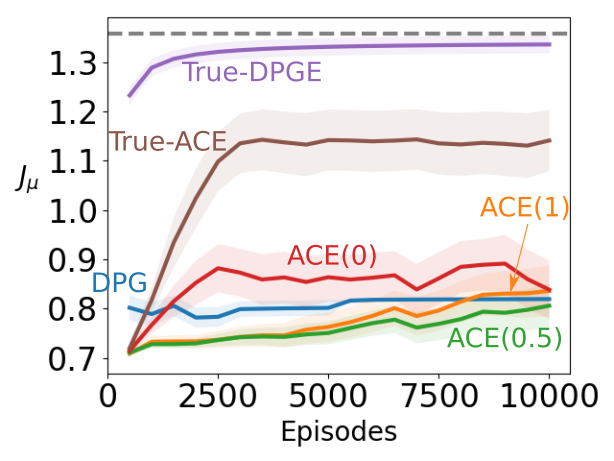}
    \caption{Learning curves}
    \label{fig:cont-param-obj}
  \end{subfigure}   
  \begin{subfigure}[b]{\figwidthfour}
    \includegraphics[width=\textwidth]{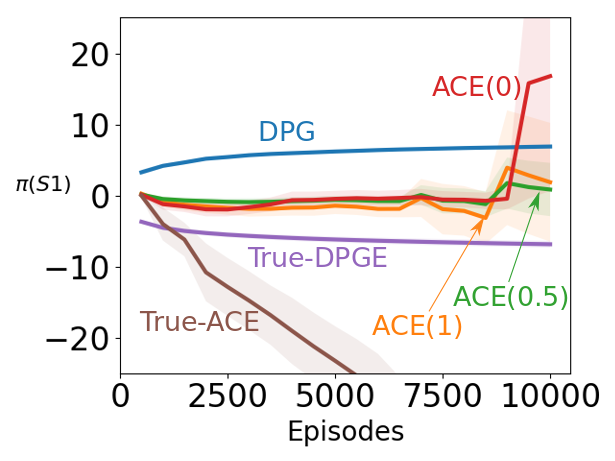}
    \caption{Mean action}
    \label{fig:cont-param-a0s1}
  \end{subfigure} 
  \caption{Performance of ACE with different values of $\lambdaa$, True-ACE, DPG, and True-DPGE on the continuous action MDP. The results are averaged over 30 runs. For continuous actions, the methods have even more difficulty getting to the optimal solutions, given by True-DPGE and True-ACE, though the action selection graphs suggest that ACE for higher $\lambdaa$ is staying nearer the optimal action selection than ACE(0) and DPG.}
  \label{fig:cont-param}
\end{figure*}

\subsection{Challenges in Estimating the Emphatic Weightings}\label{sec_challenges}

Up to this point we have been using the emphatic trace originally proposed by \citet{sutton2016anemphatic} to estimate the emphatic weightings.
As discussed in Section \ref{sec_eac}, there can be multiple sources of inaccuracy from using this Monte Carlo estimate in an actor-critic framework.

However, it is unclear how the issues affecting the emphatic trace manifest in practice, and hence whether introducing a parametrized function to directly estimate the emphatic weightings is worth the added bias---especially if the representation is poor.
To study the effects of this choice of estimator on the ACE algorithm, we conducted a series of experiments starting with a modified version of the counterexample in Figure \ref{fig:counterexample}, moving to two classic control environments, and ending with a challenging environment designed to illustrate the issues associated with both estimators.

\begin{figure*}[ht!]
  \centering
  \includegraphics[width=0.6 \textwidth]{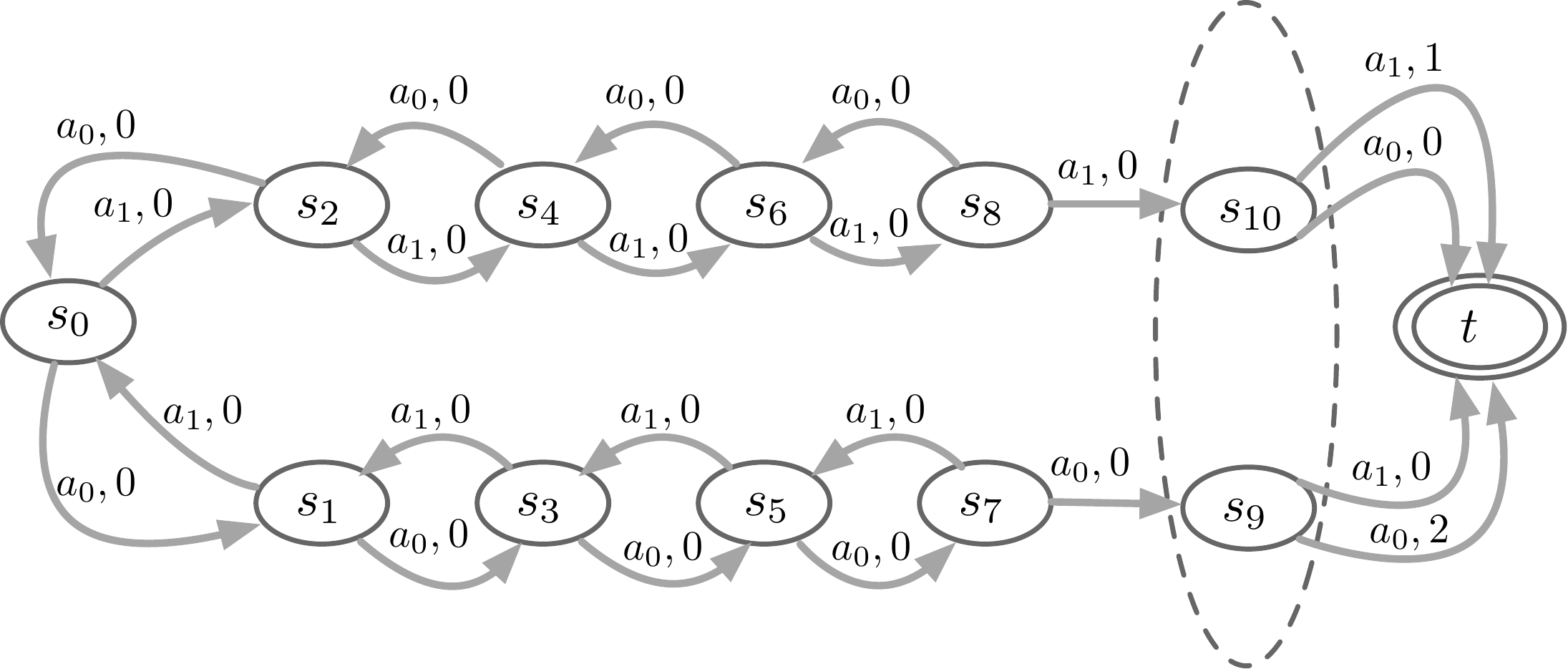}
  \caption{An 11-state MDP that makes estimating the emphatic weightings with the emphatic trace more difficult.}
  \label{fig:long-counterexample}
  \vspace{-0.3cm}
\end{figure*}

The first environment, shown in Figure \ref{fig:long-counterexample}, is an extended version of the counterexample with two long chains before the aliased states.
Like the original counterexample, the behaviour policy takes $a_0$ with probability $0.25$ and $a_1$ with probability $0.75$ in all non-terminal states, and the interest $i(s)$ is set to $1$ for all states.
Each state is represented by a unique feature vector except states $s_9$ and $s_{10}$, which are aliased and appear the same to the actor.
The addition of the new states makes trajectories considerably longer, which may exacerbate the issues with the emphatic trace.

We repeated the experiment from Figure \ref{fig:param} on the long counterexample.
The following actor step-sizes were tested: $\{5 \cdot 10^{-5}, 10^{-4}, 2 \cdot 10^{-4}, 5 \cdot 10^{-4}, 10^{-3}, 2 \cdot 10^{-3}, 5 \cdot 10^{-3}, 10^{-2}\}$,
with the best-performing value (by total area under the learning curve, averaged over 10 runs) for each value of $\lambdaa$ used in Figures \ref{fig:long-param-obj} and \ref{fig:long-param-a0s1}.
The actor was again initialized to take each action with equal probability, and the true state values were again used in the updates in order to isolate the effects of the emphatic trace.

We also trained an actor called True-ACE that uses the true emphatic weightings for the current target policy and behaviour policy, computed at each timestep.
The performance of True-ACE is included here for the sake of comparison, as computing the exact emphatic weightings is not generally possible in an unknown environment.

The results in Figure \ref{fig:long-param} show that, even though performance improves as $\lambdaa$ is increased, there is a significant gap between ACE with $\lambdaa=1$ and True-ACE.
Unlike Figure \ref{fig:param}, the methods have more difficulty reaching the optimal solution, although ACE with larger $\lambdaa$ does still find a significantly better solution than $\lambdaa = 0$.
Additionally, values of $\lambdaa$ greater than $0$ result in high-variance estimates of the emphatic weightings, which lead to more variable performance, as shown by the larger shaded regions representing standard error.
These results overall show the inaccuracies pointed out in Section \ref{sec_eac} indeed disturb the updates in long trajectories.

\begin{figure*}[ht!]
	\centering
	\begin{subfigure}[b]{\figwidthfour}
		\includegraphics[width=\textwidth]{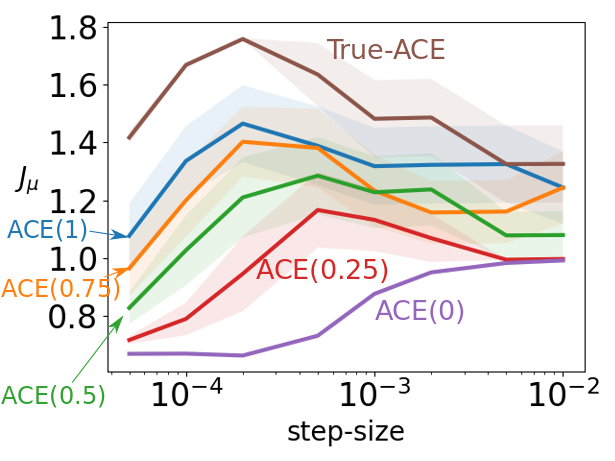}
		\caption{Actor step-size sensitivity}
		\label{fig:long-param-sens}
	\end{subfigure}
	\begin{subfigure}[b]{\figwidthfour}
		\includegraphics[width=\textwidth]{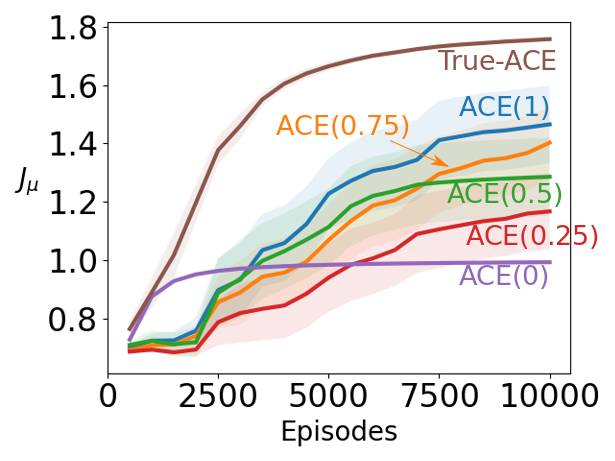}
		\caption{Learning curves }
		\label{fig:long-param-obj}
	\end{subfigure}   
	\begin{subfigure}[b]{\figwidthfour}
		\includegraphics[width=\textwidth]{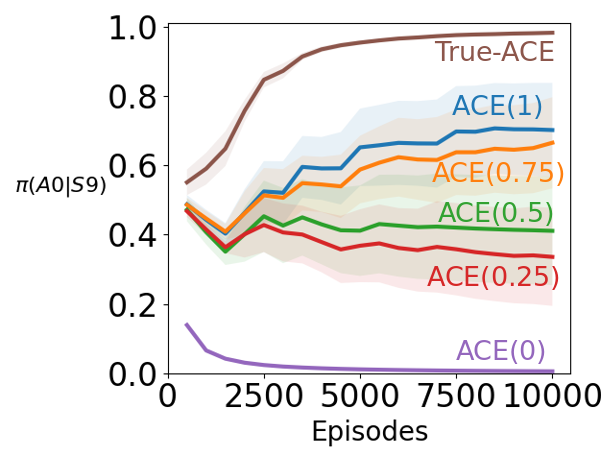}
		\caption{Optimal action probability}
		\label{fig:long-param-a0s1}
	\end{subfigure} 
	\caption{Performance of ACE with different values of $\lambdaa$ on the 11-state MDP.}
	\label{fig:long-param}
	\vspace{-0.3cm}
\end{figure*}

\subsection{Estimating the Emphatic Weightings in Classic Control Environments}\label{sec:classic_control}

The results of the previous experiments suggest that using the emphatic trace to estimate the emphatic weightings can significantly impact the performance of ACE.
To better understand how the issues with the emphatic trace affect the learning process beyond our counterexample, we tested several variants of ACE on off-policy versions of two classic control environments: Puddle World \citep{degris2012offpolicy} and Mountain Car \citep{moore1990efficient}.

As an off-policy Monte Carlo estimator of the emphatic weightings, the emphatic trace can yield extremely high-variance estimates which can interfere with learning \citep{ghiassian2018online}.
To see how the variance of the emphatic trace affects ACE, we tested a variant called ACE-direct that uses the one-step temporal-difference update from Section \ref{sec_direct_f} to estimate the emphatic weightings.
Temporal-difference methods often have lower variance than Monte Carlo methods---at the cost of introducing bias---which makes using a TD-style update to estimate the emphatic weightings an appealing alternative to the emphatic trace \citep{sutton2018reinforcement}.

As discussed in Section \ref{sec_direct_f}, the emphatic trace can also be biased when used in an actor-critic algorithm where the target policy is changing.
To determine how detrimental this bias is to the performance of ACE, we tested a variant called ACE-ideal where all importance sampling ratios in the emphatic trace are re-computed at each time step using the current target policy, yielding an unbiased Monte Carlo estimate of the emphatic weighting for the current state.
ACE-ideal is not a practical algorithm, as the computation and memory required grow with the number of time steps, but it allows us to isolate the effects of the bias introduced by using the emphatic trace with a changing target policy.

Other baselines for comparison included OffPAC, and ACE using the emphatic trace (referred to as ACE-trace).
To determine the effects of different choices of critic, we included versions of each algorithm using an ETD critic \citep{sutton2016anemphatic} and a TDRC critic \citep{ghiassian2020gradient}.
We also included versions of each algorithm using the uniform interest function (i.e., $i=1$ for all time steps), as well as the episodic interest function from Section \ref{sec_episodic_interest}---with the exception of OffPAC.
OffPAC scales policy updates using only the interest for the current time step, which when combined with the episodic interest function leads to a single policy update on the first time step followed by no updates thereafter.

To get a clear picture of the performance of each algorithm, we conducted a grid search on the free parameters of each algorithm.
For the step size parameter of the actor, critic, and the direct estimator of emphatic weightings, we tested values of the form $\frac{1}{2}^{i}$ where $i$ ranged from 0 to 15.
For the trace decay rate of the critic, we tested values of the form $1 - \frac{1}{2}^{j}$ where $j$ ranged from 0 to 6.
The discount factor was .95.
We sought to establish the performance of each variant for $\lambdaa=1$, as they all reduce to OffPAC when $\lambdaa=0$.

Each combination of parameters for each algorithm was run on 5 different trajectories generated by a fixed behaviour policy interacting with the environment for 100,000 time steps.
The learned policies were saved every 1,000 time steps and evaluated 50 times using both the episodic and excursions objective functions from Section \ref{sec:obj_off_policy}.
For the episodic objective function, the policies were evaluated by creating 50 different instances of the environment and executing the target policy from the starting state until termination or 1000 time steps had elapsed.
The excursions objective function was evaluated similarly, but with the environment's starting state drawn from the behaviour policy's steady state distribution, chosen by running the behaviour policy for 50,000 time steps and saving every thousandth state.
The results were averaged, and the best-performing combinations (by area under the learning curve for the appropriate objective function) were re-run on enough different trajectories to reduce the standard error to an acceptable level (100 runs for Puddle World, 30 runs for Mountain Car).
The initial parameter sweep was intended to find good parameter settings for each method at a reasonable computational cost, and not necessarily the absolute best performance.

\subsubsection{Puddle World}
The Puddle World environment is a 2-dimensional continuous gridworld containing a start location, goal location, and puddles through which it is costly to move.
While it first appeared in \citet{boyan1995generalization}, we use the version from \citet{degris2012offpolicy} that includes an additional puddle near the goal state which makes the task more difficult.\footnote{Please see \citet{degris2012offpolicy} for a picture of the Puddle World environment.}
The behaviour policy took the North, East, South, and West actions with probabilities .45, .45, .05, and .05 respectively.
The observations were tile coded with a fairly low-resolution tile coder (4 tilings of $2 \times 2$ tiles plus a bias unit) to generate feature vectors with a large degree of generalization.

Figure \ref{fig:pw} presents the results for the Puddle World environment.
The left-hand column shows learning curves, while the right-hand column contains sensitivity analyses for the actor's step size.
The first four plots (figures \ref{fig:pw_exc_etd}, \ref{fig:pw_exc_etd_sens}, \ref{fig:pw_epi_etd}, and \ref{fig:pw_epi_etd_sens}) show results when using ETD as the critic, while the last four plots (figures \ref{fig:pw_exc_tdrc}, \ref{fig:pw_exc_tdrc_sens}, \ref{fig:pw_epi_tdrc}, and \ref{fig:pw_epi_tdrc_sens}) show results for a TDRC critic.
The first and third rows show results using the excursions objective function, and the second and fourth rows show results for the episodic objective function.

\begin{figure*}[hp!]
  \vspace{-0.3cm}
  \centering
  \begin{subfigure}[b]{0.45\textwidth}
    \centering
    \includegraphics[width=\textwidth]{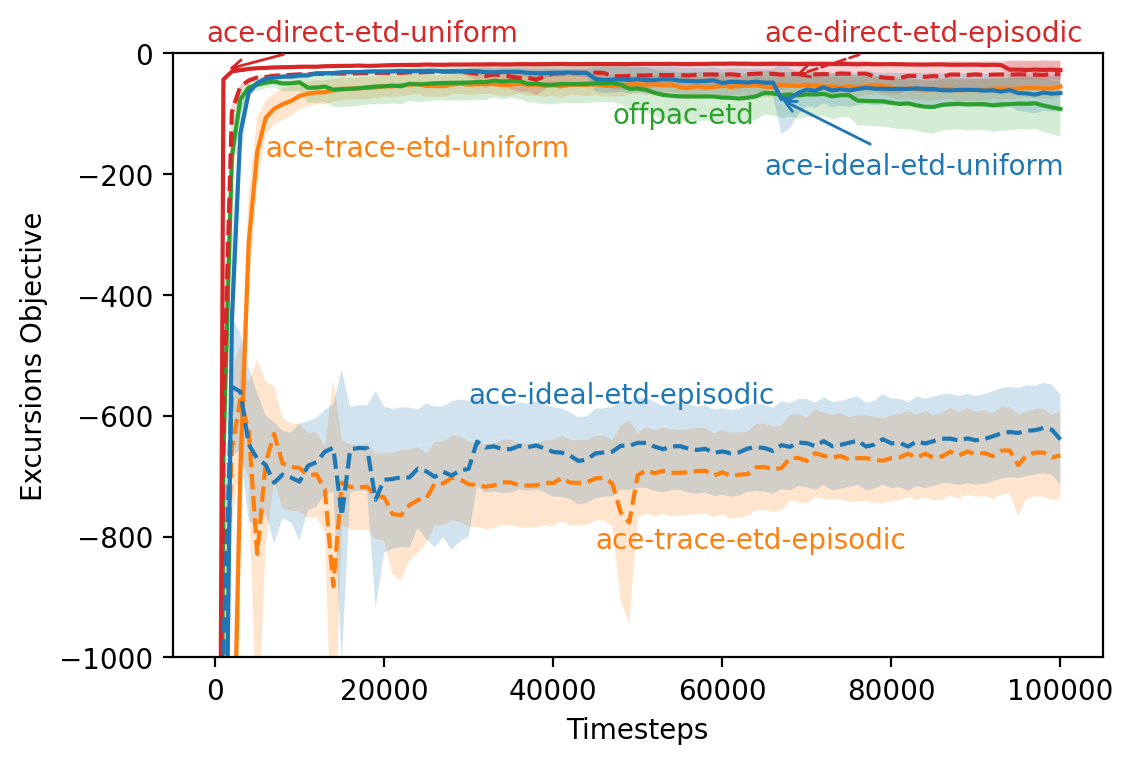}
    \caption{Excursions learning curves (ETD)}
    \label{fig:pw_exc_etd}
  \end{subfigure}
  \hfill
  \begin{subfigure}[b]{0.45\textwidth}
    \centering
    \includegraphics[width=\textwidth]{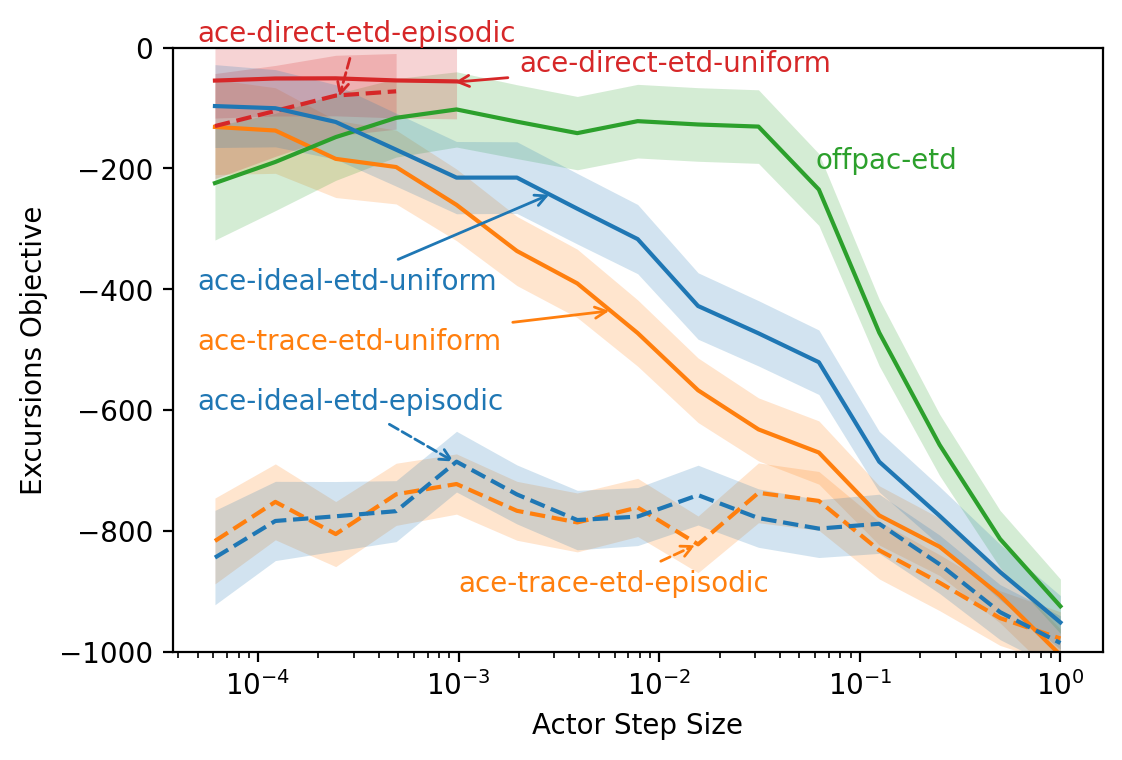}
    \caption{Excursions step-size sensitivity (ETD)}
    \label{fig:pw_exc_etd_sens}
  \end{subfigure}
  \begin{subfigure}[b]{0.45\textwidth}
    \centering
    \includegraphics[width=\textwidth]{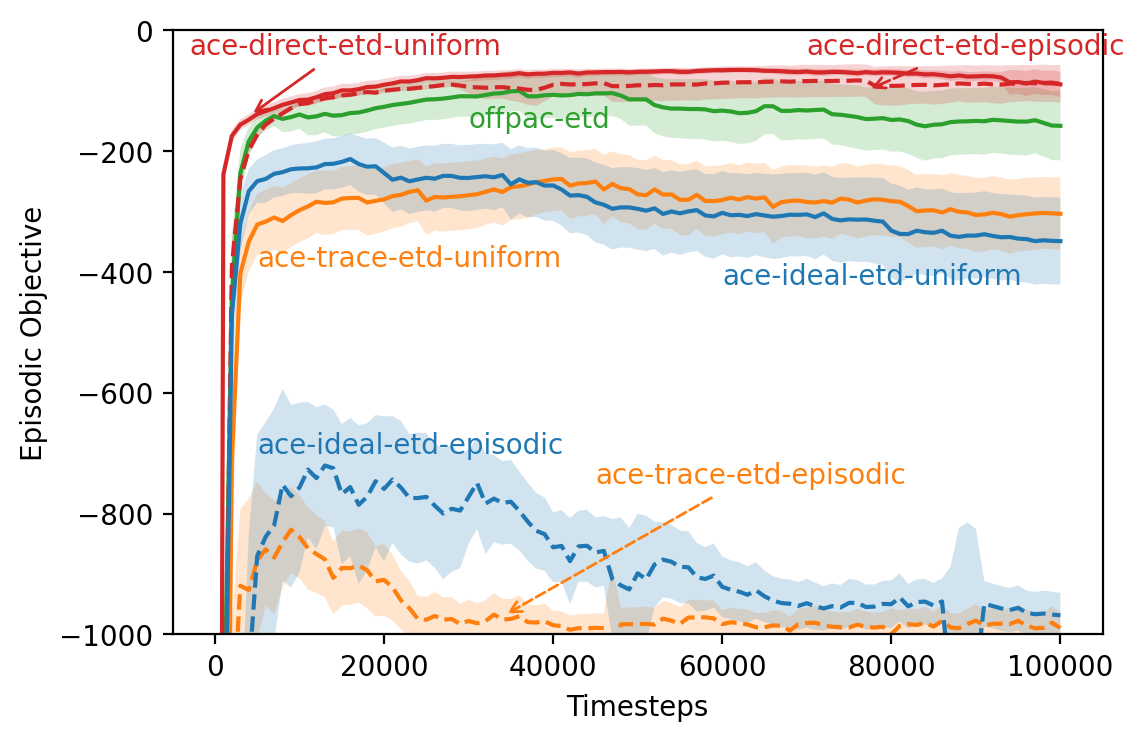}
    \caption{Episodic learning curves (ETD)}
    \label{fig:pw_epi_etd}
  \end{subfigure}
  \hfill
  \begin{subfigure}[b]{0.45\textwidth}
    \centering
    \includegraphics[width=\textwidth]{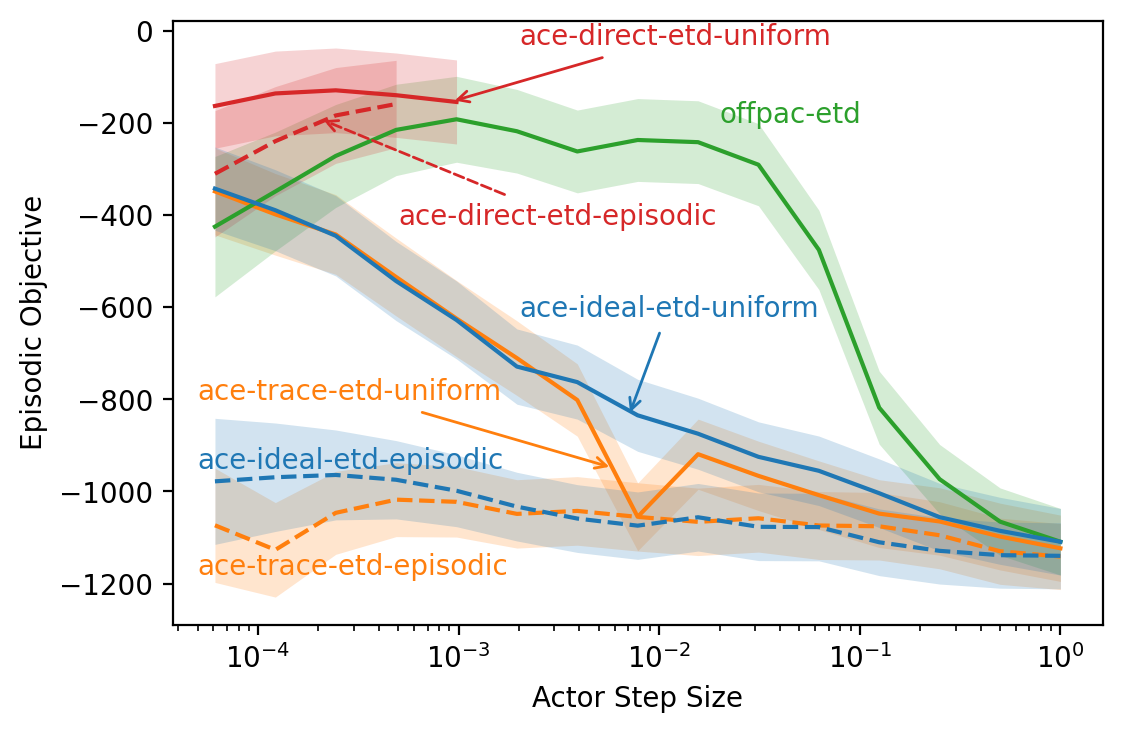}
    \caption{Episodic step-size sensitivity (ETD)}
    \label{fig:pw_epi_etd_sens}
  \end{subfigure}
  \begin{subfigure}[b]{0.45\textwidth}
    \centering
    \includegraphics[width=\textwidth]{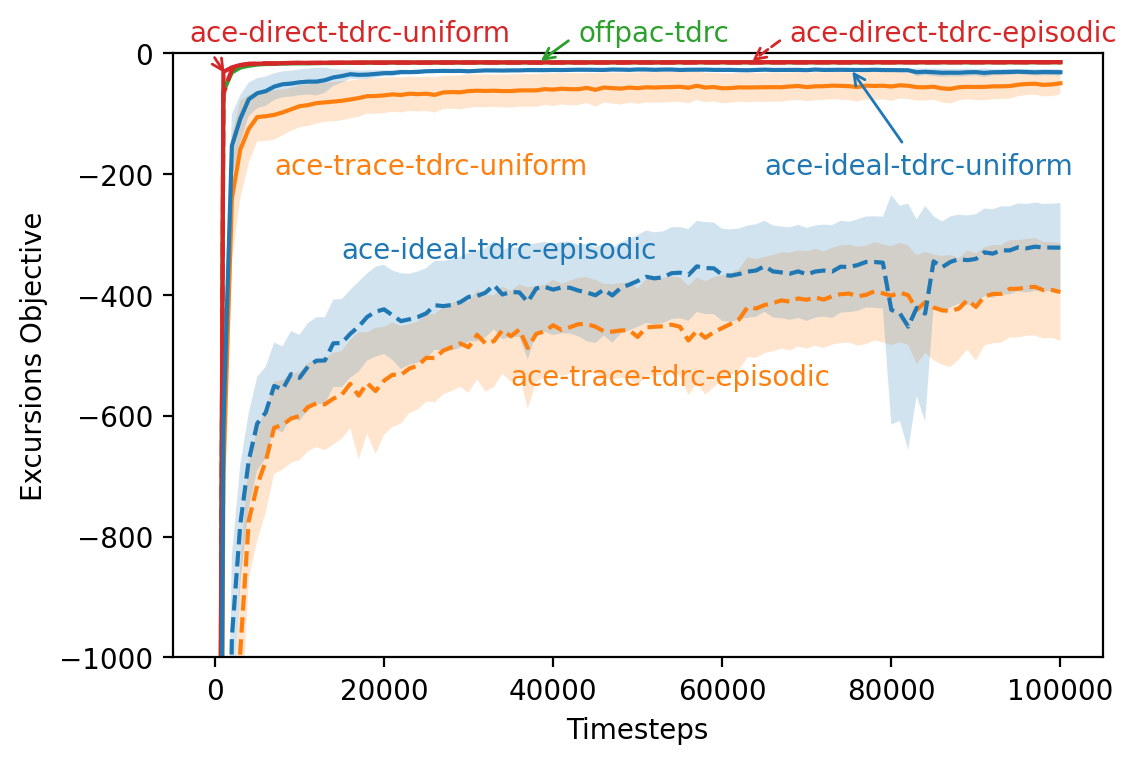}
    \caption{Excursions learning curves (TDRC)}
    \label{fig:pw_exc_tdrc}
  \end{subfigure}
  \hfill
  \begin{subfigure}[b]{0.45\textwidth}
    \centering
    \includegraphics[width=\textwidth]{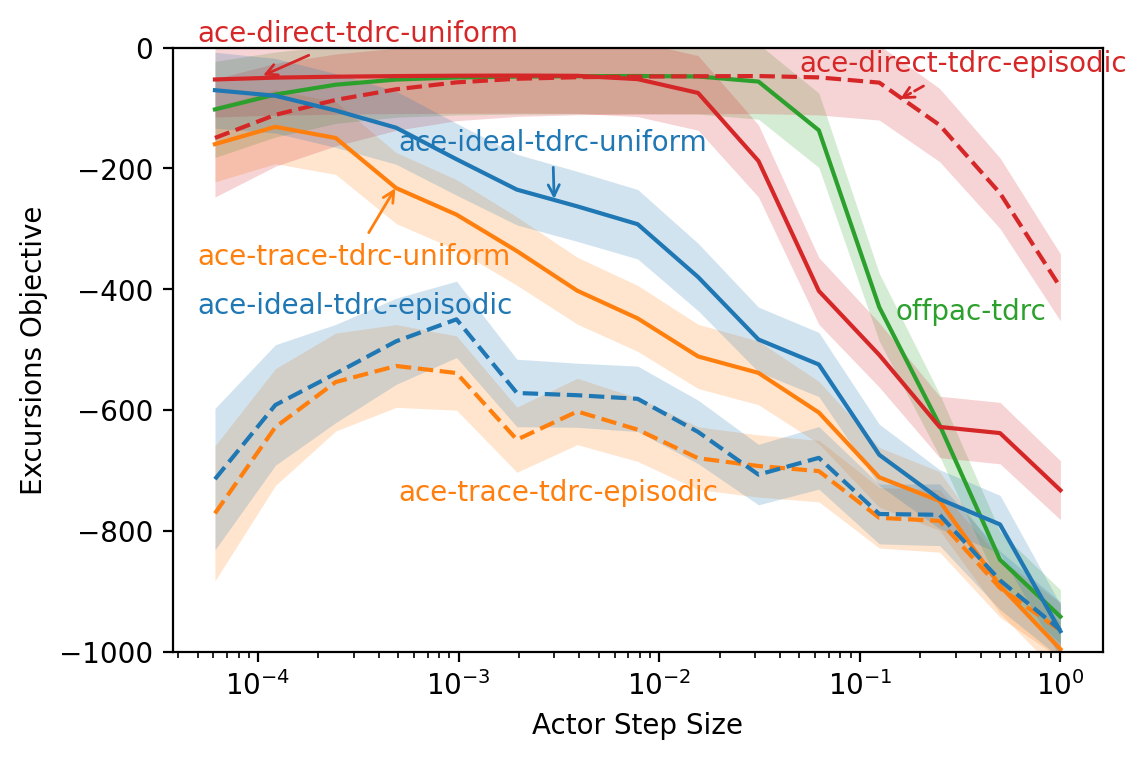}
    \caption{Excursions step-size sensitivity (TDRC)}
    \label{fig:pw_exc_tdrc_sens}
  \end{subfigure}
  \begin{subfigure}[b]{0.45\textwidth}
    \centering
    \includegraphics[width=\textwidth]{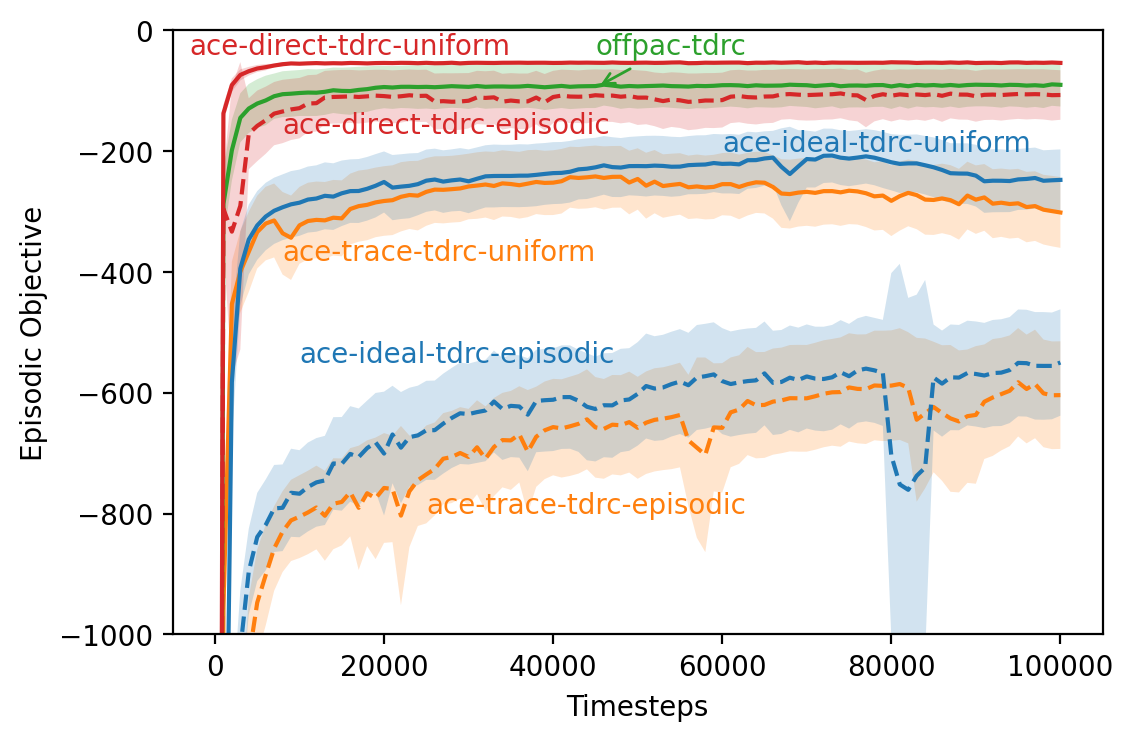}
    \caption{Episodic learning curves (TDRC)}
    \label{fig:pw_epi_tdrc}
  \end{subfigure}
  \hfill
  \begin{subfigure}[b]{0.45\textwidth}
    \centering
    \includegraphics[width=\textwidth]{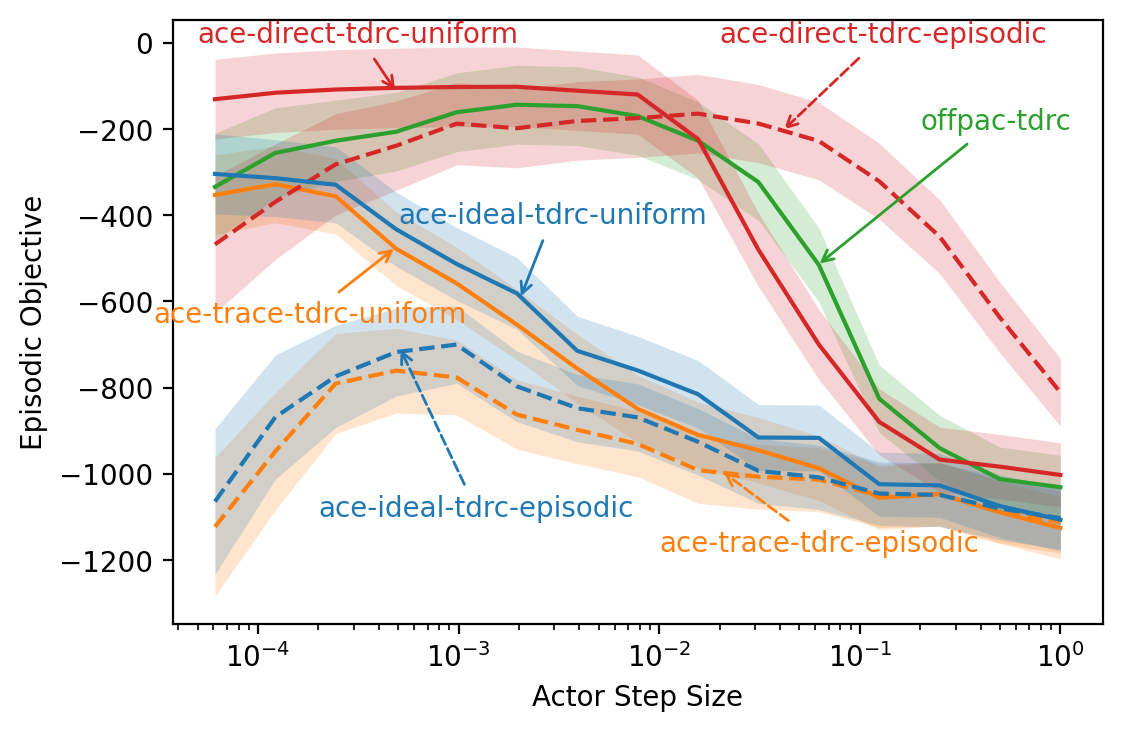}
    \caption{Episodic step-size sensitivity (TDRC)}
    \label{fig:pw_epi_tdrc_sens}
  \end{subfigure}
  \caption{Results on Puddle World. Shaded regions are 95\% confidence intervals.}
  \label{fig:pw}
\end{figure*}

Comparing the performance of each algorithm under the excursions and episodic objective function, we can see that the algorithms all performed better in the excursions setting than the episodic setting.
This is due to the excursions setting being easier than the episodic setting for the classic control problems in this section.
In the excursions setting, the agent starts from the behaviour policy's state distribution, which in the problems studied is closer to the goal state than the start state.
This is not necessarily true for all problems, however, and in general there is no strict ordering of objectives in terms of difficulty.

ACE variants that used the episodic interest function (dashed lines) generally performed much worse than their counterparts using the uniform interest function (solid lines),
with the exception of ACE-direct.
This is consistent with the discussion in Section 7 of \citet{thomas2014bias}---although they deal with the on-policy case---which states scaling policy updates by an accumulating product of discount factors hurts the sample efficiency of the algorithm.
However, the off-policy case can exacerbate this issue as the accumulating product includes both discount factors and importance sampling ratios.
The fact that the direct method of estimating emphatic weightings allowed ACE to perform well when using the episodic interest function is intriguing and merits further study.

The choice of critic affected the performance of the algorithms to differing degrees.
Replacing an ETD critic with a TDRC critic improved the performance of each of the algorithms, with episodic performance showing the greatest improvement (compare figures \ref{fig:pw_epi_etd} and \ref{fig:pw_epi_tdrc}), perhaps due to the excursions objective being easier for this environment as previously mentioned.
ACE-direct with uniform interest performed well regardless of the choice of critic, but improved substantially in the episodic setting when using a TDRC critic.
Using a TDRC critic instead of an ETD critic also improved the sensitivity of the algorithms to the actor's step size, allowing a slightly wider range of step sizes to perform well for most methods (compare Figure \ref{fig:pw_exc_etd_sens} to \ref{fig:pw_exc_tdrc_sens}, and Figure \ref{fig:pw_epi_etd_sens} to \ref{fig:pw_epi_tdrc_sens}).

When comparing ACE-trace (orange) to ACE-ideal (blue)---where the emphatic trace's importance sampling ratios were computed using the current policy, resulting in an unbiased Monte Carlo estimate of the emphatic weightings---we can see that correcting the bias introduced by using the emphatic trace with a changing policy did not improve performance significantly in all but one situation.
The one exception was when performance was evaluated with the excursions objective function (figures \ref{fig:pw_exc_etd} and \ref{fig:pw_exc_tdrc}).
When using an ETD critic (Figure \ref{fig:pw_exc_etd}), ACE-ideal outperformed ACE-trace early in learning before deteriorating to a similar level of performance by the end of the experiment.
Conversely, ACE-ideal outperformed ACE-trace throughout the experiment when using a TDRC critic (Figure \ref{fig:pw_exc_tdrc}), although the difference was barely significant.
In all other situations, the performance of ACE-ideal was not significantly different from the performance of ACE-trace.

Overall, ACE-direct performed better than or equal to ACE-trace, ACE-ideal, and OffPAC across all combinations of critic, interest function, and objective function.
However, it diverged for some actor step sizes when used with an ETD critic (figures \ref{fig:pw_exc_etd_sens} and \ref{fig:pw_epi_etd_sens}).
This could be due to the use of a semi-gradient update rule (analogous to off-policy TD(0) which is not guaranteed to converge when used with function approximation) for the direct method of estimating emphatic weightings.
To resolve this issue, one could use the gradient-based update rule in equation \ref{m_est_grad_update} of Section \ref{sec_direct_f}.
We chose to use the semi-gradient update to keep both the explanation and experiments simple, as there were already a large number of concepts and algorithmic parameters involved.

\subsubsection{Off-policy Mountain Car}
In the original Mountain Car environment, the agent attempts to drive an underpowered car out of a valley \citep{moore1990efficient}.\footnote{Please see \citet{degris2012offpolicy} for a picture of the Mountain Car environment.}
In the off-policy version, the agent learns from experience generated by a fixed behaviour policy, in this case the uniform random policy.
The observations were tile coded using 8 tilings of $4 \times 4$ tiles plus a bias unit.
ACE-ideal was omitted from the Mountain Car experiments due to computational constraints; the uniform random behaviour policy rarely completed an episode, which resulted in extremely long trajectories that prevented ACE-ideal from running in a reasonable amount of time.

Figure \ref{fig:mc} contains the results for the Off-policy Mountain Car environment.
The left-hand column shows learning curves, while the right-hand column shows sensitivity analyses for the actor's step size.
The first four plots (figures \ref{fig:mc_exc_etd}, \ref{fig:mc_exc_etd_sens}, \ref{fig:mc_epi_etd}, and \ref{fig:mc_epi_etd_sens}) show results when using ETD as the critic, while the last four plots (figures \ref{fig:mc_exc_tdrc}, \ref{fig:mc_exc_tdrc_sens}, \ref{fig:mc_epi_tdrc}, and \ref{fig:mc_epi_tdrc_sens}) show results for a TDRC critic.
The first and third rows show results using the excursions objective function, and the second and fourth rows show results for the episodic objective function.

\begin{figure*}[hp!]
  \centering
  \begin{subfigure}[b]{0.45\textwidth}
    \centering
    \includegraphics[width=\textwidth]{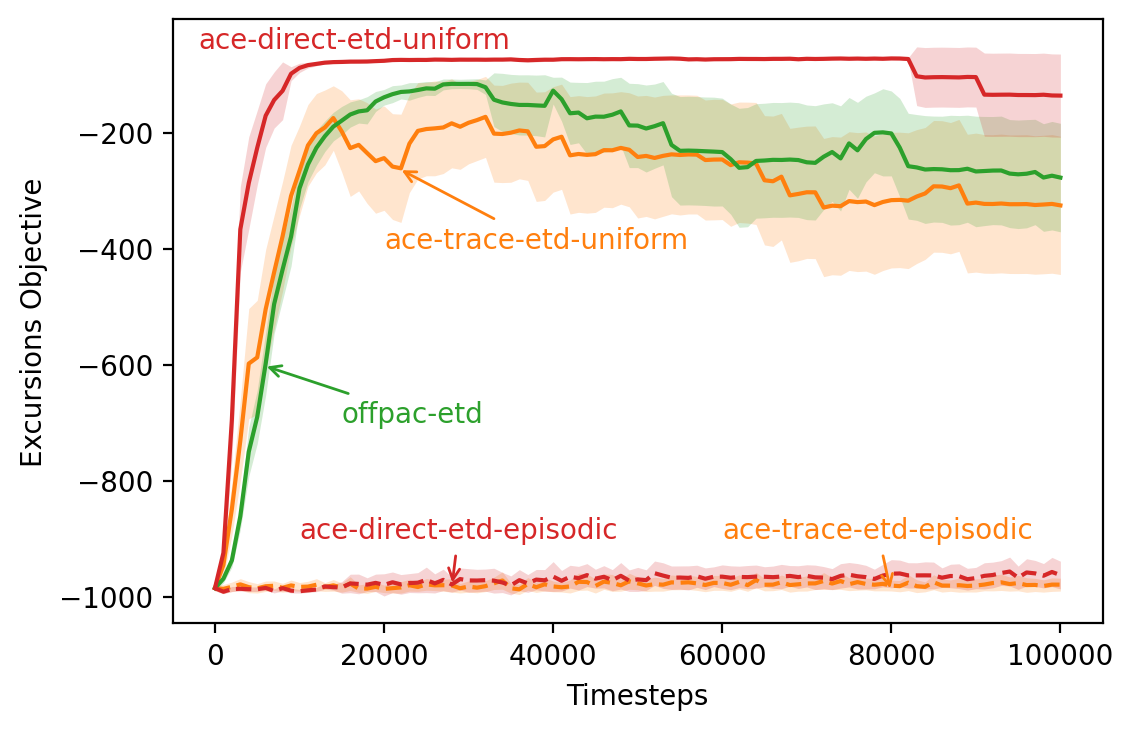}
    \caption{Excursions learning curves (ETD)}
    \label{fig:mc_exc_etd}
  \end{subfigure}
  \hfill
  \begin{subfigure}[b]{0.45\textwidth}
    \centering
    \includegraphics[width=\textwidth]{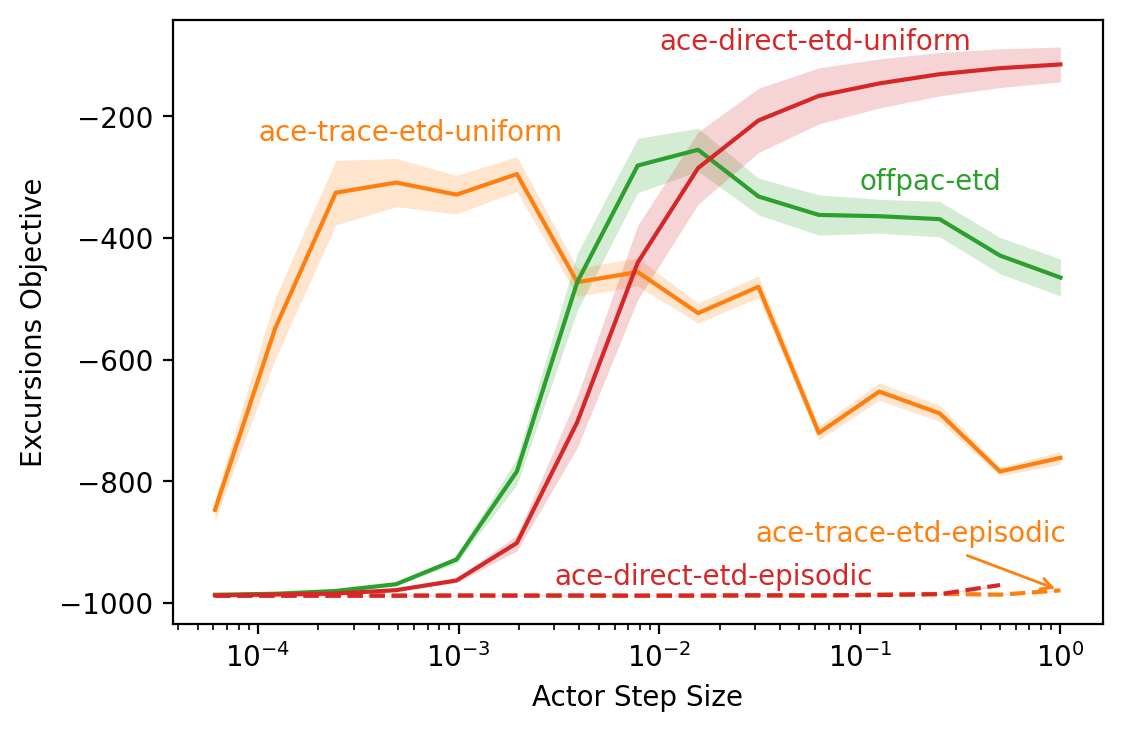}
    \caption{Excursions step-size sensitivity (ETD)}
    \label{fig:mc_exc_etd_sens}
  \end{subfigure}
  \begin{subfigure}[b]{0.45\textwidth}
    \centering
    \includegraphics[width=\textwidth]{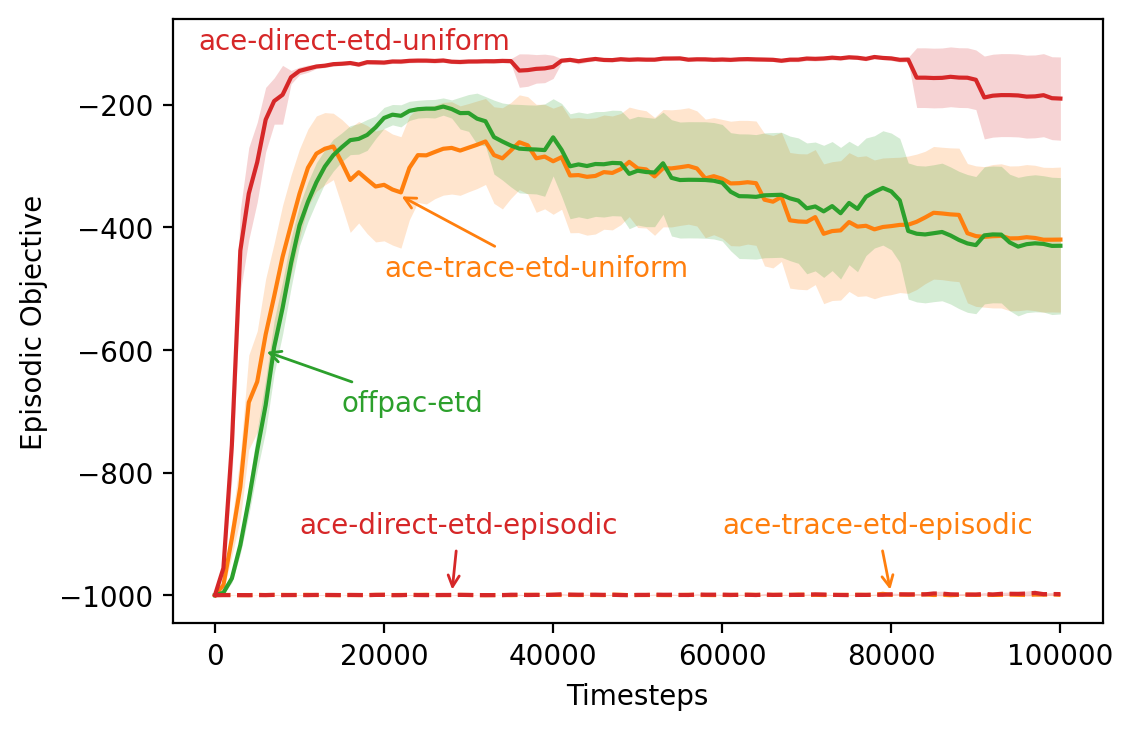}
    \caption{Episodic learning curves (ETD)}
    \label{fig:mc_epi_etd}
  \end{subfigure}
  \hfill
  \begin{subfigure}[b]{0.45\textwidth}
    \centering
    \includegraphics[width=\textwidth]{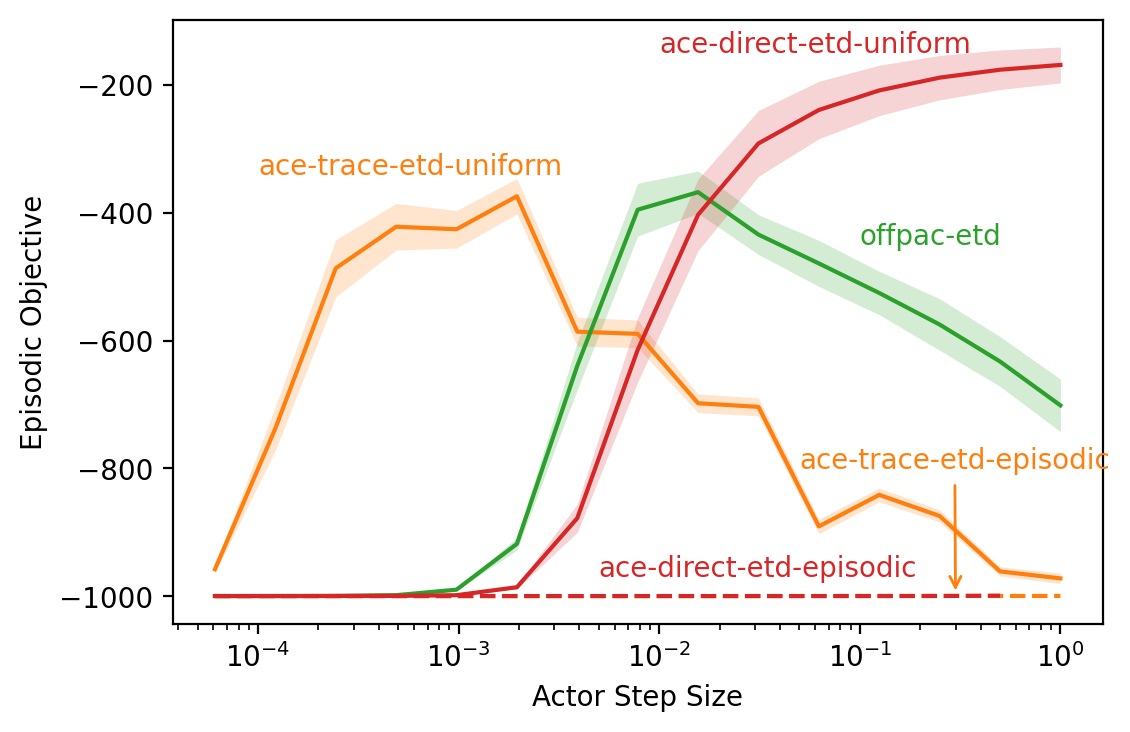}
    \caption{Episodic step-size sensitivity (ETD)}
    \label{fig:mc_epi_etd_sens}
  \end{subfigure}
  \begin{subfigure}[b]{0.45\textwidth}
    \centering
    \includegraphics[width=\textwidth]{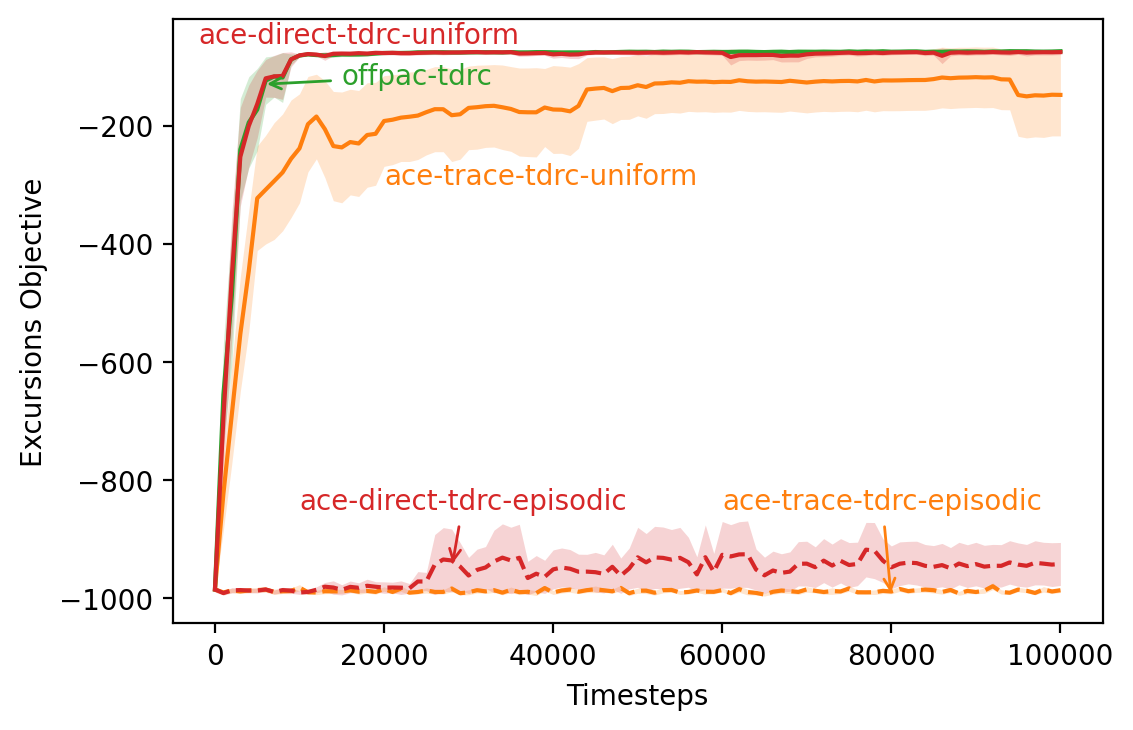}
    \caption{Excursions learning curves (TDRC)}
    \label{fig:mc_exc_tdrc}
  \end{subfigure}
  \hfill
  \begin{subfigure}[b]{0.45\textwidth}
    \centering
    \includegraphics[width=\textwidth]{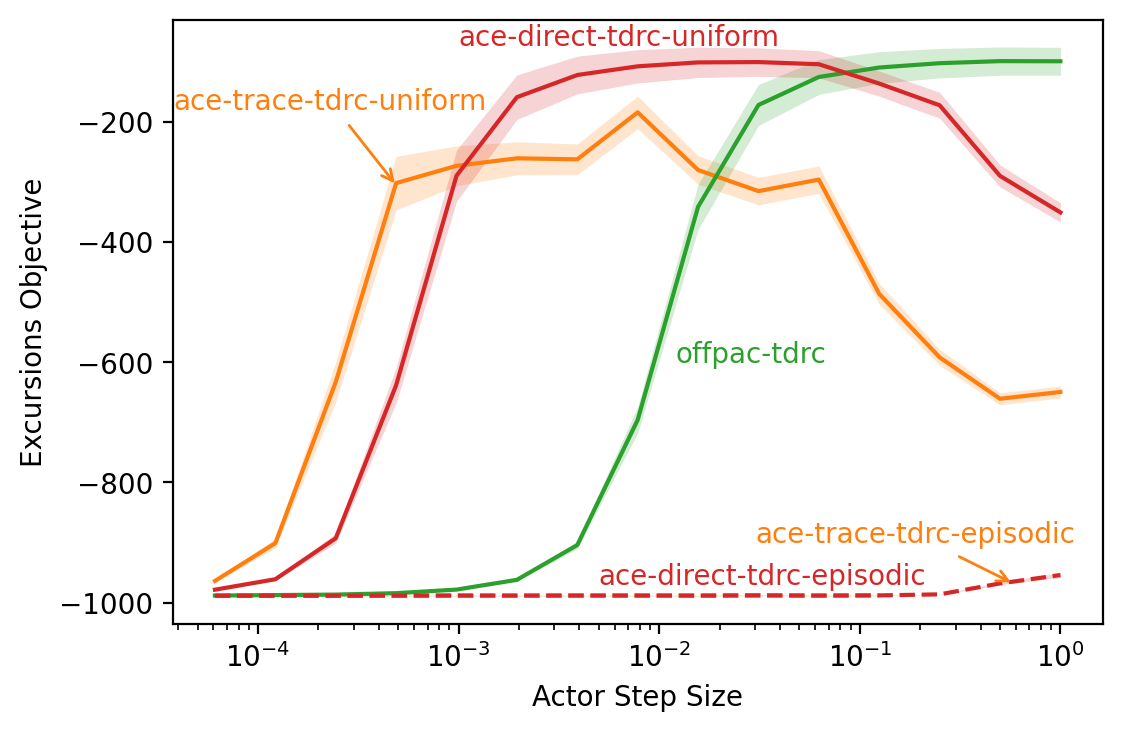}
    \caption{Excursions step-size sensitivity (TDRC)}
    \label{fig:mc_exc_tdrc_sens}
  \end{subfigure}
  \begin{subfigure}[b]{0.45\textwidth}
    \centering
    \includegraphics[width=\textwidth]{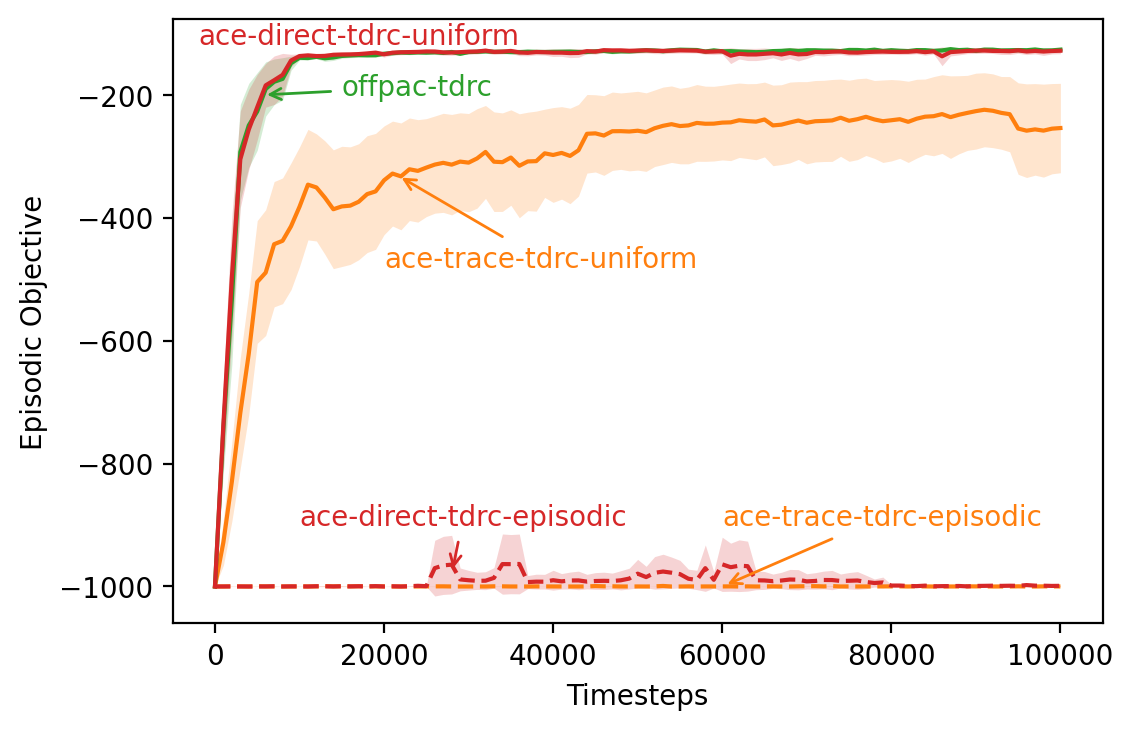}
    \caption{Episodic learning curves (TDRC)}
    \label{fig:mc_epi_tdrc}
  \end{subfigure}
  \hfill
  \begin{subfigure}[b]{0.45\textwidth}
    \centering
    \includegraphics[width=\textwidth]{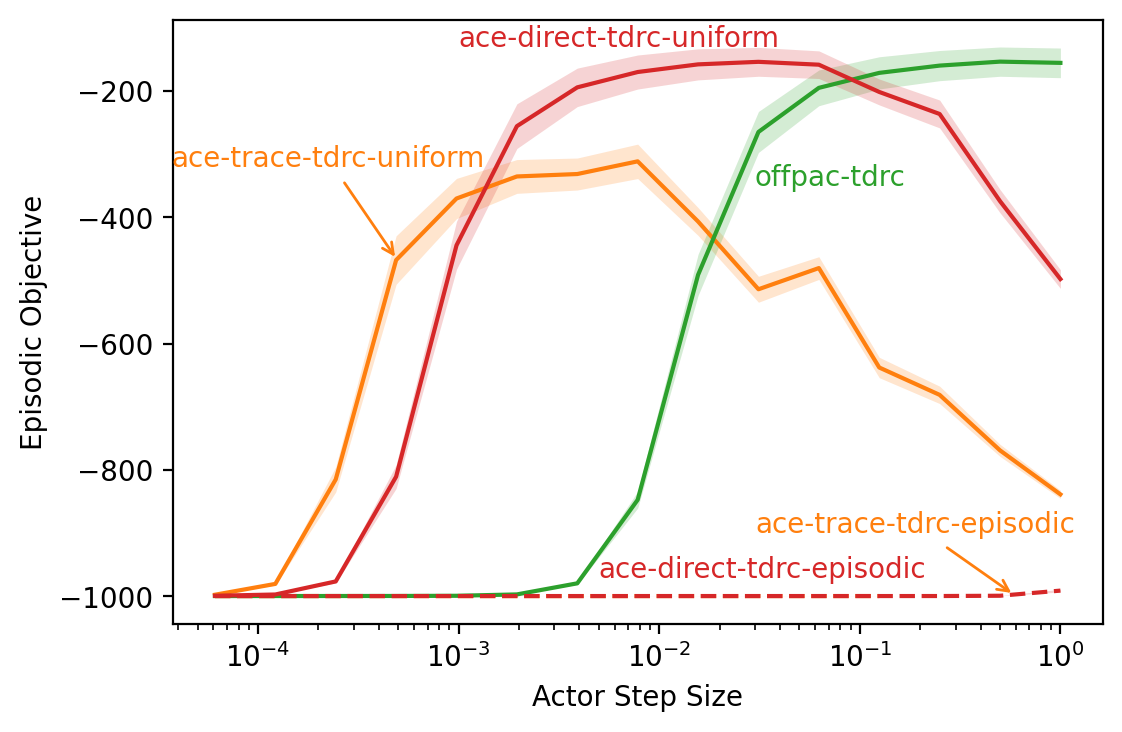}
    \caption{Episodic step-size sensitivity (TDRC)}
    \label{fig:mc_epi_tdrc_sens}
  \end{subfigure}
  \caption{Results on Off-policy Mountain Car. Shaded regions are 95\% confidence intervals.}
  \label{fig:mc}
\end{figure*}

Across all choices of critic and objective function, algorithms trained using the episodic interest function (dashed lines) performed much worse than their counterparts trained using the uniform interest function (solid lines).
This finding is consistent with the results from Puddle World, although much more pronounced.
In this specific experiment, the extremely long trajectories generated by the uniform random behaviour policy caused the emphatic trace (and hence the policy updates) of the algorithms that used 
episodic interest to quickly shrink to the point of irrelevance, whereas in the Puddle World experiment the behaviour policy encounters the goal state more often, limiting trajectory length and shrinking the updates more slowly.
Nevertheless, using the direct method to estimate the emphatic weightings allowed ACE to improve the performance of the policy in some cases, whereas using the emphatic trace prevented the policy from meaningfully improving.

Again, the choice of critic played a role in the performance of all the algorithms to varying degrees.
Using a TDRC critic instead of an ETD critic improved the performance of each of the algorithms, with OffPAC seeing the greatest improvement, followed by ACE-trace.
ACE-direct again performed well regardless of the choice of critic.
Using a TDRC critic improved the sensitivity of each of the algorithms to the actor's step size, allowing a wider range of step sizes to perform well (compare figures \ref{fig:mc_exc_etd_sens} and \ref{fig:mc_exc_tdrc_sens}, and \ref{fig:mc_epi_etd_sens} and \ref{fig:mc_epi_tdrc_sens}).

Comparing ACE-direct (red) to ACE-trace (orange) reveals several advantages of using the direct method to estimate the emphatic weightings.
The direct method allowed ACE to learn faster, with less variance, and find a better-performing target policy by the end of the experiment, regardless of the choice of critic or objective function used for evaluation.
In addition, the direct method allowed a wider range of actor step sizes to perform well, with performance increasing and decreasing more smoothly and predictably than when using the emphatic trace.
Finally, the choice of critic had little effect on the performance of ACE-direct, whereas ACE-trace performed better with a TDRC critic than with an ETD critic.
Overall, ACE with the direct method performed better than or equal to OffPAC (green) and ACE-trace for all critics and objective functions, but was less sensitive to the actor step size and choice of critic than OffPAC.

\subsection{The Virtual Office Environment}

The classic control environments considered in the previous section are well-known environments for testing general off-policy algorithms, but were not designed to probe the weaknesses of the specific algorithms being studied.
Hence, both semi-gradient updates and the direct method of estimating emphatic weightings perform well, and the tradeoffs made in the design of each algorithm are not obvious.
In this section, we introduce an environment designed to 
highlight issues with both semi-gradient updates and 
the direct method of estimating emphatic weightings.

The environment is based on \citet{dung2007reinforcement}'s Virtual Office, a grid world consisting of a hallway and two cubicles that are identical except for the goal locations.
To illustrate the problem with semi-gradient updates,
two key properties of the counterexample from Section \ref{sec_counter} must be incorporated:
the optimal actions in the two identical rooms must be different, and the behaviour policy must visit the suboptimal room more often.
To accomplish the first goal, we introduced hidden goal states in the north-east and south-east corners of the two rooms, and assigned rewards of 1 and 0 respectively to the goal states of the north-east room and rewards of 0 and .5 respectively to the goal states of the south-east room.
To encourage the behaviour policy to visit the suboptimal room (i.e., the south-east room) more often, we fixed the starting state to the left-most, middle-most state and used a behaviour policy that favoured the South and East actions, taking the North, East, South, and West actions with probabilities .2, .4, .3, and .1 respectively.

While \citet{dung2007reinforcement}'s Virtual Office contains some partial observability in the form of the two identical rooms, to fully illustrate the tradeoff made by the Markov assumption used in the derivation of the direct method, we limited the agent's observations to a $3 \times 3$ grid in front of the agent.
The RGB colour values of each square in the agent's view were used as observations, with the agent unable to see through walls or perceive the goal states.

The final environment was implemented using Minigrid \citep{gym_minigrid}, and is depicted in Figure \ref{fig:virtual_office}.
The agent (red triangle) must navigate to one of the terminal states (the north-east and south-east squares of the green rooms) to obtain the associated reward.
The agent observes the RGB colour values of the $3 \times 3$ grid of squares in front of it (the lighter blue squares), and selects the North, East, South, or West action in response.

For this experiment we followed the methodology detailed in the previous section, with minor changes.
First, we restricted the critic trace decay rate to be 0, as eligibility traces can alleviate some of the issues caused by partial observability \citep{loch1998using}.
Second, we ran each combination of parameters for 30 runs of 200,000 time steps each.
The learned policies were saved every 5,000 time steps and evaluated 50 times using both the episodic and excursions objective functions.
The best-performing parameter settings were then re-run 100 times and averaged.

\begin{figure*}[t!]
  \vspace{-0.3cm}
  \centering
  \includegraphics[width=\figwidthfour]{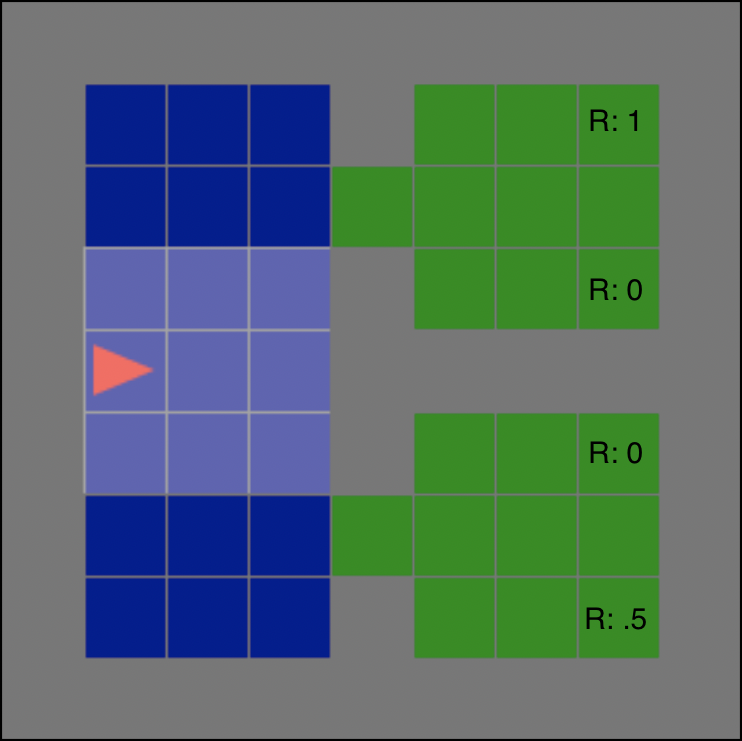}
  \caption{The Virtual Office environment.}
  \label{fig:virtual_office}
  \vspace{-0.3cm}
\end{figure*}

Figure \ref{fig:vo} contains the results of our experiment on the Virtual Office environment.
The left-hand column shows learning curves, while the right-hand column shows sensitivity analyses for the actor's step size.
The first four plots (figures \ref{fig:vo_exc_etd}, \ref{fig:vo_exc_etd_sens}, \ref{fig:vo_epi_etd}, and \ref{fig:vo_epi_etd_sens}) show results when using ETD as the critic, while the last four plots (figures \ref{fig:vo_exc_tdrc}, \ref{fig:vo_exc_tdrc_sens}, \ref{fig:vo_epi_tdrc}, and \ref{fig:vo_epi_tdrc_sens}) show results for a TDRC critic.
The first and third rows show results for the excursions objective function, and the second and fourth rows show the episodic objective function.

\begin{figure*}[hp!]
  \vspace{-0.3cm}
  \centering
  \begin{subfigure}[b]{0.45\textwidth}
    \centering
    \includegraphics[width=\textwidth]{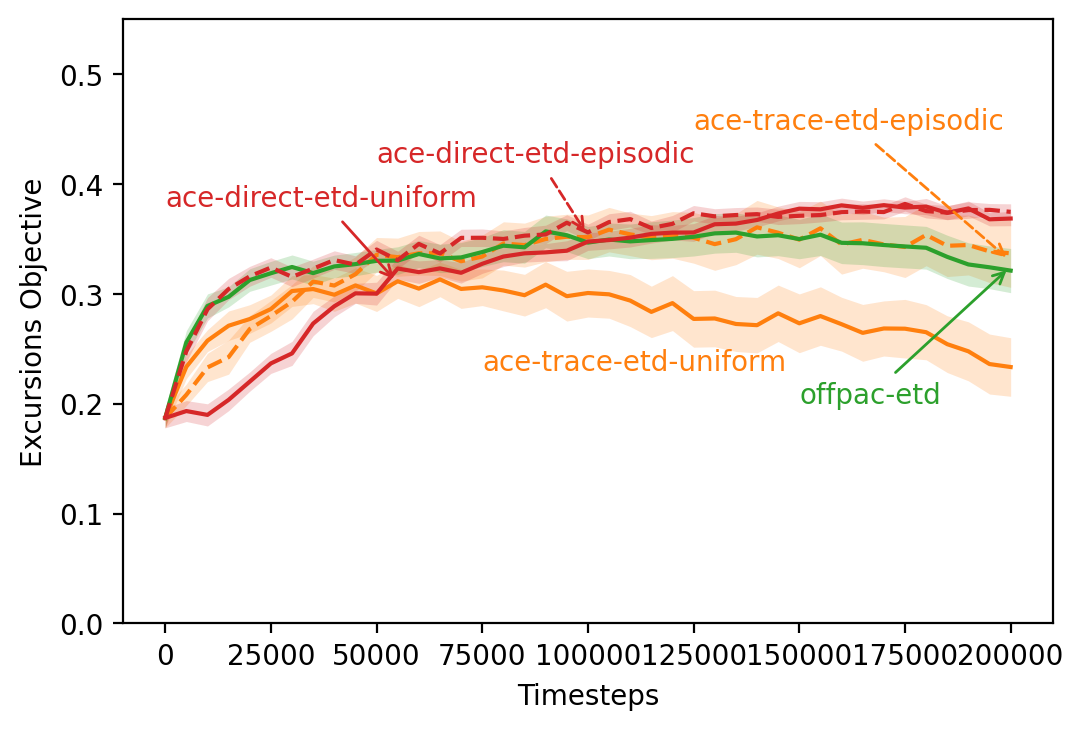}
    \caption{Excursions learning curves (ETD)}
    \label{fig:vo_exc_etd}
  \end{subfigure}
  \hfill
  \begin{subfigure}[b]{0.45\textwidth}
    \centering
    \includegraphics[width=\textwidth]{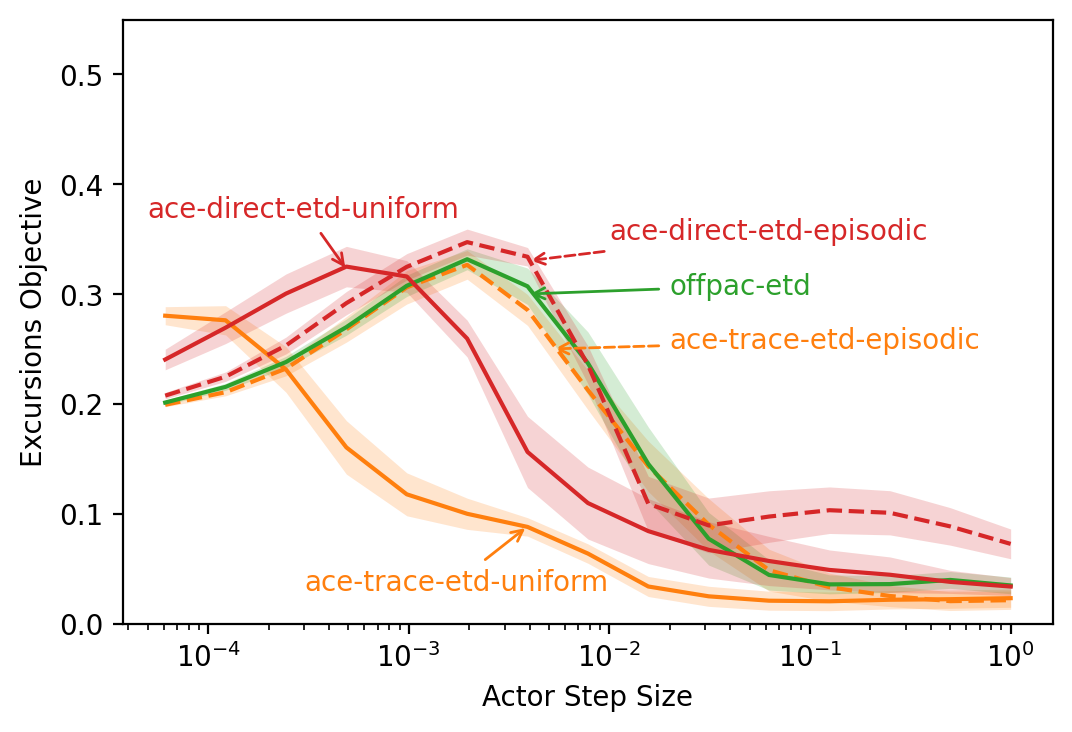}
    \caption{Excursions step-size sensitivity (ETD)}
    \label{fig:vo_exc_etd_sens}
  \end{subfigure}
  \begin{subfigure}[b]{0.45\textwidth}
    \centering
    \includegraphics[width=\textwidth]{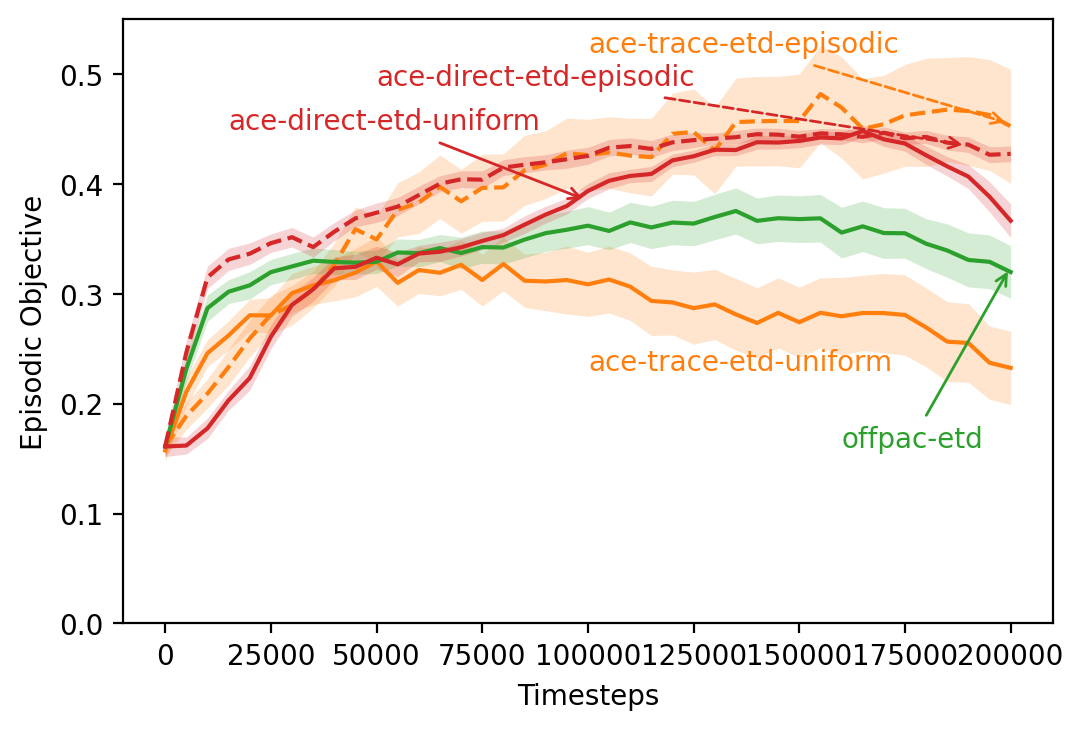}
    \caption{Episodic learning curves (ETD)}
    \label{fig:vo_epi_etd}
  \end{subfigure}
  \hfill
  \begin{subfigure}[b]{0.45\textwidth}
    \centering
    \includegraphics[width=\textwidth]{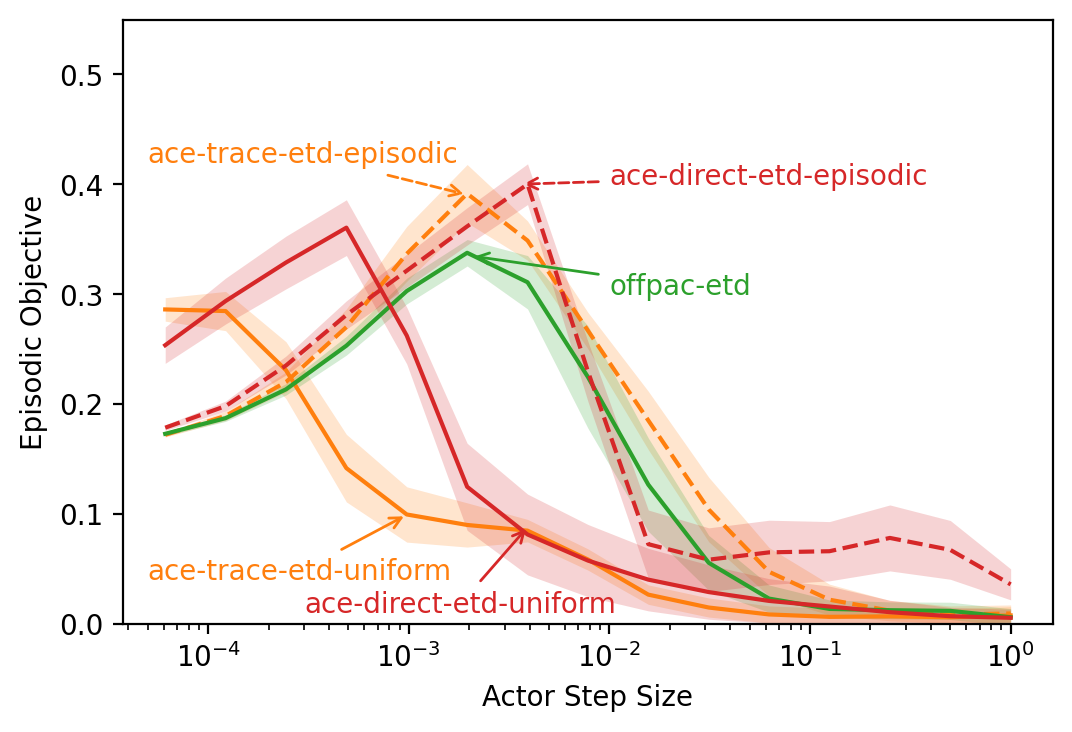}
    \caption{Episodic step-size sensitivity (ETD)}
    \label{fig:vo_epi_etd_sens}
  \end{subfigure}
  \begin{subfigure}[b]{0.45\textwidth}
    \centering
    \includegraphics[width=\textwidth]{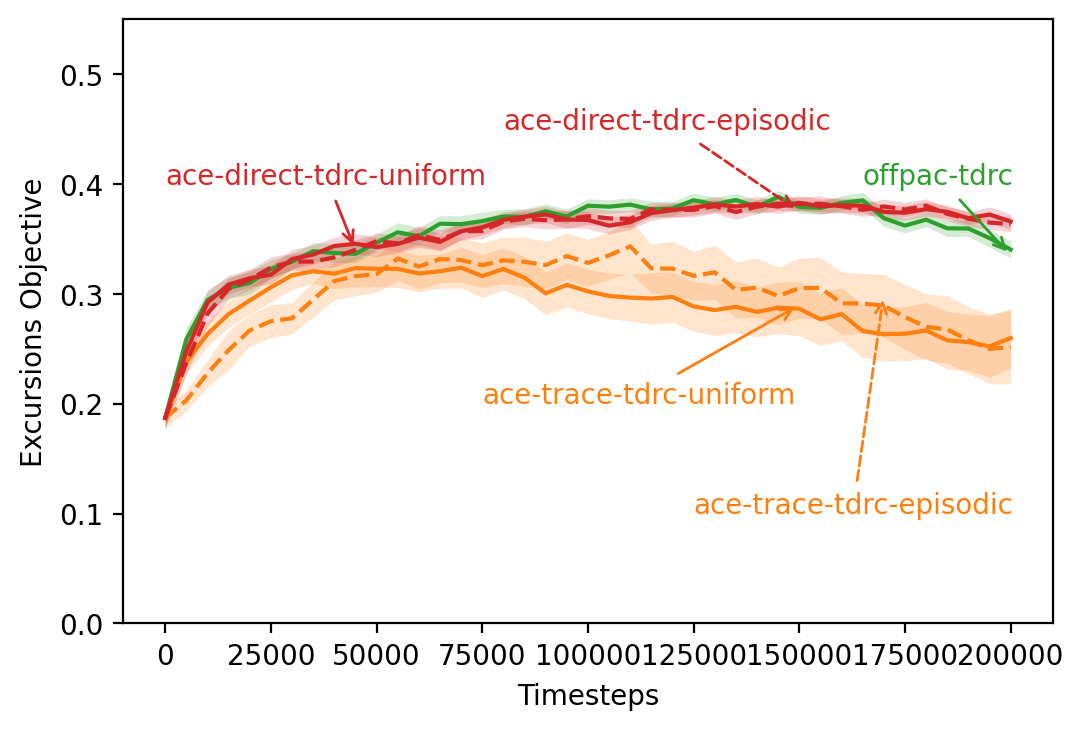}
    \caption{Excursions learning curves (TDRC)}
    \label{fig:vo_exc_tdrc}
  \end{subfigure}
  \hfill
  \begin{subfigure}[b]{0.45\textwidth}
    \centering
    \includegraphics[width=\textwidth]{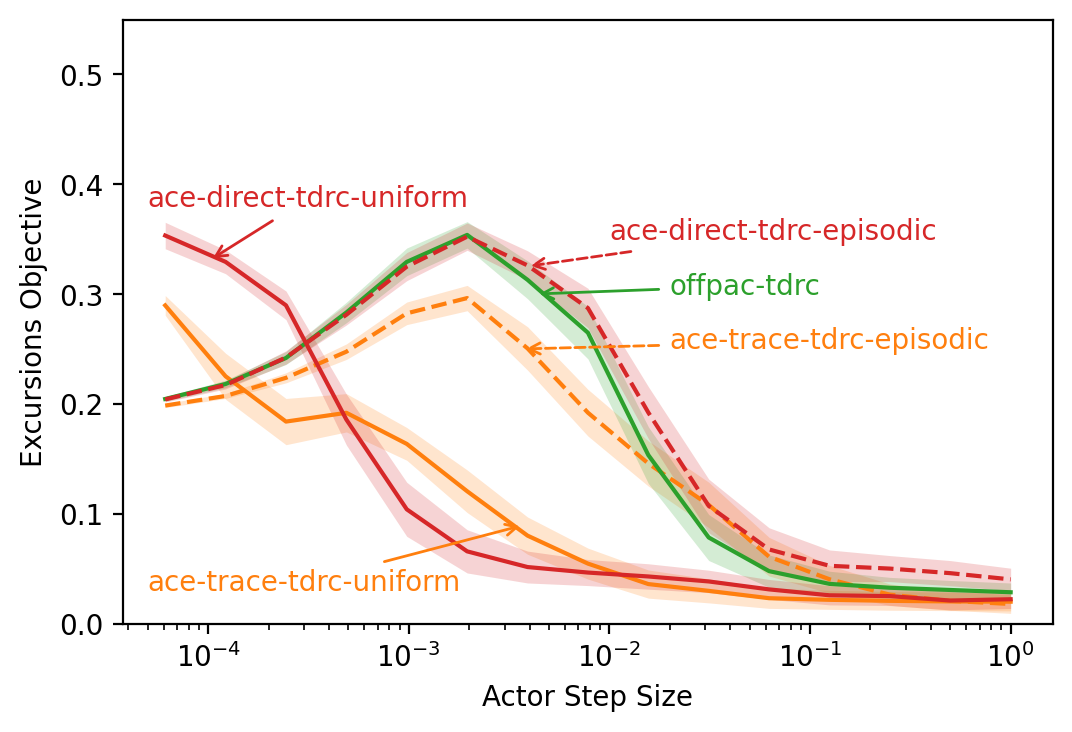}
    \caption{Excursions step-size sensitivity (TDRC)}
    \label{fig:vo_exc_tdrc_sens}
  \end{subfigure}
  \begin{subfigure}[b]{0.45\textwidth}
    \centering
    \includegraphics[width=\textwidth]{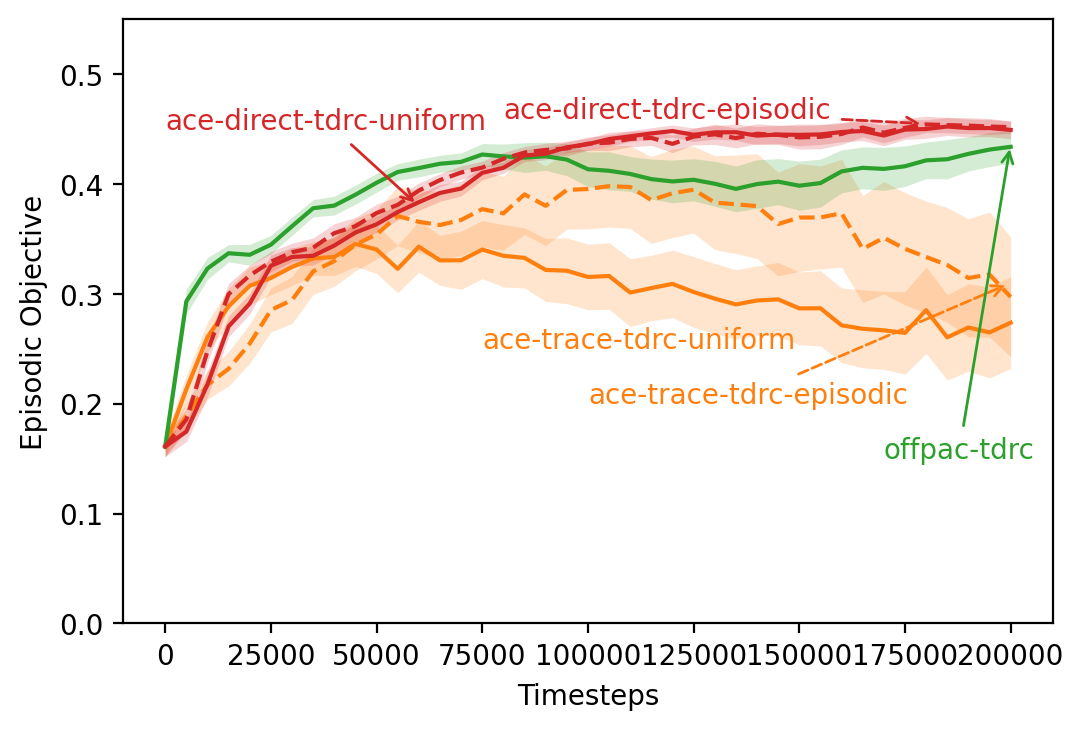}
    \caption{Episodic learning curves (TDRC)}
    \label{fig:vo_epi_tdrc}
  \end{subfigure}
  \hfill
  \begin{subfigure}[b]{0.45\textwidth}
    \centering
    \includegraphics[width=\textwidth]{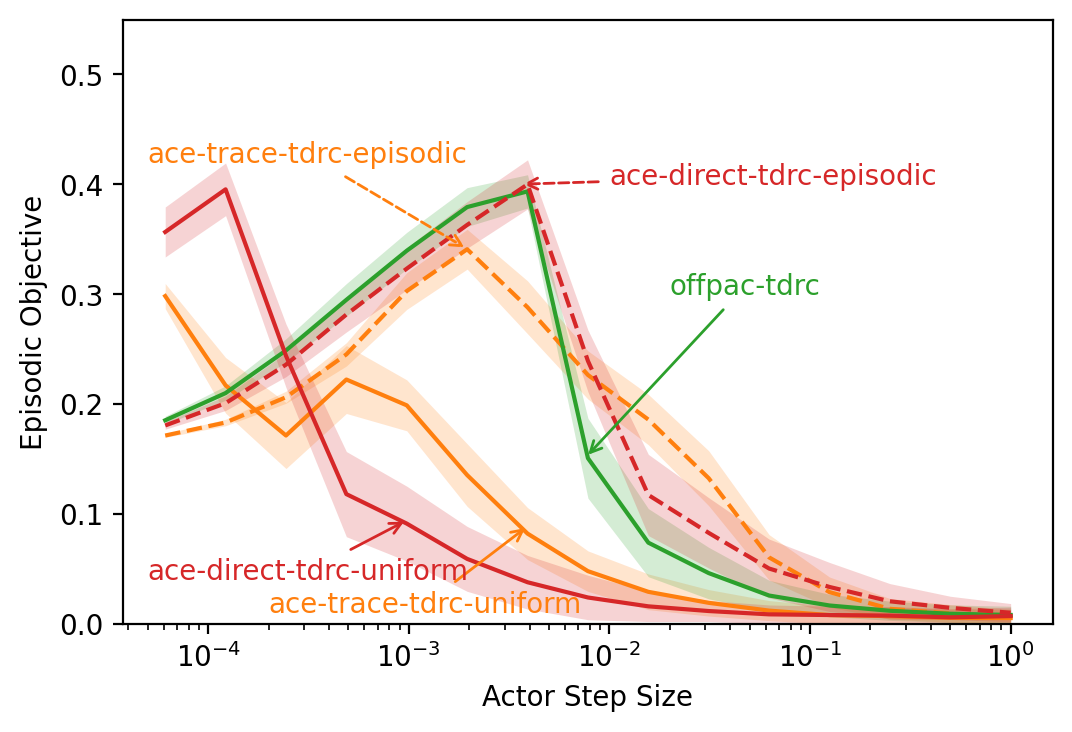}
    \caption{Episodic step-size sensitivity (TDRC)}
    \label{fig:vo_epi_tdrc_sens}
  \end{subfigure}
  \caption{Results on the Virtual Office. Shaded regions are 95\% confidence intervals.}
  \label{fig:vo}
\end{figure*}

Interestingly, algorithms that used the episodic interest function (dashed lines) performed better than or equal to their counterparts that used the uniform interest function (solid lines) for all combinations of objective function and critic, unlike the previous experiments on Mountain Car and Puddle World.
In addition, the algorithms that used the episodic interest function 
worked well for a larger range of step sizes, making it easier to find a value that performs well.
This could be due to the partially observable nature of the environment; many of the agent's observations are similar, which could cause the emphatic weightings with a uniform interest function to quickly become very large, necessitating a very small step size to compensate.

As in the previous experiments, the choice of critic played an important role in the performance of all the algorithms, although this time with mixed results.
Using a TDRC critic instead of an ETD critic improved the performance of OffPAC and ACE-direct with uniform interest, but had very little impact on ACE-trace with uniform interest and ACE-direct with episodic interest, and actually reduced the performance of ACE-trace with episodic interest (compare figures \ref{fig:vo_exc_etd} and \ref{fig:vo_exc_tdrc}, and \ref{fig:vo_epi_etd} and \ref{fig:vo_epi_tdrc}).
Similarly, using a TDRC critic had mixed effects on the sensitivity of each of the algorithms to the actor's step size.
It allowed a wider range of step sizes to perform well for some algorithms, but also led to a narrower range of well-performing step sizes for some algorithms (compare figures \ref{fig:vo_exc_etd_sens} and \ref{fig:vo_exc_tdrc_sens}, and \ref{fig:vo_epi_etd_sens} and \ref{fig:vo_epi_tdrc_sens}).

ACE-direct with the episodic interest function performed better than or equal to the other methods across most combinations of critic and objective function, with the possible exception of the episodic objective function.
With a TDRC critic, OffPAC learned more quickly than the other methods before eventually being surpassed by both ACE-direct variants.
However, OffPAC was able to close the gap near the end of the experiment, with the final learned policy of OffPAC performing comparably to the ACE-direct variants.
With an ETD critic, ACE-trace with episodic interest performed surprisingly well, with the final policy possibly outperforming ACE-direct with episodic interest.
However, the variance of the policy learned by ACE-trace was much larger than the other methods, rendering the difference in performance not statistically significant.
Conversely, ACE-direct with episodic interest learned a low-variance policy on each of the settings tested, performing as well as or better than the other algorithms.

\subsection{Summary}
In this section we investigated several key properties of the ACE algorithm empirically.
First we studied how the parameter $\eta$ trades off bias and variance in the counterexample from Section \ref{sec_counter} and a continuous action version, finding that even small values of $\eta$ (i.e., closer to semi-gradient updates) greatly improved the quality of the final learned policy, both when using the true value function and when using a learned critic.

Next we tested the ability of the emphatic trace to accurately estimate the emphatic weightings on an extended version of the counterexample, showing that while higher values of $\eta$ improved performance, a significant gap remained between ACE with $\eta = 1$ and ACE with the true emphatic weightings.

Then we moved to two classic control domains and tested several variants of ACE to determine how the choice of estimator for the emphatic weightings (and associated bias and variance properties) and the choice of critic affects the learning process.
We found that the emphatic trace's extreme variance greatly reduced the performance of ACE---with its bias playing somewhat less of a role---and replacing the emphatic trace with the direct method from Section \ref{sec_direct_f} resulted in ACE performing as well as or better than the other variants.
The TDRC critic was the better critic overall, but ACE with the direct method of estimating emphatic weightings was insensitive to the choice of critic and performed well regardless of whether ETD or TDRC was used.

Finally, we compared the performance of several variants of ACE on a new environment designed to highlight the weaknesses of each of the variants, finding that the episodic variant of ACE with the direct method of estimating emphatic weightings performed as well as or better than the other variants in almost all the situations tested, suggesting it should be the default method of estimating the emphatic weightings.

\section{Conclusion}

In this paper we introduced a generalized objective for learning policies and proved the off-policy policy gradient theorem using emphatic weightings.
Using this theorem, we derived an off-policy actor-critic algorithm that follows the gradient of the objective function, as opposed to previous methods like OffPAC and DPG that follow an approximate semi-gradient.
We designed a simple MDP to highlight that stationary points for semi-gradients can be highly suboptimal, with semi-gradient updates moving away from the optimal solution.
We also show, though, that the semi-gradient methods can produce reasonable solutions, either because of sufficiently rich policy parameterizations or because in some cases they can be seen as using the true gradient update with a different state weighting. 

We leverage these insights to improve the practicality of the method empirically. 
We design Actor-Critic with Emphatic Weightings (ACE) to have a tuneable parameter $\eta$ that sweeps between the full gradient (at $\eta = 1$) and the semi-gradient (at $\eta = 0$), often obtaining the best performance with a relatively small but non-zero $\eta$ (typically $\eta = 0.1$). We additionally reduce variance by directly estimating emphatic weightings with function approximation rather than using the emphatic trace. 
Empirically, the direct method of estimating the emphatic weightings performed better than the emphatic trace across all objective functions and environments tested. 
In addition, ACE with the direct method was less sensitive to the actor step size and choice of critic than OffPAC.
Overall, ACE with the direct method performed better than or equal to 
OffPAC in all situations, suggesting that incorporating (low-variance) state reweightings might be 
a reasonable direction for future policy gradient methods. 


\acks{We gratefully acknowledge funding from the Natural Sciences and Engineering Research Council of Canada (NSERC), the Canada CIFAR AI Chair program, and the Alberta Machine Intelligence Institute (Amii). This research was enabled in part by support provided by \href{https://www.scinethpc.ca/}{SciNet} and the \href{https://alliancecan.ca/}{Digital Research Alliance of Canada}.}

\appendix

\section{Proof of Deterministic Off-Policy Policy Gradient Theorem}\label{deterministic_pgt_appendix}

\subsection{Assumptions}\label{dpg_assumptions}

We make the following assumptions on the MDP:

\begin{assumption} \label{assumption:continuous}
	$\Pfcn(s'|s,a), r(s,a,s'), \gamma(s,a,s'), \pi(s;\vec\theta)$ and their derivatives are continuous in all variables $s,a,s',\vec\theta$.
\end{assumption}
\begin{assumption} \label{assumption:compact}
	$\States$ is a compact set in $\mathbb{R}^d$, and $\Actions$ is a compact set in $\mathbb{R}$.
\end{assumption}
\begin{assumption} \label{assumption:inverse}
	The policy $\pi$ and discount $\gamma$ are such that the inverse kernel of $\delta(s,s') - \Ppig(s,s')$ exists, where $\Ppig(s,s') = \int_\Actions \pi(a|s) \gamma(s,a,s') \Pfcn(s'|s,a) da$.
\end{assumption}
Under Assumption \ref{assumption:continuous}, $v_{\pi_{\vec\theta}}(s)$ and $\frac{\partial v_{\pi_{\vec\theta}}(s)}{\partial\vec\theta}$ are continuous functions of $\vec\theta$ and $s$. Together, Assumptions \ref{assumption:continuous} and \ref{assumption:compact} imply that $\left\lvert\left\lvert \frac{\partial v_{\pi_{\vec\theta}}(s)}{\partial\vec\theta} \right\rvert\right\rvert$, $\left\lvert\left\lvert \frac{\partial \pi(s;\vec\theta)}{\partial\vec\theta} \right\rvert\right\rvert$, and $\left\lvert\left\lvert \left. \frac{\partial q_{\pi_{\vec\theta}}(s,a)}{\partial a}\right\rvert_{a=\pi(s;\vec\theta)} \right\rvert\right\rvert$ are bounded functions of $s$, which allows us to switch the order of integration and differentiation, and the order of multiple integrations.

\subsection{Proof of Theorem \ref{grad_theorem_deterministic}}
\begin{proof}
  We start by deriving a recursive form for the gradient of the value function with respect to the policy parameters:
  \begin{align}
    \frac{\partial v_{\pi}(s)}{\partial\vec\theta}
    &= \frac{\partial}{\partial\vec\theta} q_{\pi}(s,\pi(s;\vec\theta))
    \notag{}
    \\
    &= \frac{\partial}{\partial\vec\theta} \int_{\States} \Pfcn(s'|s,\pi(s;\vec\theta)) \Big( r(s,\pi(s;\vec\theta),s') + \gamma(s,\pi(s;\vec\theta),s') v_\pi(s') \Big) \intd s'
    \notag{}
    \\
    &= \int_{\States} \frac{\partial}{\partial\vec\theta} \left( \Pfcn(s'|s,\pi(s;\vec\theta)) \Big( r(s,\pi(s;\vec\theta),s') + \gamma(s,\pi(s;\vec\theta),s') v_\pi(s') \Big) \right) \intd s'
    \label{deterministic_eqn_1},
  \end{align}
  where in Equation \ref{deterministic_eqn_1} we used the Leibniz integral rule to switch the order of integration and differentiation.
  We proceed with the derivation using the product rule:
  \begin{align}
    ={}& \int_{\States} \frac{\partial}{\partial\vec\theta} \Pfcn(s'|s,\pi(s;\vec\theta)) \Big( r(s,\pi(s;\vec\theta),s') + \gamma(s,\pi(s;\vec\theta),s') v_\pi(s') \Big) \nonumber
    \\
    &+ \Pfcn(s'|s,\pi(s;\vec\theta)) \frac{\partial}{\partial\vec\theta} \Big( r(s,\pi(s;\vec\theta),s') + \gamma(s,\pi(s;\vec\theta),s') v_\pi(s') \Big) \intd s' \nonumber
    \\
    ={}& \int_{\States} \frac{\partial \pi(s;\vec\theta)}{\partial\vec\theta} \left. \frac{\partial \Pfcn(s'|s,a)}{\partial a}\right\rvert_{a=\pi(s;\vec\theta)} \Big( r(s,\pi(s;\vec\theta),s') + \gamma(s,\pi(s;\vec\theta),s') v_\pi(s') \Big) \nonumber
    + \Pfcn(s'|s,\pi(s;\vec\theta)) \\&\qquad \left( \frac{\partial}{\partial\vec\theta} r(s,\pi(s;\vec\theta),s') + \frac{\partial}{\partial\vec\theta} \gamma(s,\pi(s;\vec\theta),s') v_\pi(s') + \gamma(s,\pi(s;\vec\theta),s') \frac{\partial}{\partial\vec\theta} v_\pi(s') \right) \intd s' \nonumber
    \\
    ={}& \int_{\States} \frac{\partial \pi(s;\vec\theta)}{\partial\vec\theta} \left. \frac{\partial \Pfcn(s'|s,a)}{\partial a}\right\rvert_{a=\pi(s;\vec\theta)} \Big( r(s,\pi(s;\vec\theta),s') + \gamma(s,\pi(s;\vec\theta),s') v_\pi(s') \Big) \nonumber
    \\
    &+ \Pfcn(s'|s,\pi(s;\vec\theta)) \left( \frac{\partial \pi(s;\vec\theta)}{\partial\vec\theta} 
    \left. \frac{\partial r(s,a,s')}{\partial a}\right\rvert_{a=\pi(s;\vec\theta)} + \frac{\partial \pi(s;\vec\theta)}{\partial\vec\theta} 
    \left. \frac{\partial \gamma(s,a,s')}{\partial a}\right\rvert_{a=\pi(s;\vec\theta)} v_\pi(s') \right) \intd s' \nonumber
    \\
    &+ \int_{\States} \Pfcn(s'|s,\pi(s;\vec\theta)) \gamma(s,\pi(s;\vec\theta),s') \frac{\partial}{\partial\vec\theta} v_\pi(s') \intd s' \nonumber
    \\
    ={}& \int_{\States} \frac{\partial \pi(s;\vec\theta)}{\partial\vec\theta} \biggr( \left. \frac{\partial \Pfcn(s'|s,a)}{\partial a}\right\rvert_{a=\pi(s;\vec\theta)} \Big( r(s,\pi(s;\vec\theta),s') + \gamma(s,\pi(s;\vec\theta),s') v_\pi(s') \Big) \nonumber
    \\
    &+ \Pfcn(s'|s,\pi(s;\vec\theta)) \left. \frac{\partial}{\partial a} \Big( r(s,a,s') + \gamma(s,a,s') v_\pi(s') \Big) \right\rvert_{a=\pi(s;\vec\theta)} \biggr) \intd s' \nonumber
    \\
    &+ \int_{\States} \Pfcn(s'|s,\pi(s;\vec\theta)) \gamma(s,\pi(s;\vec\theta),s') \frac{\partial}{\partial\vec\theta} v_\pi(s') \intd s' \nonumber
    \\
    ={}& \frac{\partial \pi(s;\vec\theta)}{\partial\vec\theta} \int_{\States} \left. \frac{\partial}{\partial a} \left( \Pfcn(s'|s,a) \Big( r(s,a,s') + \gamma(s,a,s') v_\pi(s') \Big) \right) \right\rvert_{a=\pi(s;\vec\theta)} \intd s' \nonumber
    \\
    &+
    \int_{\States} \Pfcn(s'|s,\pi(s;\vec\theta)) \gamma(s,\pi(s;\vec\theta),s') \frac{\partial v_\pi(s')}{\partial\vec\theta} \intd s' \nonumber
    \\
    ={}& \frac{\partial \pi(s;\vec\theta)}{\partial\vec\theta} \left. \frac{\partial q_\pi(s,a)}{\partial a}\right\rvert_{a=\pi(s;\vec\theta)} + \int_{\States} \Pfcn(s'|s,\pi(s;\vec\theta)) \gamma(s,\pi(s;\vec\theta),s') \frac{\partial v_\pi(s')}{\partial\vec\theta} \intd s' \label{dpg_recursive}.
  \end{align}
  For simplicity of notation, we will write Equation \ref{dpg_recursive} as
  \begin{align}\label{dpg_recursive_simplified}
    \frac{\partial v_{\pi}(s)}{\partial\vec\theta} = g(s) + \int_{\States} \Ppig(s,s') \frac{\partial v_{\pi}(s')}{\partial\vec\theta} \intd s',
  \end{align}
  since $\Pfcn(s'|s,\pi(s;\vec\theta)) \gamma(s,\pi(s;\vec\theta),s')$ is a function of $s$ and $s'$ for a fixed deterministic policy.
  Then we can write $\frac{\partial v_{\pi}(s)}{\partial\vec\theta}$ as an integral transform using the delta function:
  \begin{equation}\label{dpg_delta}
    \frac{\partial v_{\pi}(s)}{\partial\vec\theta} = \int_{\States} \delta(s,s') \frac{\partial v_{\pi}(s')}{\partial\vec\theta} \intd s'.
  \end{equation}
Plugging Equation \ref{dpg_delta} into the left-hand side of Equation \ref{dpg_recursive_simplified}, we obtain:
  \begin{align}
    \int_{\States} \delta(s,s') \frac{\partial v_{\pi}(s')}{\partial\vec\theta} \intd s' = g(s) + \int_{\States} \Ppig(s,s') \frac{\partial v_{\pi}(s')}{\partial\vec\theta} \intd s' \nonumber
    \\
    \implies \int_{\States} \Big( \delta(s,s') - \Ppig(s,s') \Big) \frac{\partial v_{\pi}(s')}{\partial\vec\theta} \intd s' = g(s) \nonumber
    \\
    \implies \frac{\partial v_{\pi}(s)}{\partial\vec\theta} = \int_{\States} k(s,s') g(s') \intd s' \label{dpg_inv_kernel},
  \end{align}
  where $k(s,s')$ is the inverse kernel of $\delta(s,s') - \Ppig(s,s')$.
  Now, using the continuous version of the weighted excursions objective defined in Equation \ref{exc_obj}, we have:
  \begin{align}
    \frac{\partial J_\mu(\vec\theta)}{\partial\vec\theta} ={}& \frac{\partial }{\partial\vec\theta} \int_{\States} d_\mu(s) i(s) v_\pi(s) \intd s
    \nonumber
    \\
    ={}& \int_{\States} \frac{\partial v_\pi(s)}{\partial\vec\theta} d_\mu(s) i(s) \intd s
    \nonumber
    \\
    ={}& \int_{\States} \int_{\States} k(s,s') g(s') \intd s' \, d_\mu(s) i(s) \intd s
    \nonumber
    \\
    ={}& \int_{\States} \int_{\States} k(s,s') d_\mu(s) i(s) \intd s \, g(s') \intd s'
    \label{dpg_fubini},
  \end{align}
  where in Equation \ref{dpg_fubini} we used Fubini's theorem to switch the order of integration.
  
  Now, we convert the recursive definition of emphatic weightings for deterministic policies over continuous state-action spaces into a non-recursive form. Recall the definition:
  \begin{equation} \label{cont_m}
    m(s') = d_\mu(s')i(s') + \int_{\States} \Ppig(s,s') m(s) \intd s.
  \end{equation}
  Again, we can write $\emweight(s')$ as an integral transform using the delta function:
  \begin{align} \label{m_delta}
    m(s') = \int_{\States} \delta(s, s') m(s) \intd s.
  \end{align}
  Plugging Equation \ref{m_delta} into the left-hand side of Equation \ref{cont_m}, we obtain:
  \begin{align}
    \int_{\States} \delta(s,s') m(s) \intd s = d_\mu(s')i(s') + \int_{\States} \Ppig(s,s') m(s) \intd s
    \nonumber
    \\
    \implies \int_{\States} (\delta(s,s') - \Ppig(s,s')) m(s) \intd s = d_\mu(s')i(s')
    \nonumber
    \\
    \implies m(s') = \int_{\States} k(s,s') d_\mu(s)i(s) \intd s
    \label{dpg_m_non_recursive},
  \end{align}
  where $k(s,s')$ is again the inverse kernel of $\delta(s,s') - \Ppig(s,s')$.
  Plugging Equation \ref{dpg_m_non_recursive} into Equation \ref{dpg_fubini} yields
  \begin{align*}
    \frac{\partial J_\mu(\vec\theta)}{\partial\vec\theta} &= \int_{\States} m(s') g(s') \intd s'
    \\
    &= \int_{\States} m(s) \frac{\partial \pi(s;\vec\theta)}{\partial\vec\theta} \left. \frac{\partial q_\pi(s,a)}{\partial a}\right\rvert_{a=\pi(s;\vec\theta)} \intd s.
  \end{align*}
  
  \end{proof}

\section{Continuous State Off-Policy Policy Gradient Theorem} \label{app_thm_pg_complete}

Given the Deterministic Off-Policy Policy Gradient Theorem in Appendix \ref{deterministic_pgt_appendix}, we can now provide a proof for the continuous-state version of the Off-Policy Policy Gradient Theorem provided in Theorem \ref{grad_theorem}.

\begin{proof}
The gradient of the objective in the continuous-states case is
\begin{equation}
\frac{\partial J_\mu(\pparams) }{\partial \pparams} = \frac{\partial \int_{\States} i(s) \vpi(s) \intd s}{\partial \pparams}  = \int_{\States} i(s) \frac{\partial  \vpi(s) }{\partial \pparams} \intd s \label{continuous_gradient}.
\end{equation}
Therefore, again to compute the gradient of $J_\mu$ we need to compute the gradient of the value function with respect to the policy parameters. A recursive form of the gradient of the value function can be derived, as we show below. Before starting, for simplicity of notation, we will again use
\begin{align*}
 \gvec(s) = \sum_a \frac{\partial \pi(s,a; \pparams) }{\partial \pparams}  \qpi(s,a),
\end{align*}
where $\gvec: \States \rightarrow \RR^\pdim$. Now let us compute the gradient of the value function:
\begin{align}
\frac{\partial  \vpi(s) }{\partial \pparams} &= \frac{\partial }{\partial \pparams} \sum_a  \pi(s,a; \pparams)  \qpi(s,a) \nonumber \\
&=  \sum_a \frac{\partial \pi(s,a; \pparams) }{\partial \pparams}  \qpi(s,a) + \sum_a \pi(s,a; \pparams)  \frac{\partial \qpi(s,a) }{\partial \pparams} \nonumber \\
&=  \gvec(s) + \sum_a \pi(s,a; \pparams)  \frac{\partial \int_{\States} \Pfcn(s'|s,a) (r(s,a,s') + \gamma(s,a,s') \vpi(s')) \intd s'}{\partial \pparams} \nonumber \\
&=  \gvec(s) + \sum_a \pi(s,a; \pparams) \int_{\States} \Pfcn(s'|s,a) \gamma(s,a,s') \frac{\partial  \vpi(s')}{\partial \pparams} \intd s' \nonumber \\
&= \gvec(s) + \int_{\States} \sum_a \pi(s,a; \pparams) \Pfcn(s'|s,a) \gamma(s,a,s') \frac{\partial  \vpi(s')}{\partial \pparams} \intd s' \label{continuous_recursive}.
\end{align}
For simplicity of notation we will write Equation \ref{continuous_recursive} as
\begin{align}
\frac{\partial  \vpi(s) }{\partial \pparams} &= \gvec(s) + \int_{\States} \Ppig(s,s') \frac{\partial  \vpi(s')}{\partial \pparams} \intd s' \label{continuous_recursive_simple},
\end{align}
where $\Ppig(s,s') = \sum_a \pi(s,a; \pparams) \Pfcn(s'|s,a) \gamma(s,a,s')$.
Again, as in the deterministic off-policy policy gradient theorem, we can write $\frac{\partial v_{\pi}(s)}{\partial\vec\theta}$ as an integral transform using the delta function:
  \begin{equation}\label{continuous_delta}
    \frac{\partial v_{\pi}(s)}{\partial\vec\theta} = \int_{\States} \delta(s,s') \frac{\partial v_{\pi}(s')}{\partial\vec\theta} \intd s'.
  \end{equation}
Plugging Equation \ref{continuous_delta} into the left-hand side of Equation \ref{continuous_recursive_simple}, we obtain:
  \begin{align}
    \int_{\States} \delta(s,s') \frac{\partial v_{\pi}(s')}{\partial\vec\theta} \intd s' = \gvec(s) + \int_{\States} \Ppig(s,s') \frac{\partial v_{\pi}(s')}{\partial\vec\theta} \intd s' \nonumber
    \\
    \implies \int_{\States} \Big( \delta(s,s') - \Ppig(s,s') \Big) \frac{\partial v_{\pi}(s')}{\partial\vec\theta} \intd s' = \gvec(s) \nonumber
    \\
    \implies \frac{\partial v_{\pi}(s)}{\partial\vec\theta} = \int_{\States} k(s,s') \gvec(s') \intd s' \label{continuous_inv_kernel}.
  \end{align}
Plugging this gradient from Equation \ref{continuous_inv_kernel} back into Equation \ref{continuous_gradient}, we obtain
\begin{align}
\frac{\partial J_\mu(\pparams) }{\partial \pparams} &= \int_{\States} i(s) \frac{\partial  \vpi(s) }{\partial \pparams} \intd s \nonumber \\
&= \int_{\States} i(s) \int_{\States} k(s,s') \gvec(s') \intd s' \intd s \label{continuous_fubini} \\
&= \int_{\States} \int_{\States} k(s,s') i(s) \intd s \text{ } \gvec(s') \intd s' \label{continuous_non_recursive_emphasis}\\
&= \int_{\States} m(s') \gvec(s') \intd s' \nonumber \\
&= \int_{\States} m(s) \gvec(s) \intd s \nonumber,
\end{align}
where in Equation \ref{continuous_fubini} we used Fubini's theorem to switch the order of integration, and in Equation \ref{continuous_non_recursive_emphasis} we use the non-recursive version of emphasis as shown in Equation \ref{dpg_m_non_recursive}.

\end{proof}

\section{Standard Approaches for Sampling the Gradient from a State}\label{app_sampling}

In this section we overview the standard approach to sample the gradient from a given state, that is used in both ACE and OffPAC. This section reiterates known results, but is included here for completeness. 
 
\subsection{Sampling the Gradient for a Given State}
For a given state, we can use the same strategies to sample the gradient as OffPAC, and many other actor-critic algorithms. We can use the log-likelihood approach, and incorporate baselines. For continuous actions, we can also use other strategies like the reparameterization trick. For concreteness, we explicitly explain how we sample this gradient here, using a value baseline and the log-likelihood approach, but emphasize that this can be replaced with other choices. 

Under finite states, the simplest approach to compute the gradient for a given state is to use what is sometimes called the \emph{all-actions} gradient
\begin{align*}
\sum_a \frac{\partial \pi(a|s; \pparams) }{\partial \pparams}  \qpi(s,a).
\end{align*}
If there are many actions or if the action space is continuous, this gradient is not an appropriate choice. Instead, we can obtain a sample estimate of this gradient using the log-likelihood trick: 
\begin{equation*}
 \sum_a \frac{\partial \pi(a|s; \pparams) }{\partial \pparams}  \qpi(s,a) 
 =  \sum_a \pi(a|s; \pparams) \frac{\partial \log \pi(a|s; \pparams) }{\partial \pparams}  \qpi(s,a),
\end{equation*}
which follows from the fact that the derivative of $\log y$ is $\frac{1}{y}$. Above we called this log likelihood form $\grads{s}$. 
Now we can use one action sampled according to $\pi$ to obtain an unbiased estimate of the all-actions gradient. In the off-policy setting, however, we do not use $\pi$ to select actions. So, additionally, we can incorporate importance sampling $\rho(s,a; \pparams) \defeq \frac{\pi(a|s;\pparams)}{\mu(a|s)}$ to adjust for the fact that the actions are taken according to behaviour policy $\mu$: 
\begin{align*}
 \sum_a \frac{\partial \pi(a|s; \pparams) }{\partial \pparams}  \qpi(s,a) 
 &=  \sum_a \mu(a|s) \frac{1}{ \mu(a|s)} \pi(a|s; \pparams) \frac{\partial \log \pi(a|s; \pparams) }{\partial \pparams}  \qpi(s,a)\\
 &=  \sum_a \mu(a|s) \rho(s,a; \pparams) \frac{\partial \log \pi(a|s; \pparams) }{\partial \pparams}  \qpi(s,a)
 .
\end{align*}
We can obtain an unbiased sample of the gradient by sampling an action $A \sim \mu(s, \cdot)$ and using $g(A) \defeq \rho(s,A; \pparams) \frac{\partial \log \pi(a|s; \pparams) }{\partial \pparams}  \qpi(s,A)$. 

The estimated gradient has more variance when we only use a sampled action rather than all actions. The standard strategy to reduce this variance is to subtract a
baseline. The baseline is a function of state, such as an estimate of the value function $v(s)$, with modified update 
\begin{align*}
\rho(s,a; \pparams) \frac{\partial \log \pi(a|s; \pparams) }{\partial \pparams}  [\qpi(s,a) - v(s)]
 .
\end{align*}
It is straightforward to show that the expected value of this update is the same, i.e. that
\begin{align*}
\sum_a \mu(a | s) \rho(s,a; \pparams) \frac{\partial \log \pi(a|s; \pparams) }{\partial \pparams}  [\qpi(s,a) - v(s)] = \sum_a \mu(a | s) \rho(s,a; \pparams) \frac{\partial \log \pi(a|s; \pparams) }{\partial \pparams}  \qpi(s,a)
 .
\end{align*}
Note that this is not the minimal variance baseline \citep{dick2015policy}, but it nonetheless is one of the most commonly chosen in practice due to its efficacy and simplicity. 

\subsection{Estimating the Advantage}
 
In practice, it is difficult to obtain $\qpi(s,a)$ to compute this gradient exactly. Instead, we obtain (a likely biased) sample of the advantage, $\delta_t \approx \qpi(S_t,A_t) - v(s_t)$. An unbiased but high variance estimate uses a sample of the return $G_t$ from $(s_t,a_t)$, and uses $\delta_t = G_t - v(s_t)$. Then $\mathbb{E}[\delta_t | S_t = s, A_t = a] = q_\pi(s,a) - v(s)$. On the other extreme, we can directly approximate the action-values, $\hat q(s,a)$. In-between, we can use $n$-step returns, or $\lambda$-returns, to balance between using observed rewards and value estimates. 
 
 The simplest choice is to use a 1-step return: $r(s,a) + \gamma(s,a,S_{t+1}) v(S_{t+1})$ is used as an approximation of $\qpi(s,a)$. The value function $v$ is typically called the critic, and plays both the role of estimating the return as well as that of a baseline. The advantage is set to $\delta_t = R_{t+1} + \gamma_{t+1} v(S_{t+1}) - v(S_t)$, which is simply the TD error. 
 
 More generally, however, we can also use multi-step returns. For an $n$-step return, with data sampled on-policy, with $\gamma_{t+1:t+n} \defeq \prod_{j=1}^{n} \gamma_{t+j}$ as short-hand for the product of discounts, we would use 
 \begin{equation*}
 \delta_t = R_{t+1}  + \gamma_{t+1} R_{t+2} + \ldots \gamma_{t+1:t+n-1}R_{t+n} +  \gamma_{t+1:t+n}  v(S_{t+n}) - v(S_t)
 .
 \end{equation*}
  With off-policy data, we would need to incorporate importance sampling ratios 
  \begin{equation*}
  \delta_t = R_{t+1}  + \rho_{t+1}\gamma_{t+1} R_{t+2} + \ldots \rho_{t+1:t+n-1}\gamma_{t+1:t+n-1} R_{t+n} +  \rho_{t+1:t+n}\gamma_{t+1:t+n}v(S_{t+n}) - v(S_t),
  \end{equation*}
 where $\rho_{t+1:t+n} \defeq \prod_{j=1}^{n} \rho_{t+j}$.
 For sufficiently large $n$, we recover the sampled return and so obtain an unbiased---but likely high-variance---estimate of the advantage. An interim $n$ likely provides a reasonable choice between bias and variance, between $n = 0$ with a direct estimation of $q_\pi(s,a)$, and large $n$ that gives an unbiased sample of the return.
 
 We can also average across multiple $n$, and obtain $\lambda$-returns \citep{sutton2018reinforcement}. For larger $\lambda$, more weight is put on the $n$-step returns with larger $n$, and for $\lambda = 1$ we once again recover an unbiased sample of the return. Such an approach was used in OffPAC, which requires storing traces of gradients of policy parameters (see Algorithm 1 in \citealp{degris2012offpolicy}). Note that this $\lambda$-return is used for the policy update. The update to the value estimate (critic) itself can use a different $\lambda$, as it stores a separate trace for the gradient of the value function. The update with eligibility traces is only sound under linear function approximation for the values and linear function approximation for action-preferences. In this work, therefore, we opt for the simplest choice: $n=1$ with $\delta_t = R_{t+1} + \gamma_{t+1} v(S_{t+1}) - v(S_t)$. 
 
 Finally, we need to estimate the values themselves. Any value approximation algorithm can be used to estimate $v$, such as TD, gradient TD or even emphatic TD. Given we already compute the emphasis weighting, it is straightforward to use emphatic TD. We investigate both a gradient TD critic and an emphatic TD critic in the experiments.

\section{Proof of Theorem \ref{ent_grad_theorem}}\label{ent_grad_appendix}

\subsection{Entropy-regularized OPPG theorem with discrete states and actions}
\begin{proof}
The proof for this theorem is mostly similar to the one for Theorem \ref{grad_theorem}. First, since the interest function does not depend on the parameters,
\begin{equation*}
\frac{\partial \tilde{J}_\mu(\pparams) }{\partial \pparams} = \frac{\partial \sum_{s \in \States} i(s) \vtpi(s) }{\partial \pparams} = \sum_{s \in \States} i(s) \frac{\partial  \vtpi(s) }{\partial \pparams}.
\end{equation*}
Recall that $\vtpi(s) = \sum_a \pi(a|s; \pparams) \qtpi(s,a) + \tau \entropy( \pi(\cdot|s;\pparams))$. 
We define the following notation which helps with the recursive expression of the gradient:
\begin{align*}
 \gtvec(s) = \sum_a \frac{\partial \pi(a|s; \pparams) }{\partial \pparams}  \qtpi(s,a) + \tau \frac{\partial }{\partial \pparams}  \entropy( \pi(\cdot|s;\pparams)),
\end{align*}
where $\gtvec: \States \rightarrow \RR^\pdim$. 
The gradient of the entropy regularized value function is then
\begin{align}
\frac{\partial  \vtpi(s) }{\partial \pparams} &= \frac{\partial }{\partial \pparams} \left[ \sum_a  \pi(a|s; \pparams) \qtpi(s,a) + \tau \entropy( \pi(\cdot|s;\pparams))\right]\nonumber \\
=&  \sum_a \frac{\partial \pi(a|s; \pparams) }{\partial \pparams}  \qtpi(s,a) + \sum_a \pi(a|s; \pparams)  \frac{\partial \qtpi(s,a) }{\partial \pparams} + \tau \frac{\partial }{\partial \pparams}  \entropy( \pi(\cdot|s;\pparams)) \label{ent_eq_product_rule} \\
=&  \gtvec(s) + \sum_a \pi(a|s; \pparams)  \frac{\partial \sum_{s'} p(s'|s,a) (r(s,a,s') + \gamma(s,a,s') \vtpi(s'))}{\partial \pparams}  \nonumber \\
=&  \gtvec(s) + \sum_a \pi(a|s; \pparams) \sum_{s'} p(s'|s,a) \gamma(s,a,s') \frac{\partial  \vtpi(s')}{\partial \pparams}  \nonumber.
\end{align}
So the gradient of the regularized value function can be written in vector form similar to the unregularized version. Let $\vtpivecdot \in \RR^{|\States| \times \pdim}$ be the matrix of gradients of $\vtpi$ for each state $s$; and $\Gtmat \in \RR^{|\States| \times \pdim}$ the matrix where each row corresponds to state $s$ in the vector $\gtvec(s)$. Then
\begin{align}
\vtpivecdot  &=  \Gtmat + \Ppig \vtpivecdot \label{ent_eq_bellman_gradient}\\
\implies \vtpivecdot &= (\eye - \Ppig)^\inv \Gtmat \nonumber.
\end{align}
Therefore, we obtain
\begin{align*}
\sum_{s \in \States} i(s) \frac{\partial  \vtpi(s) }{\partial \pparams} 
&= \ivec^\top \vtpivecdot
=  \ivec^\top (\eye - \Ppig)^\inv \Gtmat
= \emvec^\top \Gtmat\\
&= \sum_{s \in \States} \emweight(s) \left[\sum_a \frac{\partial \pi(a|s; \pparams) }{\partial \pparams}  \qtpi(s,a) + \tau \frac{\partial }{\partial \pparams}  \entropy( \pi(\cdot|s;\pparams))\right]
.
\end{align*}
\par
We can further simplify this gradient by explicitly computing the gradient of the negative of the entropy:
\begin{align*}
\frac{\partial \text{-}\entropy(\pi(\cdot |s;\pparams)) }{\partial \pparams} 
&= \sum_a  \frac{\partial \pi(a|s; \pparams)}{\partial \pparams}  \log \pi(a|s;\pparams) + \sum_a \pi(a|s; \pparams) \frac{\partial \log \pi(a|s;\pparams) }{\partial \pparams}
\\
&= \sum_a  \frac{\partial \pi(a|s; \pparams)}{\partial \pparams}  \log \pi(a|s;\pparams) + \sum_a \frac{\partial \pi(a|s; \pparams)}{\partial \pparams}
\\
&= \sum_a  \frac{\partial \pi(a|s; \pparams)}{\partial \pparams}  \log \pi(a|s;\pparams) ,
\end{align*}
where the last line follows from the fact that $\sum_a \frac{\partial \pi(a|s; \pparams)}{\partial \pparams} = \frac{\partial}{\partial \pparams}\sum_a \pi(a|s; \pparams) =  \frac{\partial}{\partial \pparams} 1 = \zerovec$.
\end{proof}

This form looks a bit different than the (on-policy) policy gradient update given by \citep[Theorem 6]{geist2019theory}. We can actually get a similar form by rewriting the inner term above. For a specific $s$, using chain rule on the entropy we obtain
\begin{align*}
\sum_a \frac{\partial \pi(a|s; \pparams) }{\partial \pparams}  \qtpi(s,a) + \tau \frac{\partial }{\partial \pparams}  \entropy( \pi(\cdot|s;\pparams))
&= 
\sum_a \frac{\partial \pi(a|s; \pparams) }{\partial \pparams}  \qtpi(s,a) + \tau \frac{\partial \pi(a|s; \pparams) }{\partial \pparams} \frac{\partial \entropy( \pi(\cdot|s;\pparams))}{\partial \pi(a|s; \pparams)} \\
&= 
\sum_a \frac{\partial \pi(a|s; \pparams) }{\partial \pparams}  \left[\qtpi(s,a) + \tau  \frac{\partial \entropy( \pi(\cdot|s;\pparams))}{\partial \pi(a|s; \pparams)} \right]
.
\end{align*}

\subsection{Regularized OPPG theorem with continuous states and actions}
\begin{theorem}
\begin{align*}
\frac{\partial \tilde{J}_\mu(\pparams) }{\partial \pparams} 
&= \int_{\SS} m(s) \left[ \int_{\AA} \frac{\partial \pi(a|s;\pparams)}{\partial\pparams} \tilde{q}_\pi(s,a) \intd a + \tau \frac{\partial }{\partial\pparams} \Omega(\pi(\cdot|s;\pparams)) \right] \intd s
\end{align*}
\end{theorem}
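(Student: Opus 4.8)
The plan is to follow the continuous-state argument of Appendix~\ref{app_thm_pg_complete} almost verbatim, substituting the entropy-regularized (indeed, generic regularizer) quantities in place of their unregularized counterparts. The only new ingredient relative to the unregularized continuous proof is the extra term $\tau\,\frac{\partial}{\partial\pparams}\Omega(\pi(\cdot|s;\pparams))$ that rides along inside the per-state gradient. Because the regularizer depends on the policy only through the current state $s$ and not through the transition dynamics, it never enters the bootstrapped part of the recursion; it is simply carried through to the final expression, exactly as the entropy term was in the discrete proof of Theorem~\ref{ent_grad_theorem}.

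First I would use that the interest $i$ is parameter-independent to write $\frac{\partial \tilde{J}_\mu(\pparams)}{\partial\pparams} = \int_\SS i(s)\,\frac{\partial \vtpi(s)}{\partial\pparams}\intd s$, and define the per-state gradient
\[
\gtvec(s) \defeq \int_\AA \frac{\partial\pi(a|s;\pparams)}{\partial\pparams}\qtpi(s,a)\intd a + \tau\frac{\partial}{\partial\pparams}\Omega(\pi(\cdot|s;\pparams)).
\]
Next I would differentiate $\vtpi(s) = \int_\AA \pi(a|s;\pparams)\qtpi(s,a)\intd a + \tau\Omega(\pi(\cdot|s;\pparams))$ via the product rule (mirroring Equation~\ref{ent_eq_product_rule}) and expand $\qtpi$ through its one-step Bellman form. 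Since $r$ and $\gamma$ do not depend on $\pparams$, the only surviving bootstrap term is $\gamma\,\partial\vtpi(s')/\partial\pparams$, yielding the recursion
\[
\frac{\partial\vtpi(s)}{\partial\pparams} = \gtvec(s) + \int_\SS \Ppig(s,s')\frac{\partial\vtpi(s')}{\partial\pparams}\intd s',
\]
with $\Ppig(s,s') = \int_\AA \pi(a|s;\pparams)\Pfcn(s'|s,a)\gamma(s,a,s')\intd a$.

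To solve the recursion I would, as in Appendix~\ref{app_thm_pg_complete}, rewrite the left side as an integral transform against the delta kernel, subtract, and invert $\delta(s,s')-\Ppig(s,s')$ using the inverse kernel $k(s,s')$ guaranteed by Assumption~\ref{assumption:inverse}, giving $\partial\vtpi(s)/\partial\pparams = \int_\SS k(s,s')\gtvec(s')\intd s'$. Substituting back into the objective gradient, applying Fubini's theorem to swap the order of integration, and invoking the non-recursive form of the emphatic weighting from Equation~\ref{dpg_m_non_recursive}, namely $m(s') = \int_\SS k(s,s')\,d_\mu(s)i(s)\intd s$, collapses the double integral to $\int_\SS m(s)\gtvec(s)\intd s$, which is precisely the claimed identity.

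The main obstacle is justifying the analytic interchanges rather than the algebra: I must certify that the Leibniz rule (differentiating under the integrals over $\AA$ and $\SS$), the inversion of $\delta-\Ppig$, and the Fubini swap are all licensed. These follow from Assumptions~\ref{assumption:continuous}--\ref{assumption:inverse} of Appendix~\ref{dpg_assumptions}, \emph{provided} I additionally impose that $\Omega(\pi(\cdot|s;\pparams))$ and its parameter-gradient are continuous and bounded on the compact domain. This extra condition is the continuous-action analogue of the observation made in the discrete proof that differential entropy, unlike its discrete counterpart, need not be bounded, so regularity must be assumed on $\Omega$ directly rather than inherited for free. With those boundedness conditions in hand, every interchange above is valid and the proof closes.
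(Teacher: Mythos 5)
Your proposal follows the paper's own proof essentially step for step: the same definition of $\gtvec(s)$, the same recursion via the product rule and one-step Bellman expansion, the same delta-kernel inversion yielding $\partial \vtpi(s)/\partial\pparams = \int_\SS k(s,s')\gtvec(s')\intd s'$, and the same Fubini swap combined with the non-recursive emphatic weighting to collapse to $\int_\SS m(s)\gtvec(s)\intd s$. Your closing observation---that continuity and boundedness of $\Omega$ and its parameter-gradient must be assumed explicitly rather than inherited from Assumptions \ref{assumption:continuous}--\ref{assumption:inverse}, which only cover $\Pfcn$, $r$, $\gamma$, and $\pi$---is a fair point that the paper leaves implicit, but it refines rather than departs from the paper's argument.
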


\begin{proof}
We extend the result from the previous section to the continuous state and action setting using a similar strategy to the one we used in Appendix \ref{app_thm_pg_complete} to extend Theorem \ref{grad_theorem} to the continuous state setting.
In addition, we opt for a generic regularizer $\Omega(\pi(\cdot|s;\pparams)) : \Delta(\AA) \to \mathbb{R}$,
which slightly modifies the definition of the regularized value functions to be
\begin{align*}
  \vtpi(s) &= \int_\AA \pi(a|s; \pparams) \qtpi(s,a) \intd a + \tau \Omega(\pi(\cdot|s;\pparams)) \qquad \forall s \in \SS
  \\
  \qtpi(s,a) &= \int_{\SS} P(s'|s,a) \big[r(s,a,s') + \gamma(s,a,s') \vtpi(s') \big] \intd s' \qquad \forall s \in \SS, \forall a \in \AA
  .
\end{align*}
We use the following notation to simplify the derivation:
\begin{align*}
  \gtvec(s) &= \int_{\AA} \frac{\partial \pi(a|s;\pparams)}{\partial\pparams} \tilde{q}_\pi(s,a) \intd a + \tau \frac{\partial }{\partial\pparams} \Omega(\pi(\cdot|s;\pparams))
  .
\end{align*}
Starting with the continuous objective function from equation (\ref{exc_obj}) and using the Leibniz integral rule to differentiate under the integral (which applies because we assumed the state space is a compact set in Assumption \ref{assumption:compact} from Appendix \ref{dpg_assumptions}):
\begin{align}
  \frac{\partial J_{\mu}(\pparams)}{\partial \pparams}
  = \frac{\partial }{ \partial \pparams} \int_{\SS} i(s) \tilde{v}_\pi(s) \intd s
  = \int_{\SS} i(s) \frac{ \partial \tilde{v}_\pi(s) }{ \partial \pparams } \intd s \label{eq:exc_obj_cont_s_cont_a}.
\end{align}
Next, we find a recursive expression for the gradient of the regularized value function:
\begin{align*}
\frac{\partial  \vtpi(s) }{\partial \pparams} 
&= \frac{\partial }{\partial \pparams} \int_\AA \pi(a|s; \pparams) \qtpi(s,a) \intd a + \tau \Omega(\pi(\cdot|s;\pparams))
\\
&= \int_\AA \frac{\partial \pi(a|s; \pparams)}{\partial \pparams} \qtpi(s,a) \intd a + \int_\AA \pi(a|s; \pparams) \frac{\partial \qtpi(s,a)}{\partial \pparams} \intd a + \tau \frac{\partial }{\partial \pparams} \Omega(\pi(\cdot|s;\pparams))
\\
&= \gtvec(s) + \int_\AA \pi(a|s; \pparams) \frac{\partial \qtpi(s,a)}{\partial \pparams} \intd a
\\
&= \gtvec(s) + \int_\AA \pi(a|s; \pparams) \frac{\partial }{\partial \pparams} \int_{\SS} P(s'|s,a) \big[r(s,a,s') + \gamma(s,a,s') \vtpi(s') \big] \intd s' \intd a
\\
&= \gtvec(s) + \int_\AA \pi(a|s; \pparams) \int_{\SS} P(s'|s,a) \gamma(s,a,s') \frac{\partial \vtpi(s')}{\partial \pparams} \intd s' \intd a
\\
&= \gtvec(s) + \int_{\SS} \int_\AA \pi(a|s; \pparams) P(s'|s,a) \gamma(s,a,s') \intd a \frac{\partial \vtpi(s')}{\partial \pparams} \intd s',
\end{align*}
where in the last step we used Fubini's Theorem to switch the order of integrals (which we can do because the absolute value of the integral is finite due to Assumptions \ref{assumption:continuous} and \ref{assumption:compact}).
Again we will use the shorthand $\Ppig(s,s') = \int_\AA \pi(a|s; \pparams) P(s'|s,a) \gamma(s,a,s') \intd a$ to simplify the derivation.
Next we write $\frac{\partial \vtpi(s)}{\partial \pparams}$ as the integral transform
\begin{align*}
\frac{\partial \vtpi(s)}{\partial \pparams} &= \int_\SS \delta(s,s') \frac{\partial \vtpi(s')}{\partial \pparams} \intd s'
,
\end{align*}
plug it into the previous equation, and solve to get
\begin{align*}
\int_\SS \delta(s,s') \frac{\partial \vtpi(s')}{\partial \pparams} \intd s' &= \gtvec(s) + \int_{\SS} \Ppig(s,s') \frac{\partial \vtpi(s')}{\partial \pparams} \intd s'
\\
\int_\SS \left(\delta(s,s') - \Ppig(s,s')\right) \frac{\partial \vtpi(s')}{\partial \pparams} \intd s' &= \gtvec(s)
\\
\implies \frac{\partial \vtpi(s)}{\partial \pparams} &= \int_\SS k(s,s') \gtvec(s') \intd s',
\end{align*}
where $k(s,s')$ is the inverse kernel of $\delta(s,s') - \Ppig(s,s')$.
Then plugging the above into Equation (\ref{eq:exc_obj_cont_s_cont_a}), interchanging integrals using Fubini's theorem, and simplifying gives
\begin{align*}
\frac{\partial J_{\mu}(\pparams)}{\partial \pparams}
&= \int_{\SS} i(s) \int_\SS k(s,s') \gtvec(s') \intd s' \intd s
\\
&= \int_{\SS} \int_\SS i(s) k(s,s') \intd s \gtvec(s') \intd s'
\\
&= \int_{\SS} m(s') \gtvec(s') \intd s'
\\
&= \int_{\SS} m(s) \gtvec(s) \intd s
\\
&= \int_{\SS} m(s) \left[ \int_{\AA} \frac{\partial \pi(a|s;\pparams)}{\partial\pparams} \tilde{q}_\pi(s,a) \intd a + \tau \frac{\partial }{\partial\pparams} \Omega(\pi(\cdot|s;\pparams)) \right] \intd s.
\end{align*}

\end{proof}

\section{Proof of Proposition \ref{prop_suboptimality}}\label{suboptimality_appendix}

This section proves Proposition \ref{prop_suboptimality}. First, a result by \cite{mei2020global} for tabular domains is extended to state aggregation and transition dependent discounting (which includes the three-state counterexample) to show that entropy-regularized policy gradient converges to a point where the gradient is zero. Then, it is shown that such point is not a stationary point for semi-gradient updates.

\newcommand{\rep}{\text{alias-rep}}
\newcommand{\repset}{\mathcal{S}_{\text{rep}}}

\subsection{Entropy-regularized PG under State Aggregation}

We assume the policy is a softmax policy and additionally specifically characterize the gradient under state aggregation. This specific characterization facilitates showing that the solution to the objective lies on the interior of the simplex, and so that a stationary point exists. 

We define $\alias(s)$ as the set of state that share their representation with $s$ including $s$ itself. We additionally defined $\repset \subset \States$ the set of representative states, one for each bin in the aggregation.
For example, in the three-state counterexample $\alias(s_0) = \{s_0\}$ and $\alias(s_1) = \alias(s_2) = \{s_1, s_2\}$. We simply need to choose one state in each aliased set, giving $\repset = \{s_0, s_1\}$. 

For a parameter set $\pparams$, the policy is a softmax transform defined as follows. For a state $s'$ with representative state $s$, i.e., $s' \in \alias(s)$, we have
\begin{equation}
  \pi(s',a; \pparams) = \frac{\exp\bigl(\pparams(s,a)\bigr)}{\sum_{a'} \exp\bigl(\pparams(s,a')\bigr)} \label{eq_softmax}.
\end{equation}
Using again the three-state counterexample, $\pparams$ has four components: $\pparams(s_0,a_0)$ and $\pparams(s_0,a_1)$ specify the policy for $s_0$, and $\pparams(s_1,a_0)$ and $\pparams(s_1,a_1)$ specify the policy for both $s_1$ and $s_2$ since these two states are aliased. 

The softmax policy has a simple well-known gradient, which we can explicitly write for state-aggregation in the following lemma for easy reference in later proofs.
\begin{lemma} \label{lemma_H}
Assume the policy uses a softmax distribution, as in Equation \ref{eq_softmax}.
For any state $s \in \repset$, for any $s' \in \alias(s)$, 
\begin{equation}
\frac{\partial \pi(s',a'; \pparams)}{\partial \pparams(s,a)} = \left\{ \begin{array}{cc}
\pi(s',a; \pparams) [1 - \pi(s',a; \pparams)] & \mbox{ if $a' = a$} \\
-\pi(s',a; \pparams) \pi(s',a'; \pparams) &  \mbox{ if $a' \neq a$} 
\end{array} \right. \label{eq_grad_softmax}
\end{equation}
and, across all actions, this can be compactly written as
\begin{equation*}
\frac{\partial \pi(s',a'; \pparams)}{\partial \pparams(s,\cdot)} = \diag(\pi(s,.; \pparams)) - \pi(s,.; \pparams)\pi(s,.; \pparams)^\top.
\end{equation*}
\end{lemma}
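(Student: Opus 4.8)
The plan is to treat this as the standard softmax Jacobian computation, with the only additional care being the bookkeeping introduced by state aggregation. First I would fix a representative state $s \in \repset$ and an aliased state $s' \in \alias(s)$, and observe from Equation \ref{eq_softmax} that the probabilities $\pi(s',\cdot;\pparams)$ depend on $\pparams$ only through the block $\pparams(s,\cdot)$ attached to the representative state; indeed $\pi(s',a;\pparams) = \pi(s,a;\pparams)$ for every action $a$, since both are defined by the same softmax over $\pparams(s,\cdot)$. Consequently the derivative with respect to any $\pparams(s'',\cdot)$ with $s'' \neq s$ vanishes, and it suffices to differentiate with respect to the entries $\pparams(s,a)$.

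Next I would carry out the elementary differentiation. Writing the normalizer $Z \defeq \sum_{a''}\exp\bigl(\pparams(s,a'')\bigr)$, so that $\pi(s',a';\pparams) = \exp(\pparams(s,a'))/Z$, I would apply the quotient rule in the two cases. When $a' = a$, both the numerator and $Z$ depend on $\pparams(s,a)$, and collecting terms gives $\pi(s',a;\pparams)\bigl[1 - \pi(s',a;\pparams)\bigr]$. When $a' \neq a$, only $Z$ depends on $\pparams(s,a)$, and the derivative is $-\pi(s',a';\pparams)\,\pi(s',a;\pparams)$. These are exactly the two branches of Equation \ref{eq_grad_softmax}.

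Finally I would assemble the scalar derivatives into the Jacobian over all actions: the diagonal entries ($a'=a$) are $\pi(s',a;\pparams)\bigl[1-\pi(s',a;\pparams)\bigr]$ and the off-diagonal entries ($a' \neq a$) are $-\pi(s',a;\pparams)\,\pi(s',a';\pparams)$, which match entry-for-entry the matrix $\diag(\pi(s,\cdot;\pparams)) - \pi(s,\cdot;\pparams)\pi(s,\cdot;\pparams)^\top$, using $\pi(s',\cdot;\pparams) = \pi(s,\cdot;\pparams)$ to phrase everything in terms of the representative state's probability vector. There is no genuine obstacle here, as this is the textbook softmax derivative; the only point worth stating explicitly is that aggregation simply identifies the parameters of all states in $\alias(s)$ with those of $s$, so each per-state derivative is unchanged from the tabular case and is merely shared across the aliased states.
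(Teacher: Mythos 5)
Your proposal is correct and follows essentially the same route as the paper: both arguments reduce the claim to the textbook softmax Jacobian, using the fact that state aggregation identifies $\pi(s',\cdot;\pparams)$ with $\pi(s,\cdot;\pparams)$ so that only the block $\pparams(s,\cdot)$ matters. The only (cosmetic) difference is that you differentiate $\exp(\pparams(s,a'))/Z$ via the quotient rule, while the paper writes $\frac{\partial \pi}{\partial \pparams(s,a)} = \pi \cdot \frac{\partial \log \pi}{\partial \pparams(s,a)}$ and differentiates the log-probability; the two computations are equivalent and yield the identical case analysis.
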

\begin{proof}
For explicit step to obtain this derivation, notice first that
\begin{align*}
\log \pi(s',a; \pparams) = \pparams(s,a)- \log \sum_{a'} \exp(\pparams(s,a'))
\end{align*}
and so 
\begin{align*}
\frac{\partial \pi(s',a'; \pparams)}{\partial \pparams(s,a)} 
&= \pi(s',a'; \pparams) \frac{\partial \log \pi(s',a; \pparams)}{\partial \pparams(s,a)} \\
&= \pi(s',a'; \pparams) \frac{\partial [\pparams(s,a')- \log \sum_{a''} \exp(\pparams(s,a''))]}{\partial \pparams(s,a)} \\
&= \pi(s',a'; \pparams) \left[\frac{\partial \pparams(s,a')}{\partial \pparams(s,a)} -  \frac{\partial \log \sum_{a''} \exp(\pparams(s,a''))}{\partial \pparams(s,a)}\right] .
\end{align*}
If $a' = a$, then 
\begin{align*}
\frac{\partial \pparams(s,a')}{\partial \pparams(s,a)} -  \frac{\partial \log \sum_{a''} \exp(\pparams(s,a''))}{\partial \pparams(s,a)}
&=  1 -  \frac{1}{\sum_{a''} \exp(\pparams(s,a'')} \frac{\partial \sum_{a''} \exp(\pparams(s,a''))}{\partial \pparams(s,a)} \\
&= 1 -  \frac{1}{\sum_{a''} \exp(\pparams(s,a''))} \exp(\pparams(s,a) \\
&= 1 - \pi(a|s; \pparams)\\
&= 1 - \pi(s',a; \pparams) ,
\end{align*}
where the last step follows from the fact that $\pi(s',a; \pparams) = \pi(a|s; \pparams)$ under state aggregation.  
If $a' \neq a$, then 
\begin{align*}
\frac{\partial \pparams(s,a')}{\partial \pparams(s,a)} -  \frac{\partial \log \sum_{a''} \exp(\pparams(s,a''))}{\partial \pparams(s,a')}
&= 0 - \pi(a|s; \pparams) = - \pi(s',a; \pparams)
.
\end{align*}
\end{proof}

The gradient ascent update using the entropy-regularized objective is
\begin{equation}
\pparams_{t+1} = \pparams_t + \eta \frac{\partial \tilde{J}_\mu(\pparams_t) }{\partial \pparams} \label{eq_pg_update}
\end{equation}
for stepsize $\eta > 0$. 
For an appropriately small stepsize $\eta$, this will converge to a stationary point if one exists, and otherwise move to a point on the simplex, where $\pparams_t(s,a) \rightarrow \infty$ for an action $a$ in state $s$. The standard condition on $\eta$ is that $\eta < 1/L$ for Lipschitz constant $L$ of $\tilde{J}_\mu$, which is a common assumption for policy gradient methods. In the next section, we show that this update converges to a stationary on the interior of the policy simplex. 

We provide one more result, that is useful for this characterization.
Lemma \ref{lemma_breakdown} breaks down the gradient into its components. \cite{mei2020global}, in their Lemma 10, proved this result for entropy regularized policy gradients, assuming tabular policies in the on-policy setting. We extend it to state aggregation, with our off-policy gradient. Their presentation has an extra term $- \tau \log \pi(a|s;\pparams)$. This discrepancy is just a difference in notation. In our formulation, entropy regularized action values are defined to contain the entropy term so that the connection between the regularized and unregularized policy gradient is clearer.
\begin{lemma}\label{lemma_breakdown}
For each state $s \in \repset$ and action $a$
\begin{align*}
\frac{\partial \tilde{J}_\mu(\pparams) }{\partial \pparams(s,a)} &= \sum_{s'\in\alias(s)}\emweight(s') \pi(s',a;\pparams)  \biggl[\qtpi(s',a) - \vtpi(s') \biggr].
\end{align*}
\end{lemma}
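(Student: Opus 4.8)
The plan is to obtain the per-coordinate gradient by differentiating the closed form already supplied by Theorem~\ref{ent_grad_theorem} with respect to a single parameter $\pparams(s,a)$ with $s \in \repset$, and then to use the softmax gradient identity of Lemma~\ref{lemma_H} to collapse the resulting action sum into the advantage form claimed. First I would write, starting from the second line of Theorem~\ref{ent_grad_theorem},
\[
\frac{\partial \tilde{J}_\mu(\pparams)}{\partial \pparams(s,a)} = \sum_{s'' \in \States} \emweight(s'') \sum_{a'} \frac{\partial \pi(a'|s''; \pparams)}{\partial \pparams(s,a)} \bigl[\qtpi(s'',a') - \tau \log \pi(a'|s''; \pparams)\bigr].
\]
The key structural observation is that under state aggregation the policy at $s''$ depends on $\pparams(s,\cdot)$ only when $s'' \in \alias(s)$; for every other $s''$ the derivative $\partial \pi(a'|s''; \pparams)/\partial \pparams(s,a)$ vanishes. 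Hence the outer sum over $\States$ restricts exactly to $s' \in \alias(s)$, which is the index set appearing in the statement.

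Next, for each fixed $s' \in \alias(s)$ I would substitute the softmax derivative of Lemma~\ref{lemma_H}, separating the $a'=a$ term from the $a'\neq a$ terms. Writing $c(a') \defeq \qtpi(s',a') - \tau \log \pi(a'|s'; \pparams)$, the two cases merge into the centered form
\[
\sum_{a'} \frac{\partial \pi(a'|s'; \pparams)}{\partial \pparams(s,a)}\, c(a') = \pi(s',a;\pparams)\Bigl[c(a) - \sum_{a'} \pi(s',a';\pparams)\, c(a')\Bigr],
\]
the familiar ``term minus its policy-average'' structure that any softmax parameterization produces. This is the one genuinely computational step; the main thing to get right is the bookkeeping of the $\pi(s',a;\pparams)^2$ contributions so that the $a'=a$ and $a'\neq a$ pieces combine into the single policy-centered sum above.

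Finally I would identify the policy average with the regularized value. By definition, $\vtpi(s') = \sum_{a'} \pi(a'|s';\pparams)\qtpi(s',a') + \tau \entropy(\pi(\cdot|s';\pparams))$, and since $\tau \entropy(\pi(\cdot|s';\pparams)) = -\tau \sum_{a'}\pi(a'|s';\pparams)\log\pi(a'|s';\pparams)$, we get precisely $\sum_{a'} \pi(s',a';\pparams)\,c(a') = \vtpi(s')$. Substituting back yields
\[
\frac{\partial \tilde{J}_\mu(\pparams)}{\partial \pparams(s,a)} = \sum_{s'\in\alias(s)} \emweight(s')\, \pi(s',a;\pparams)\bigl[\qtpi(s',a) - \tau\log\pi(a|s';\pparams) - \vtpi(s')\bigr].
\]

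This is the stated identity once we adopt the convention, flagged in the remark preceding the lemma, that the regularized action-value carries the $-\tau\log\pi(a|s)$ term, so that $\vtpi(s') = \E_\pi[\qtpi(S,A)\mid S=s']$. I expect the only real obstacle to be precisely this notational reconciliation between the two conventions for $\qtpi$; the aliasing restriction and the softmax algebra are routine. As a cross-check, one can instead start from the first (entropy-gradient) form in Theorem~\ref{ent_grad_theorem} and apply the identity $\partial(-\entropy(\pi(\cdot|s';\pparams)))/\partial\pparams = \sum_{a'} \bigl(\partial \pi(a'|s';\pparams)/\partial\pparams\bigr)\log\pi(a'|s';\pparams)$ derived within the proof of that theorem, arriving at the same centered expression.
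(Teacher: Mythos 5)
Your proof is correct and takes essentially the same route as the paper's: both restrict the state sum to $\alias(s)$ via the aggregation structure, apply the softmax derivative of Lemma~\ref{lemma_H} to collapse the action sum into the policy-centered form $\pi(s',a;\pparams)\bigl[c(a) - \sum_{a'}\pi(s',a';\pparams)c(a')\bigr]$, and identify the policy average with $\vtpi(s')$. The only (cosmetic) difference is that the paper works directly in the appendix convention where $\qtpi$ already absorbs the $-\tau\log\pi(a|s;\pparams)$ term, so the notational reconciliation you carry out explicitly at the end is handled there implicitly---and your version is, if anything, the more careful accounting of that step.
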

\begin{proof}
According to Theorem \ref{ent_grad_theorem}
\begin{align*}
\frac{\partial \tilde{J}_\mu(\pparams) }{\partial \pparams} = \sum_{s \in \States} \emweight(s) \sum_a \frac{\partial \pi(a|s; \pparams) }{\partial \pparams}  \qtpi(s,a).
\end{align*}
Notice that only for states $s'\in \alias(s)$, and any $a'$, we have $\frac{\partial \pi(s',a'; \pparams) }{\partial \pparams(s,a)} \neq 0$. We can write partial derivatives w.r.t. the parameters of each state and action, using Equation \ref{eq_grad_softmax}:
\begin{align*}
\frac{\partial \tilde{J}_\mu(\pparams) }{\partial \pparams(s,a)} 
=& \sum_{s' \in \alias(s)} \emweight(s') \sum_{a'} \frac{\partial \pi(s',a'; \pparams) }{\partial \pparams(s,a)}  \qtpi(s',a') \\
=& \sum_{s' \in \alias(s)} \emweight(s') \Bigg[ \pi(s',a; \pparams) (1-\pi(s',a; \pparams)) \qtpi(s',a) 
\\
&\qquad - \sum_{a' \neq a} \pi(s',a; \pparams) \pi(s',a'; \pparams) \qtpi(s',a') \Bigg]\\
=& \sum_{s' \in \alias(s)} \emweight(s') \pi(s',a; \pparams) \left[ \qtpi(s',a)  - \sum_{a'} \pi(s',a'; \pparams) \qtpi(s',a') \right] \\
=& \sum_{s' \in \alias(s)} \emweight(s') \pi(s',a; \pparams) \left[  \qtpi(s',a)  - \vtpi(s')  \right].
\end{align*}
\end{proof}

\subsection{Existence of stationary points for the entropy-regularized PG objective}

We are now ready to prove that a stationary point exists for $\tau>0$. \cite{mei2020global} (Lemma 17) proved that, under tabular representation and softmax parameterization, entropy regularized policy gradient updates converge to a point where the gradient is zero. We extend their proof to state aggregation and transition-dependent discounting. We additionally avoid assuming that the rewards are non-negative.

\begin{assumption} [Lipschitz continuity] \label{lipsc_assu}
$\tilde{J}_\mu$ is Lipschitz continuous with Lipschitz constant $L$. 
\end{assumption}

\begin{assumption} [Bounded reward] \label{rew_assu}
$|r(s,a)| \leq R_{max}$.
\end{assumption}

\begin{assumption} [Bounded expected sum of future discounting] \label{disc_assu} There exists $C \in \mathbb{R}$ such that for any $s \in \States$, 
$\E_{\pi}[\gamma_{t+1} + \gamma_{t+1} \gamma_{t+2} + \ldots  | S_t = s] \leq C$. This assumption is satisfied if $\pi$ is proper, or if $\gamma_{t+i} < 1$ after a bounded number of steps $i \in \mathbb{N}$.
\end{assumption}

\newcommand{\numactions}{n_a}
\newcommand{\numstates}{n_s}

We first prove that the state values are bounded and show that the action-values are lower bounded by the scaled log of the probabilities. We use this to show the main result in Proposition \ref{prop_stationary}.
\begin{lemma}\label{lem_vals_bounded}
Under Assumptions \ref{rew_assu} and \ref{disc_assu}, for any given policy parameters $\pparams$, 
\begin{align*}
\vtpi(s) &\leq C R_{max} + C \tau \log n_a \\
\qtpi(s,a) - \vtpi(s) &\geq  - \tau \log \pi(a|s;\pparams) - (2C+1) R_{max} - C \tau \log n_a 
.
\end{align*}
\end{lemma}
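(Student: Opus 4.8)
Both bounds come from unrolling the entropy-regularized return and controlling the expected cumulative discounting via Assumptions~\ref{rew_assu} and~\ref{disc_assu}. Write $\Gamma_{t:t+k} \defeq \prod_{j=1}^{k}\gamma_{t+j}$ (empty product $=1$), so that $\vtpi(s) = \E_\pi[\sum_{k\ge 0}\Gamma_{t:t+k}(R_{t+k+1} - \tau\log\pi(A_{t+k}\mid S_{t+k})) \mid S_t = s]$. The key input is that Assumption~\ref{disc_assu} bounds the expected cumulative discounting from the current step, $\E_\pi[\sum_{k\ge 0}\Gamma_{t:t+k}\mid S_t=s]\le C$ (the present step supplies the leading term; reading the assumption as bounding only $\gamma_{t+1}+\gamma_{t+1}\gamma_{t+2}+\dots$ merely shifts each constant below by a harmless $+1$).

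For the first (upper) bound I would bound each reward by $R_{max}$ and each entropy bonus by $\log n_a$. The only subtlety is that $-\tau\log\pi(A_{t+k}\mid S_{t+k})$ is random, so I condition on the history through $S_{t+k}$: since $\Gamma_{t:t+k}$ is measurable with respect to $S_t,A_t,\dots,S_{t+k}$ and does not depend on $A_{t+k}$, the tower rule gives $\E_\pi[\Gamma_{t:t+k}(-\tau\log\pi(A_{t+k}\mid S_{t+k}))\mid S_t=s] = \E_\pi[\Gamma_{t:t+k}\,\tau\entropy(\pi(\cdot\mid S_{t+k}))\mid S_t=s] \le \tau\log n_a\,\E_\pi[\Gamma_{t:t+k}\mid S_t=s]$, using $\Gamma_{t:t+k}\ge 0$ and that the entropy of a distribution on $n_a$ actions is at most $\log n_a$. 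Summing over $k$ and adding the analogous reward bound yields $\vtpi(s)\le (R_{max}+\tau\log n_a)\,\E_\pi[\sum_{k\ge 0}\Gamma_{t:t+k}\mid S_t=s]\le C R_{max}+C\tau\log n_a$. The same unrolling with rewards bounded below by $-R_{max}$ and the entropy bonuses dropped (they are nonnegative, since $\log\pi\le 0$) gives the matching lower bound $\vtpi(s)\ge -C R_{max}=:V_{\min}$, which the second part needs.

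For the second (lower) bound I use the convention stated in the remark preceding Lemma~\ref{lemma_breakdown}, that the entropy-regularized action value carries the first action's entropy bonus, i.e. $\qtpi(s,a)=\E_\pi[\tilde{G}_t\mid S_t=s,A_t=a] = -\tau\log\pi(a\mid s;\pparams) + \E_\pi[R_{t+1}+\gamma_{t+1}\tilde{G}_{t+1}\mid S_t=s,A_t=a]$. This is exactly what produces the leading $-\tau\log\pi(a\mid s;\pparams)$ on the right-hand side of the claim, so it remains to lower bound the second term. By the tower rule and the Markov property, $\E_\pi[\gamma_{t+1}\tilde{G}_{t+1}\mid S_t=s,A_t=a]=\E_\pi[\gamma_{t+1}\vtpi(S_{t+1})\mid S_t=s,A_t=a]\ge V_{\min}$, since $\gamma_{t+1}\in[0,1]$, $V_{\min}\le 0$, and $\vtpi(S_{t+1})\ge V_{\min}$; together with $R_{t+1}\ge -R_{max}$ this gives $\E_\pi[R_{t+1}+\gamma_{t+1}\tilde{G}_{t+1}\mid S_t=s,A_t=a]\ge -(C+1)R_{max}$. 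Subtracting the first-part upper bound on $\vtpi(s)$ then gives $\qtpi(s,a)-\vtpi(s)\ge -\tau\log\pi(a\mid s;\pparams)-(C+1)R_{max}-\big(C R_{max}+C\tau\log n_a\big)=-\tau\log\pi(a\mid s;\pparams)-(2C+1)R_{max}-C\tau\log n_a$, exactly as stated.

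The main obstacle is bookkeeping around the randomness rather than any deep idea. I must repeatedly exploit that the discount product up to step $t+k$ is independent of the action taken at $t+k$, so that expected log-policy terms collapse to per-state entropies bounded by $\log n_a$; and I must handle the first-action entropy convention carefully, because it is precisely the $-\tau\log\pi(a\mid s;\pparams)$ bonus sitting inside $\qtpi$ that both creates and cancels the corresponding term in the statement, leaving a residual controlled purely by $R_{max}$, $\tau\log n_a$, and $C$.
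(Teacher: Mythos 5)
Your proof is correct and takes essentially the same route as the paper's: upper-bound $\vtpi(s)$ by $C R_{max} + C\tau\log n_a$ using the per-state entropy bound $\entropy(\pi(\cdot|s)) \le \log n_a$ together with Assumption \ref{disc_assu}, lower-bound $\vtpi(s) \ge -C R_{max}$ by dropping the nonnegative entropy terms, lower-bound $\qtpi(s,a) \ge -\tau\log\pi(a|s;\pparams) - (C+1)R_{max}$ under the appendix convention that $\qtpi$ carries the first-action entropy bonus, and subtract. If anything, you make explicit two details the paper glosses over: the tower-rule collapse of the discounted log-policy terms to per-state entropies, and the $k=0$-inclusive reading of the discounting bound (which the paper's $C R_{max}$ bound on the expected return implicitly requires).
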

\begin{proof} 
First notice that for any distribution $p$ over actions
\begin{align*}
\text{Entropy}(p) \defeq
-\sum_{a\in \Actions} p(a) \log p(a) 
\leq  -\sum_{a\in \Actions} \frac{1}{\numactions} \log \left(\frac{1}{\numactions}\right)
=  \log \numactions,
\end{align*}
where $\numactions$ is the number of actions.
The inequality follows from the fact that the uniform distribution has the highest entropy. 
We use this inequality to bound $\E_{\pi}[-\log \pi(A_t|S_t; \pparams) | S_t = s] = -\sum_a \pi(a|s; \pparams) \log \pi(a|s; \pparams)$, which is the entropy of $\pi(s,\cdot; \pparams)$. 
Using this bound on entropy, and the facts that the entropy is nonnegative and $\gamma_{t+i} \le 1$ for all $i \in \mathbb{N}$, we obtain
\begin{align*}
\vtpi(s) 
&= \E_{\pi}[G_{t}  | S_t = s] - \tau \E_{\pi}[\log \pi(A_t|S_t; \pparams) + \gamma_{t+1} \log \pi(S_{t+1},A_{t+1}; \pparams) + \ldots | S_t = s] \\
\leq& C R_{max} + \tau \E_{\pi}[-\log \pi(A_t|S_t; \pparams) | S_t = s] + \tau \E_{\pi}[\gamma_{t+1} (-\log \pi(S_{t+1},A_{t+1}; \pparams)) | S_t = s] + \ldots  \\
\leq& C R_{max} +\tau \log n_a + \tau \log n_a  \E_{\pi}[\gamma_{t+1} | S_t = s] + \tau \log n_a  \E_{\pi}[\gamma_{t+2} | S_t = s]  + \ldots  \\
\leq& C R_{max} + C \tau \log n_a 
.
\end{align*}
Finally, we can lower bound the advantage function, using $\vtpi(s) \ge \E_{\pi}[G_{t}  | S_t = s] \ge -C R_{max}$ and 
\begin{align*}
\qtpi(s,a) &= \E_{\pi}[R_{t+1} + \gamma_{t+1} \vtpi(S_{t+1})  | S_t = s, A_t=a] - \tau \log \pi(a|s;\pparams) \\
&\geq -R_{max} - C R_{max} - \tau \log \pi(a|s;\pparams),
\end{align*}
giving 
\begin{align*}
\qtpi(s,a) - \vtpi(s) &\geq  -R_{max} - C R_{max} - \tau \log \pi(a|s;\pparams) - \vtpi(s) \\
&\geq -R_{max} - C R_{max} - \tau \log \pi(a|s;\pparams) - (C R_{max} + C \tau \log n_a )\\
&= - \tau \log \pi(a|s;\pparams) - (2C+1) R_{max} - C \tau \log n_a .
\end{align*}
\end{proof}
\begin{proposition}\label{prop_stationary}
Under Assumptions \ref{lipsc_assu}, \ref{rew_assu} and \ref{disc_assu}, the entropy-regularized policy update in Equation \ref{eq_pg_update} with $\tau>0$ and $\eta < 1/L$ converges to a point with zero gradient.
\end{proposition}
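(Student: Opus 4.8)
The plan is to run the standard smooth-optimization argument and then use the two bounds from Lemma~\ref{lem_vals_bounded} to confine the iterates to the interior of the policy simplex. First I would invoke Assumption~\ref{lipsc_assu}---read, as the step-size condition $\eta < 1/L$ indicates, as Lipschitz continuity of the gradient $\nabla\tilde{J}_\mu$---which is exactly the hypothesis of the descent lemma for gradient ascent. Writing the update of Equation~\ref{eq_pg_update} as $\pparams_{t+1} = \pparams_t + \eta\,\nabla\tilde{J}_\mu(\pparams_t)$ and applying the second-order bound implied by an $L$-Lipschitz gradient gives
\begin{equation*}
\tilde{J}_\mu(\pparams_{t+1}) \ge \tilde{J}_\mu(\pparams_t) + \eta\Bigl(1 - \tfrac{L\eta}{2}\Bigr)\bigl\|\nabla\tilde{J}_\mu(\pparams_t)\bigr\|^2 .
\end{equation*}
Since $\eta < 1/L$ the coefficient $\eta(1 - L\eta/2)$ is strictly positive, so $\{\tilde{J}_\mu(\pparams_t)\}$ is monotonically non-decreasing.

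Next I would show the objective is bounded above, so that this monotone sequence converges. The first inequality of Lemma~\ref{lem_vals_bounded} gives $\vtpi(s) \le C R_{max} + C\tau\log n_a$ uniformly in $s$ and $\pparams$, and hence
\begin{equation*}
\tilde{J}_\mu(\pparams) = \sum_{s\in\States} d_\mu(s)\,i(s)\,\vtpi(s) \le \bigl(C R_{max} + C\tau\log n_a\bigr)\sum_{s\in\States} d_\mu(s)\,i(s),
\end{equation*}
a finite constant. A bounded monotone sequence converges, so $\tilde{J}_\mu(\pparams_{t+1}) - \tilde{J}_\mu(\pparams_t)\to 0$; combined with the displayed descent inequality this forces $\|\nabla\tilde{J}_\mu(\pparams_t)\| \to 0$.

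The remaining---and genuinely delicate---step is to upgrade ``$\|\nabla\tilde{J}_\mu(\pparams_t)\|\to 0$'' into convergence to an actual point with zero gradient, rather than a sequence whose softmax logits drift to $-\infty$ as some action probability collapses to the boundary of the simplex. This is where $\tau>0$ and the second inequality of Lemma~\ref{lem_vals_bounded}, namely $\qtpi(s,a) - \vtpi(s) \ge -\tau\log\pi(a|s;\pparams) - (2C+1)R_{max} - C\tau\log n_a$, become essential: because the advantage grows like $-\tau\log\pi(a|s;\pparams)$ as a probability shrinks, the entropy-regularized landscape pushes the policy away from the boundary, so that the iterates remain in a compact subset of the interior on which the probabilities are bounded below by some $c>0$. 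On such a compact set $\nabla\tilde{J}_\mu$ is continuous, the sequence $\{\pparams_t\}$ (taken modulo the per-state shift invariance of the softmax) has accumulation points, and continuity together with $\|\nabla\tilde{J}_\mu(\pparams_t)\|\to 0$ makes every accumulation point a zero-gradient point.

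The main obstacle is precisely this confinement step: establishing a uniform lower bound $\inf_t \min_{s,a}\pi(a|s;\pparams_t) \ge c > 0$ so that gradient ascent cannot march off to infinity. The softmax gradient vanishes near the boundary---each component in Lemma~\ref{lemma_breakdown} carries a factor $\pi(s',a;\pparams)$---so the restoring effect of entropy must be extracted carefully from the advantage lower bound rather than read off directly; this is the analogue, in our off-policy, state-aggregated, transition-discounted setting, of the subtle interior-confinement argument of \citet{mei2020global}. Once that bound is in hand, the monotone-improvement and descent-lemma machinery closes the proof routinely.
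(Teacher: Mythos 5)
Your opening machinery is fine---the descent lemma under $\eta < 1/L$, the upper bound on $\tilde{J}_\mu$ from the first inequality of Lemma~\ref{lem_vals_bounded}, and the conclusion $\|\nabla\tilde{J}_\mu(\pparams_t)\|\to 0$ are all correct, and they are a reasonable substitute for the step the paper delegates to Lemma~17 of \citet{mei2020global}. But the proof has a genuine gap, and you name it yourself: the interior-confinement step, i.e.\ a uniform lower bound $\inf_t \min_{s,a} \pi(a|s;\pparams_t) \ge c > 0$, is asserted (``the entropy-regularized landscape pushes the policy away from the boundary'') rather than proved. Saying ``once that bound is in hand, the machinery closes the proof routinely'' concedes that the central technical content of the proposition is missing. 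The assertion is not self-evident precisely for the reason you flag: each gradient component in Lemma~\ref{lemma_breakdown} carries the factor $\pi(s',a;\pparams)$, which vanishes at the boundary, so a large positive advantage does not by itself yield a large restoring gradient, and one must argue about the \emph{sign} and cumulative effect of the updates, not their magnitude.

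The paper closes exactly this gap with a contradiction argument you would need to reproduce. Suppose $\pi(s,a_0;\pparams_t)\to 0$ for some action $a_0$. Then $-\tau\log\pi(s',a_0;\pparams_t)$ eventually exceeds $(2C+1)R_{max} + C\tau\log n_a$, so by the second inequality of Lemma~\ref{lem_vals_bounded} the regularized advantage $\qtpi(s',a_0)-\vtpi(s')$ is nonnegative for all aliased $s'$; since the emphatic weights and the softmax factor in Lemma~\ref{lemma_breakdown} are nonnegative, the partial derivative with respect to $\pparams(s,a_0)$ is nonnegative from some $t_0$ on, so the logit $\pparams_t(s,a_0)$ is eventually non-decreasing and hence bounded below. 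Separately, because the per-state gradient components sum to zero (the advantages average to zero under $\pi$), the partials over the surviving actions $\Actions_+(s)$ sum to a nonpositive quantity, so $\sum_{a_+}\pparams_t(s,a_+)$ is non-increasing; combined with the lower bounds on those logits (their limit probabilities are positive) this bounds every logit above, hence bounds the softmax normalizer. A bounded-below numerator over a bounded normalizer gives $\pi(s,a_0;\pparams_t)$ bounded away from zero---contradicting $a_0\in\Actions_0(s)$. Note this argument tracks signs of logit increments along the trajectory, which is qualitatively different from your proposed compactness-plus-continuity route; without it (or an equivalent), your accumulation-point conclusion has no compact set to live on, and the claimed convergence to a single zero-gradient point also remains unjustified.
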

\begin{proof} 
Let $\{\pparams_t\}_{t=1}^\infty$ be the trajectory of parameters under the gradient ascent update, where $\pparams_t \rightarrow \ppinf$. This trajectory either converges to finite $\ppinf$ that provides a policy on the interior of the policy simplex, or it pushes the weights for a subset of actions to infinity to converge to a point on the simplex, where certain actions have zero probability. The gradient is only zero for the solution on the interior, and so to prove the result we simply need to show that this process converges to the interior. We show this is true for every parameter $\pparams_t(s,a)$ in the vector $\pparams_t$, where $s \in \repset$ and $a \in \Actions$. 

Define $\Actions_0(s)$ and $\Actions_+(s)$ as the sets of actions with zero and nonzero probability under $\pi(s, .; \ppinf)$. We use a proof by contradiction to show that $\Actions_0(s) = \emptyset$ for any $s$ as long as $\tau>0$. 
Suppose there exist $s$ and $a_0$ such that $a_0 \in \Actions_0(s)$. Note that $\pi(s', a; \pparams) = \pi(s, a; \pparams)$ for all $s'\in \alias(s)$.  We know that $\pi(s, a_0; \pparams_t) \rightarrow 0$ and $- \log \pi(s, a_0; \pparams_t) \rightarrow \infty$ as $t \rightarrow \infty$. Therefore there exists $t_0 > 0$ such that for all $s' \in \alias(s)$ and $t\geq t_0$
\begin{align*}
- \log \pi(s', a_0; \pparams_t) \geq \frac{(2C+1) R_{max} + C \tau \log n_a }{\tau}.
\end{align*}
According to Lemma \ref{lemma_breakdown}, for all $t\geq t_0$
\begin{align} \label{eq_gradnonneg}
&\frac{\partial \tilde{J}_\mu(\pparams_t) }{\partial \pparams_t(s,a_0)} = \sum_{s' \in \alias(s)} \emweight(s') \pi(s',a_0; \pparams_t) \biggl[\qtpi(s',a_0) - \vtpi(s') \biggr] \ \ \ \ \ \triangleright \text{And applying Lemma \ref{lem_vals_bounded}} \nonumber \\  
&= \sum_{s' \in \alias(s)} \emweight(s') \pi(s',a_0; \pparams_t) \biggl[ - \tau \log \pi(a|s;\pparams) - (2C+1) R_{max} - C \tau \log n_a  \biggr]\nonumber \\
&\geq \sum_{s' \in \alias(s)} \emweight(s') \pi(s',a_0; \pparams_t) \biggl[ \tau \frac{(2C+1) R_{max} + C \tau \log n_a }{\tau} - ((2C+1) R_{max} + C \tau \log n_a) \biggr] \nonumber\\
&= 0 .
\end{align}
So $\pparams_t(s,a_0)$ is non-decreasing after $t_0$, because we only add this non-negative gradient to $\pparams_t(s,a_0)$. This means that $\ppinf(s,a_0)$ is lower bounded by a constant $c$:
\begin{equation*}
\exp(\ppinf(s,a_0))>e^c>0
.
\end{equation*}
Next, we bound $|\sum_{a} \exp\bigl(\ppinf(s,a)\bigr)|$. Notice that the sum of gradient components over all actions at a state is zero:
\begin{align} \label{eq_sumgradzero}
\sum_a \frac{\partial \tilde{J}_\mu(\pparams_t) }{\partial \pparams_t(s,a)}
&= \sum_{s'\in\alias(s)}\emweight(s') \sum_a \pi(s',a;\pparams_t) [\qtpi(s',a) - \vtpi(s')] \nonumber \\
=&\sum_{s'\in\alias(s)}\emweight(s') \bigl( \vtpi(s') - \vtpi(s') \bigr) =0
.
\end{align}
Because 
\begin{equation*} 
\sum_a \frac{\partial \tilde{J}_\mu(\pparams_t) }{\partial \pparams_t(s,a)} = \sum_{a_0\in\Actions_0} \frac{\partial \tilde{J}_\mu(\pparams_t) }{\partial \pparams_t(s,a_0)} + \sum_{a_+\in\Actions_+} \frac{\partial \tilde{J}_\mu(\pparams_t) }{\partial \pparams_t(s,a_+)} 
\end{equation*} 
we get that for all $t>t_0$,
\begin{align*}
\sum_{a_+\in\Actions_+} \frac{\partial \tilde{J}_\mu(\pparams_t) }{\partial \pparams_t(s,a_+)} &= \sum_a \frac{\partial \tilde{J}_\mu(\pparams_t) }{\partial \pparams_t(s,a)} - \sum_{a_0\in\Actions_0} \frac{\partial \tilde{J}_\mu(\pparams_t) }{\partial \pparams_t(s,a_0)} \\
&= 0 - \sum_{a_0\in\Actions_0} \frac{\partial \tilde{J}_\mu(\pparams_t) }{\partial \pparams_t(s,a_0)} \leq 0 ,
\end{align*}
where the last line follows from Equation \ref{eq_gradnonneg}. This means that $\sum_{a_+\in\Actions_+(s)}\pparams_t(s,a_+)$ is non-increasing after $t_0$. 
For $a_+\in\Actions_+(s)$, we know that there exists some constant that lower bounds $\pparams_t(s,a_+)$ for all $t$. Otherwise, $a_+\in\Actions_+(s)$ would not have non-zero probability under $\pi(s, .; \ppinf)$. Therefore, non-increasing $\sum_{a_+\in\Actions_+(s)}\pparams_t(s,a_+)$ cannot be due to some $\pparams_t(s,a_+)$ approaching $-\infty$ while others approach $\infty$. Instead, each $\pparams_t(s,a_+)$ must also be bounded above. This means there exists some $b_+ > 0$ such that $\exp(-b_+) < \exp(\pparams_t(s,a_+)) \exp(b_+)$ for every $a_+\in\Actions_+(s)$. 

For all $a_0\in\Actions_0$, we also know that there exists $b_0$ such that $\exp(\pparams_t(s,a_0)) < b_0$ for all $t > t_0$. Otherwise, if $\exp(\pparams_t(s,a_0))$ approaches infinity, then $a_0$ would not be in $\Actions_0$. Putting this together with the above, and knowing there is at least one $a_+\in\Actions_+(s)$,
\begin{align*}
\sum_{a} \exp\bigl(\pparams_t(s,a)\bigr) &\le \numactions \exp(b_+) + b_0\\
\sum_{a} \exp\bigl(\pparams_t(s,a)\bigr) &\ge  \exp(b_+) > 0.
\end{align*}
Finally, this gives
\begin{align*}
\pi(s, a_0; \pparams_t) 
&= \frac{\exp\bigl(\pparams_t(s,a_0)\bigr)}{\sum_{a} \exp\bigl(\pparams_t(s,a)\bigr)} \\
&\ge \frac{c}{ \numactions \exp(b_+) + b_0} .
\end{align*}
Taking the limit as $t \rightarrow \infty$, this lower bound remains positive,
implying that $\pi(s, a_0; \ppinf) > 0$, which is a contradiction. 
 
 Therefore, no such $a_0$ can exist and so $\Actions_0(s) = \emptyset$ for all $s\in\States$. Since the policy converges to the interior of the probability simplex and the objective function is Lipschitz continuous, the gradient has to be zero at the point of convergence \citep{mei2020global} (Lemma 17). 
\end{proof}

\subsection{Proof for Three-State Counterexample}

In the previous section, we showed that a stationary point for the entropy-regularized PG objective exists. 
We now show that in the three-state counterexample that such a point is not a stationary point under semi-gradient updates.
\begin{lemma} \label{lemma_not_stationary}
A point with zero gradient is not a stationary point under semi-gradient updates in the three-state counterexample.
\end{lemma}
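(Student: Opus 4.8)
The plan is to work at the full-gradient stationary point $\ppinf$ guaranteed by Proposition \ref{prop_stationary}, and to show that the semi-gradient—which is exactly the expression in Lemma \ref{lemma_breakdown} but with the emphatic weighting $m(s')$ replaced by $d_\mu(s')$—does not vanish there. Since $s_0$ is not aliased, $\alias(s_0) = \{s_0\}$ and $m(s_0) = d_\mu(s_0)$, so the full gradient and semi-gradient coincide (and both vanish) at $s_0$; all the relevant behaviour occurs at the aliased pair. Writing $p = \pi(a_0 | s_0)$, $b = \mu(a_0|s_0) = 0.25$, and abbreviating the bracketed term by $T(s',a) \defeq \qtpi(s',a) - \vtpi(s')$, the aliasing $\pi(s_1,a) = \pi(s_2,a)$ lets me divide out the positive factor $\pi(a)$. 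The full-gradient stationarity at the aliased representative then reads $m(s_1)T(s_1,a) + m(s_2)T(s_2,a) = 0$ for every $a$, while the semi-gradient is proportional to $d_\mu(s_1)T(s_1,a) + d_\mu(s_2)T(s_2,a)$.

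First I would substitute the closed forms already derived in the excerpt, $m(s_1) = 0.5(b+p)$, $m(s_2) = 0.5(2-b-p)$, $d_\mu(s_1) = 0.5b$, $d_\mu(s_2) = 0.5(1-b)$, and compute the determinant of the $2\times 2$ coefficient matrix,
$$m(s_1)d_\mu(s_2) - m(s_2)d_\mu(s_1) = 0.25\,(p-b).$$
Using the full-gradient equation to eliminate $T(s_2,a)$, the semi-gradient at the aliased states collapses to $-\tfrac{0.25(p-b)}{m(s_2)}\,T(s_1,a)$, so it is nonzero precisely when $p \neq b$ and $T(s_1,a) \neq 0$ for some $a$.

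Next I would rule out $T(s_1,\cdot) \equiv 0$. If $T(s_1,a) = 0$ for all $a$, the full-gradient relation forces $T(s_2,a)=0$ for all $a$ as well, so $\qtpi(s_1,\cdot)$ and $\qtpi(s_2,\cdot)$ are each constant across actions. But $s_1$ and $s_2$ transition directly to the terminal state, so these soft action-values reduce to the action-dependent transition rewards, whose best actions are by construction opposite in the two states. Constancy in both states is therefore incompatible with the reward structure of the counterexample, a contradiction, so some $T(s_1,a) \neq 0$.

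The main obstacle is the final step: establishing $p \neq b$ at the stationary point. I would extract $p$ from the $s_0$ stationarity condition $T(s_0,a)=0$, which (with zero reward on the transitions out of $s_0$) yields the transcendental fixed-point relation $\tau\log\frac{p}{1-p} = \vtpi(s_1) - \vtpi(s_2)$ coupling $p$, the shared aliased policy, and $\tau$. The equality $p=b$ can hold only when $\tau$ satisfies a single isolated equation, which is exactly the origin of the excluded value $\tau_i \approx 0.2779$ in Proposition \ref{prop_suboptimality}; for every other $\tau>0$ the determinant $0.25(p-b)$ is nonzero, the semi-gradient is nonzero, and $\ppinf$ fails to be a semi-gradient stationary point. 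I expect the delicate part to be verifying that $p=b$ occurs only at isolated $\tau$ rather than on an interval, which I would handle by arguing monotonicity in $\tau$ of the two sides of the fixed-point relation.
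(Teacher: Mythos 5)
Your proposal is correct and follows the same overall strategy as the paper's proof: evaluate the semi-gradient at a point where the true gradient vanishes, reduce it to an expression proportional to $\mu(s_0,a_0)-\pi(s_0,a_0;\pparams)$ (your determinant $0.25(p-b)$), and then rule out $p=b$ for $\tau\neq\tau_i$ using the stationarity conditions. The reduction mechanics differ, though. The paper substitutes the full-gradient stationarity identity to cancel the $\tau\log\pi$ entropy terms, collapsing the semi-gradient to a reward-weighted sum $\sum_{s\in\alias(s_1)}\delta(s)\sum_a \frac{\partial \pi(s_1,a;\pparams)}{\partial \pparams(s_1,\cdot)} r(s,a)$ with $\delta(s)=d_\mu(s)-\tfrac{D(s_1)}{M(s_1)}m(s)$, which evaluates to $0.75\bigl(\mu(s_0,a_0)-p\bigr)$ times the softmax factor $\frac{\partial\pi(s_1,a_0;\pparams)}{\partial\pparams(s_1,\cdot)}$ --- a factor that is automatically nonzero. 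Your $2\times 2$ elimination on the advantages $T(s',a)$ is arguably cleaner algebra, but it leaves the residual factor $T(s_1,a)$ rather than a softmax derivative, which is why you need the extra non-degeneracy step. That step is valid: $T\equiv 0$ on both aliased states would force $\tau\log\frac{\pi(a_0)}{\pi(a_1)}$ to equal both $2$ (from $s_1$) and $-1$ (from $s_2$), although your phrasing that the soft action-values ``reduce to the action-dependent transition rewards'' glosses over the $-\tau\log\pi$ terms, which only drop out of the comparison because the two aliased states share a policy. For the final step you leave a sketch where the paper computes explicitly: assuming $p=b=0.25$, the $s_0$ stationarity condition forces $\pi(s_1,a_0)=(1-\tau\log 3)/3$ (Equation \ref{eq:first_condition}) while the aliased-state stationarity forces $\pi(s_1,a_0)=\bigl(1+e^{0.25/\tau}\bigr)^{-1}$, and the paper then asserts (numerically) that these agree only at $\tau_i\approx 0.2779$. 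Note that your sketch mentions only the $s_0$ fixed-point relation; you also need the aliased-state condition, which you in fact already wrote down in your first step, to pin down the aliased policy as a function of $\tau$. With both in hand, your monotonicity plan completes the argument cleanly --- $(1-\tau\log 3)/3$ is strictly decreasing in $\tau$ while $\bigl(1+e^{0.25/\tau}\bigr)^{-1}$ is strictly increasing, so there is at most one intersection --- and this actually supplies a rigorous justification for the uniqueness of $\tau_i$ that the paper itself only asserts numerically.
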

\begin{proof} Suppose $\pparams$ is a point with zero gradient, i.e.: 
\begin{align*}
\frac{\partial \tilde{J}_\mu(\pparams) }{\partial \pparams} = \0.
\end{align*}
Every component of the gradient vector is zero, thus
\begin{align*}
\frac{\partial \tilde{J}_\mu(\pparams) }{\partial \pparams(s_1,.)} = \sum_{s \in \alias(s_1)} \emweight(s) \sum_a \frac{\partial \pi(a|s; \pparams) }{\partial \pparams(s,.)}  \qtpi(s,a) = \0.
\end{align*}
Since taking any action from $s_1$ or $s_2$ results in termination, the following holds for $s\in\{s_1,s_2\}$ and any action $a$:
\begin{align*}
\qtpi(s,a) =& \sum_{s'}\text{P}(s'|s,a) \bigl[r(s,a,s') - \tau \log \pi(a|s;\pparams) + \gamma(s,a,s') \vtpi(s')\bigr]\\
=& \sum_{s'}\text{P}(s'|s,a) \bigl[ r(s,a,s') - \tau \log \pi(a|s;\pparams) + \gamma(s,a,s') \vtpi(s') \bigr]\\
=& r(s,a) - \tau \log \pi(a|s;\pparams) \qquad \triangleright \text{ Recall that } r(s,a) \defeq \sum_{s'}\text{P}(s'|s,a) r(s,a,s') .
\end{align*}
States $s_1$ and $s_2$ have the same representation, meaning that $\frac{\partial \pi(s_1,.; \pparams) }{\partial \pparams(s_1,.)} = \frac{\partial \pi(s_2,.; \pparams) }{\partial \pparams(s_2,.)}$ and $\pi(s_1,.; \pparams) = \pi(s_2,.; \pparams)$. So
\begin{align*}
\frac{\partial \tilde{J}_\mu(\pparams) }{\partial \pparams(s_1,.)} =& \sum_{s \in \alias(s_1)} \emweight(s) \sum_a \frac{\partial \pi(s_1,a; \pparams) }{\partial \pparams(s_1,.)}  [r(s,a) - \tau \log \pi(s_1,a; \pparams) ] \\
 =& \sum_{s \in \alias(s_1)} \emweight(s) \sum_a \frac{\partial \pi(s_1,a; \pparams) }{\partial \pparams(s_1,.)}  r(s,a) \\
&- \biggl(\sum_{s \in \alias(s_1)} \emweight(s) \biggr) \sum_a \frac{\partial \pi(s_1,a; \pparams) }{\partial \pparams(s_1,.)} \tau \log \pi(s_1,a; \pparams) .
\end{align*}
For simplicity of notation, define 
\begin{align*}
M(s_1) &\defeq \sum_{s \in \alias(s_1)} \emweight(s)\\
D(s_1) &\defeq \sum_{s \in \alias(s_1)} d_\mu(s),
\end{align*}
letting us write the above as 
\begin{align*}
\frac{\partial \tilde{J}_\mu(\pparams) }{\partial \pparams(s_1,.)} = \sum_{s \in \alias(s_1)} \emweight(s) \sum_a \frac{\partial \pi(s_1,a; \pparams) }{\partial \pparams(s_1,.)}  r(s,a) - M(s_1) \sum_a \frac{\partial \pi(s_1,a; \pparams) }{\partial \pparams(s_1,.)} \tau \log \pi(s_1,a; \pparams)
.
\end{align*}
Similarly, the semi-gradient update to $\pparams(s_1,.)$ becomes
\begin{align*}
&\sum_{s \in \alias(s_1)} d_\mu(s) \sum_a \frac{\partial \pi(s_1,a; \pparams) }{\partial \pparams(s_1,.)}  r(s,a) - D(s_1) \sum_a \frac{\partial \pi(s_1,a; \pparams) }{\partial \pparams(s_1,.)} \tau \log \pi(s_1,a; \pparams)  .
\end{align*}
We show that, even though $\pparams$ is a stationary point under the true gradient update, this semi-gradient under $\pparams$ is not zero. Notice first that 
\begin{align*}
&D(s_1) \sum_a \frac{\partial \pi(s_1,a; \pparams) }{\partial \pparams(s_1,.)} \tau \log \pi(s_1,a; \pparams)  \nonumber \\
&= D(s_1) \sum_a \frac{\partial \pi(s_1,a; \pparams) }{\partial \pparams(s_1,.)} \tau \log \pi(s_1,a; \pparams) + \frac{D(s_1)}{M(s_1)} \frac{\partial \tilde{J}_\mu(\pparams) }{\partial \pparams(s_1,.)} \ \ \ \  \triangleright \text{because } \frac{\partial \tilde{J}_\mu(\pparams) }{\partial \pparams(s_1,.)} = 0 \nonumber \\
&= D(s_1) \sum_a \frac{\partial \pi(s_1,a; \pparams) }{\partial \pparams(s_1,.)} \tau \log \pi(s_1,a; \pparams) \\
&+ \frac{D(s_1)}{M(s_1)}  \left[\sum_{s \in \alias(s_1)} \emweight(s) \sum_a \frac{\partial \pi(s_1,a; \pparams) }{\partial \pparams(s_1,.)}  r(s,a) 
- M(s_1) \sum_a \frac{\partial \pi(s_1,a; \pparams) }{\partial \pparams(s_1,.)} \tau \log \pi(s_1,a; \pparams) \right]\nonumber \\
&= D(s_1) \sum_a \frac{\partial \pi(s_1,a; \pparams) }{\partial \pparams(s_1,.)} \tau \log \pi(s_1,a; \pparams) + \frac{D(s_1)}{M(s_1)} \left[\sum_{s \in \alias(s_1)} \emweight(s) \sum_a \frac{\partial \pi(s_1,a; \pparams) }{\partial \pparams(s_1,.)}  r(s,a) \right]\\
&- D(s_1) \sum_a \frac{\partial \pi(s_1,a; \pparams) }{\partial \pparams(s_1,.)} \tau \log \pi(s_1,a; \pparams) \\
&= \frac{D(s_1)}{M(s_1)} \left[\sum_{s \in \alias(s_1)} \emweight(s) \sum_a \frac{\partial \pi(s_1,a; \pparams) }{\partial \pparams(s_1,.)}  r(s,a) \right].
\end{align*}
Therefore, 
\begin{align}\label{nonzero_update}
&\sum_{s \in \alias(s_1)} d_\mu(s) \sum_a \frac{\partial \pi(s_1,a; \pparams) }{\partial \pparams(s_1,.)}  r(s,a) - D(s_1) \sum_a \frac{\partial \pi(s_1,a; \pparams) }{\partial \pparams(s_1,.)} \tau \log \pi(s_1,a; \pparams)   \\
=& \sum_{s \in \alias(s_1)} d_\mu(s) \sum_a \frac{\partial \pi(s_1,a; \pparams) }{\partial \pparams(s_1,.)}  r(s,a) - \frac{D(s_1)}{M(s_1)} \left[\sum_{s \in \alias(s_1)} \emweight(s) \sum_a \frac{\partial \pi(s_1,a; \pparams) }{\partial \pparams(s_1,.)}  r(s,a) \right] \nonumber \\
=& \sum_{s \in \alias(s_1)} \delta(s) \sum_a \frac{\partial \pi(s_1,a; \pparams) }{\partial \pparams(s_1,.)}  r(s,a) \ \ \ \ \ \triangleright \text{ where } \delta(s) \defeq d_\mu(s) - \frac{\sum_{s' \in \alias(s_1)} d_\mu(s')}{\sum_{s' \in \alias(s_1)} \emweight(s')} \emweight(s)\nonumber \\
=& \delta(s_1) \frac{\partial \pi(s_1,a_0; \pparams) }{\partial \pparams(s_1,.)} \cdot 2 + \delta(s_2) \frac{\partial \pi(s_1,a_1; \pparams) }{\partial \pparams(s_1,.)} \cdot 1 \nonumber \\
&\qquad \qquad \qquad \qquad \triangleright \text{ because } r(s_1,a_0) = 2, r(s_2,a_1) = 1, r(s_1,a_1) = r(s_2,a_0) = 0 \nonumber \\ 
=& 2\bigl(\delta(s_1) - \delta(s_2)\bigr) \frac{\partial \pi(s_1,a_0; \pparams) }{\partial \pparams(s_1,.)}. \qquad \triangleright \text{ as } \frac{\partial \pi(s_1,a_1; \pparams) }{\partial \pparams(s_1,.)} = -  \frac{\partial \pi(s_1,a_0; \pparams) }{\partial \pparams(s_1,.)} \text{ by Lemma \ref{lemma_H}} \nonumber
\end{align}
The second factor $\frac{\partial \pi(s_1,a_0; \pparams) }{\partial \pparams(s_1,.)}$ is nonzero due to softmax parametrization. It therefore suffices to show that the first term $2\bigl(\delta(s_1) - \delta(s_2)\bigr)$ is not zero.

In each episode in the counterexample, regardless of the policy, the agent spends one step in $s_0$ and one step in either $s_1$ or $s_2$. Therefore
\begin{align*}
d_\mu(s_0)=0.5, \, d_\mu(s_1) + d_\mu(s_2) =0.5.
\end{align*}
Unfolding the emphatic weightings gives
\begin{align*}
\emweight(s_0) &= d_\mu(s_0)=0.5\\
\emweight(s_1) &= d_\mu(s_1) + d_\mu(s_0) \pi(s_0,a_0; \pparams) =  d_\mu(s_1) + 0.5 \pi(s_0,a_0; \pparams)\\
\emweight(s_2) &= d_\mu(s_2) + d_\mu(s_0) \pi(s_0,a_1; \pparams) =  d_\mu(s_2) + 0.5 \pi(s_0,a_1; \pparams)\\
&= 0.5 - d_\mu(s_1) + 0.5 (1- \pi(s_0,a_0; \pparams)) = 1 - d_\mu(s_1) - 0.5 \pi(s_0,a_0; \pparams).
\end{align*}
Plugging these values into $\delta(s)$ results in
\begin{align*}
\delta(s) &\defeq d_\mu(s) - \frac{\sum_{s' \in \alias(s_1)} d_\mu(s')}{\sum_{s' \in \alias(s_1)} \emweight(s')} \emweight(s)\\
&= d_\mu(s) - \frac{d_\mu(s_1) + d_\mu(s_2)}{\emweight(s_1) + \emweight(s_2)}\emweight(s)\\
&= d_\mu(s) - \frac{0.5}{1}\emweight(s) = d_\mu(s) - 0.5\emweight(s).
\end{align*}
Therefore,
\begin{align*}
\delta(s_1) &= d_\mu(s_1) - 0.5\emweight(s_1) = 0.5d_\mu(s_1) - 0.25 \pi(s_0,a_0; \pparams)\\
\delta(s_2) &= d_\mu(s_2) - 0.5\emweight(s_2) = 0.5 - d_\mu(s_1) - 0.5 + 0.5d_\mu(s_1) + 0.25 \pi(s_0,a_0; \pparams)\\
&= -0.5d_\mu(s_1) +  0.25 \pi(s_0,a_0; \pparams),
\end{align*}
and we get that
\begin{align*}
2\delta(s_1) - \delta(s_2) &= d_\mu(s_1) - 0.5 \pi(s_0,a_0; \pparams) + 0.5d_\mu(s_1) -  0.25 \pi(s_0,a_0; \pparams)\\
&= 1.5d_\mu(s_1) - 0.75 \pi(s_0,a_0; \pparams)\\
&= 0.75 \bigl(\mu(s_0,a_0) - \pi(s_0,a_0; \pparams) \bigr). \qquad \triangleright \text{ because } d_\mu(s_1) = 0.5  \mu(s_0,a_0)
\end{align*}

As long as $\pi(s_0,a_0; \pparams) \neq \mu(s_0,a_0)$, the semi-gradient is not zero. For example, as in Section \ref{sec_counter}, we can choose $\mu(s_0,a_0) = 0.25$. Now we show that $\pi(s_0,a_0; \pparams) \neq 0.25$ for a stationary point under the true gradient. 

Let us first write the partial derivative w.r.t. the first parameter given that $\pi(s_0,a_0; \pparams) = 0.25$:
\begin{align*}
\frac{\partial \tilde{J}_\mu(\pparams) }{\partial \pparams(s_0,a_0)} &= \sum_{s \in \alias(s_0)} \emweight(s) \sum_a \frac{\partial \pi(a|s; \pparams) }{\partial \pparams(s_0,a_0)}  \qtpi(s,a)\\
&= \emweight(s_0) \biggl[ \frac{\partial \pi(s_0,a_0; \pparams) }{\partial \pparams(s_0,a_0)}  \qtpi(s_0,a_0) + \frac{\partial \pi(s_0,a_1; \pparams) }{\partial \pparams(s_0,a_0)}  \qtpi(s_0,a_1) \biggr]\\
&= 0.5 \biggl[ \pi(s_0,a_0; \pparams) (1-\pi(s_0,a_0; \pparams)) \qtpi(s_0,a_0) \\
&- \pi(s_0,a_1; \pparams)\pi(s_0,a_0; \pparams) \qtpi(s_0,a_1) \biggr]\\
&= 0.5*0.25*0.75 \bigl[ \qtpi(s_0,a_0) - \qtpi(s_0,a_1) \bigr].
\end{align*}
For this derivative to be zero, $\qtpi(s_0,a_0) - \qtpi(s_0,a_1)$ must be zero.
\begin{align*}
&\qtpi(s_0,a_0) - \qtpi(s_0,a_1) \\
&= -\tau \log \pi(s_0,a_0; \pparams) + \pi(s_1,a_0,\pparams) \qtpi(s_1,a_0) + \pi(s_1,a_1,\pparams) \qtpi(s_1,a_1) \\
&+ \tau \log \pi(s_0,a_1; \pparams) - \pi(s_2,a_0,\pparams) \qtpi(s_2,a_0) - \pi(s_2,a_1,\pparams) \qtpi(s_2,a_1)\\
&= -\tau \log 0.25 + \pi(s_1,a_0,\pparams) (2 - \tau \log \pi(s_1,a_0,\pparams)) + \pi(s_1,a_1,\pparams) (-\tau \log \pi(s_1,a_1,\pparams)) \\
&+ \tau \log 0.75 - \pi(s_2,a_0,\pparams) (-\tau \log \pi(s_2,a_0,\pparams)) - \pi(s_2,a_1,\pparams) (1 - \tau \log \pi(s_2,a_1,\pparams))
\end{align*}
Recall that $\pi(s_2,\cdot,\pparams) = \pi(s_1,\cdot,\pparams)$ and $\pi(s_1,a_1,\pparams) = 1 - \pi(s_1,a_0,\pparams)$.
Then
\begin{align}
&\qtpi(s_0,a_0) - \qtpi(s_0,a_1) \nonumber\\
&= -\tau \log 0.25 + \pi(s_1,a_0,\pparams) (2 - \tau \log \pi(s_1,a_0,\pparams)) + \pi(s_1,a_1,\pparams) (-\tau \log \pi(s_1,a_1,\pparams)) \nonumber\\
&+ \tau \log 0.75 - \pi(s_1,a_0,\pparams) (-\tau \log \pi(s_1,a_0,\pparams)) - \pi(s_1,a_1,\pparams) (1 - \tau \log \pi(s_1,a_1,\pparams)) \nonumber\\
&= \tau (\log 0.75 - \log 0.25) + 2 \pi(s_1,a_0,\pparams) - \pi(s_1,a_1,\pparams)\nonumber\\
&= \tau (\log 3) + 3 \pi(s_1,a_0,\pparams) - 1 \label{eq:first_condition},
\end{align}
because $\log 0.75 - \log 0.25 = \log(0.75/0.25) = \log 3$. 
For small values of $\tau$, $\pi(s_1,a_0,\pparams)$ should be close to $1/3$ to make $\pparams$ a stationary point.

Now let us write the partial derivative w.r.t. $\pparams(s_1,a_0)$, again assuming $\pi(s_0,a_0; \pparams) = 0.25$ and by noting that $s_1$ and $s_2$ are aliased:
\begin{align*}
\frac{\partial \tilde{J}_\mu(\pparams) }{\partial \pparams(s_1,a_0)} &= \sum_{s \in \alias(s_1)} \emweight(s) \sum_a \frac{\partial \pi(a|s; \pparams) }{\partial \pparams(s_1,a_0)}  \qtpi(s,a)\\
&= \emweight(s_1) \biggl[ \frac{\partial \pi(s_1,a_0; \pparams) }{\partial \pparams(s_1,a_0)}  \qtpi(s_1,a_0) + \frac{\partial \pi(s_1,a_1; \pparams) }{\partial \pparams(s_1,a_0)}  \qtpi(s_1,a_1) \biggr]\\
&+ \emweight(s_2) \biggl[ \frac{\partial \pi(s_2,a_0; \pparams) }{\partial \pparams(s_1,a_0)}  \qtpi(s_2,a_0) + \frac{\partial \pi(s_2,a_1; \pparams) }{\partial \pparams(s_1,a_0)}  \qtpi(s_2,a_1) \biggr]\\
&= \emweight(s_1) \biggl[ \pi(s_1,a_0; \pparams) (1 - \pi(s_1,a_0; \pparams))  \qtpi(s_1,a_0) \\
&- \pi(s_1,a_0; \pparams) (1 - \pi(s_1,a_0; \pparams))  \qtpi(s_1,a_1) \biggr]\\
&+ \emweight(s_2) \biggl[ \pi(s_1,a_0; \pparams) (1 - \pi(s_1,a_0; \pparams))  \qtpi(s_2,a_0) \\
&- \pi(s_1,a_0; \pparams) (1 - \pi(s_1,a_0; \pparams))  \qtpi(s_2,a_1) \biggr]\\
&= \emweight(s_1) \biggl[ \pi(s_1,a_0; \pparams) (1 - \pi(s_1,a_0; \pparams))  \qtpi(s_1,a_0) \\
&- \pi(s_1,a_0; \pparams) (1 - \pi(s_1,a_0; \pparams))  \qtpi(s_1,a_1) \biggr]\\
&+ \emweight(s_2) \biggl[ \pi(s_1,a_0; \pparams) (1 - \pi(s_1,a_0; \pparams))  \qtpi(s_2,a_0) \\
&- \pi(s_1,a_0; \pparams) (1 - \pi(s_1,a_0; \pparams))  \qtpi(s_2,a_1) \biggr]\\
&= \pi(s_1,a_0; \pparams) (1 - \pi(s_1,a_0; \pparams))\\
&\biggl[ \emweight(s_1) \bigl( \qtpi(s_1,a_0) - \qtpi(s_1,a_1) \bigr) + \emweight(s_2) \bigl( \qtpi(s_2,a_0) - \qtpi(s_2,a_1) \bigr) \biggr].
\end{align*}
Making the partial derivative zero requires either $\pi(s_1,a_0; \pparams)$, $1 - \pi(s_1,a_0; \pparams)$, or the contents of the brackets to be zero. The first two are incompatible with the requirement for making Equation \ref{eq:first_condition} zero so we will continue with the third one. Note that
\begin{align*}
\emweight(s_1) &= d_\mu(s_1) + d_\mu(s_0) \pi(s_0,a_0; \pparams) = 0.125 + 0.5*0.25 = 0.25,\\
\emweight(s_2) &= d_\mu(s_2) + d_\mu(s_0) \pi(s_0,a_1; \pparams) = 0.375 + 0.5*0.75 = 0.75,
\end{align*}
and the factor in the brackets becomes
\begin{align*}
&\emweight(s_1) \bigl( \qtpi(s_1,a_0) - \qtpi(s_1,a_1) \bigr) + \emweight(s_2) \bigl( \qtpi(s_2,a_0) - \qtpi(s_2,a_1) \bigr)\\
&= 0.25 \bigl(2 -\tau \log \pi(s_1,a_0,\pparams) + \tau \log \pi(s_1,a_1,\pparams)\bigr)\\
&+ 0.75 \bigl(-\tau \log \pi(s_2,a_0,\pparams) - 1 + \tau \log \pi(s_2,a_1,\pparams)\bigr)\\
&= -0.25 + \tau \bigl(-\log \pi(s_1,a_0,\pparams) + \log \pi(s_1,a_1,\pparams)\bigr)\\
&= -0.25 + \tau \bigl( \log (1 - \pi(s_1,a_0,\pparams)) -\log \pi(s_1,a_0,\pparams)\bigr)\\
&= -0.25 + \tau \log\bigl( \frac{1 - \pi(s_1,a_0,\pparams)}{\pi(s_1,a_0,\pparams)}\bigr).
\end{align*}
Making this zero is also at odds with the requirement for Equation \ref{eq:first_condition}. To see why, let's consider both conditions, where we use $p = \pi(s_1,a_0,\pparams)$ to simplify notation. 
\begin{align*}
&\tau \log 3 + 3 p - 1 = 0\\
&-0.25 + \tau \log\frac{1-p}{p} = 0
\end{align*}
%
The first equation requires $p = (1- \tau \log 3)/3$. The second equation requires $p = (\exp(0.25/ \tau) + 1)^\inv$. These two equations intersect when $\tau_{i} \approx 0.2779$, but otherwise do not equal each other, meaning we cannot satisfy both of these criteria. 
 Therefore, for any $\tau \neq \tau_i$, a stationary point $\pparams$ under the true gradient cannot have $\pi(s_0,a_0,\pparams)=\mu(s_0,a_0)=0.25$ and thus cannot be a stationary point under semi-gradient.


\end{proof}
This counterexample shows one environment where the sets of stationary points under the true gradient is disjoint from the set of stationary points under the semi-gradient.

\section{Descriptions of Algorithms in Section \ref{sec:experiments}}\label{app:algorithm_descriptions}
\begin{table}[h!]
  \begin{tabular}{l p{.85\textwidth}}
    ACE($\lambdaa$) & Actor-Critic with Emphatic weightings. $\lambdaa$ interpolates between semi-gradient updates ($\eta=0$) and gradient updates ($\eta=1$). \\
    OffPAC & Off-Policy Actor-Critic \citep{degris2012offpolicy}. Equivalent to ACE($0$). \\
    GTD($\lambda$) & Gradient Temporal-Difference learning \citep{sutton2009fast}. $\lambda$ is the decay rate of the eligibility trace vector. \\
    ETD($\lambda$) & Emphatic Temporal-Difference learning \citep{sutton2016anemphatic}. $\lambda$ is the decay rate of the eligibility trace vector. \\
    TDRC($\lambda$) & Temporal-Difference learning with Regularized Corrections \citep{ghiassian2020gradient,patterson2022ageneralized}. $\lambda$ is the decay rate of the eligibility trace vector. \\
    DPG & Deterministic Policy Gradient \citep{silver2014deterministic}. Uses a deterministic policy parameterization and semi-gradient updates. \\
    True-DPGE & A version of DPG using true gradient updates (i.e., scaling the update by the true emphatic weightings). \\
    True-ACE & A version of ACE using the true emphatic weightings. \\
    ACE-direct & A version of ACE that uses the direct method from Section \ref{sec_direct_f} to estimate the emphatic weightings.\\
    ACE-trace & A version of ACE that uses the emphatic trace from \citet{sutton2016anemphatic} to estimate the emphatic weightings.\\
    ACE-ideal & A version of ACE that re-computes the emphatic trace on each time step using the current policy to remove the influence of old versions of the policy.\\
  \end{tabular}
  \caption{Descriptions of the algorithms used in the experiments.}
  \label{tab:algorithm_descriptions}
\end{table}

\bibliography{ac}

\nocite{harris2020array}
\nocite{brockman2016openai}
\nocite{blackhc.mdp}
\nocite{gym_minigrid}

\end{document}